\definecolor{LightCyan}{rgb}{0.4, 0.9, 0.8}
\DeclarePairedDelimiter{\floor}{\lfloor}{\rfloor}
\newtheorem{theorem}{Theorem}
\newtheorem{proposition}{Proposition}
\crefname{section}{Sec.}{Secs.}
\Crefname{section}{Sec.}{Secs.}
\Crefname{figure}{Fig.}{Figs.}
\DeclareAcronym{imu}{
  short=IMU,
  long=Inertial Measurement Unit,
}
\DeclareAcronym{vio}{
  short=VIO,
  long=Visual Inertial Odometry,
}
\DeclareAcronym{vfm}{
  short=VFM,
  long=Visual Foundation Model,
}
\DeclareAcronym{slam}{
  short=SLAM,
  long=Simultaneous Localization and Mapping,
}
\DeclareAcronym{msckf}{
  short=MSCKF,
  long=Multi-state Constraint Kalman Filter,
}
\DeclareAcronym{cpu}{
  short=CPU,
  long=Central Processing Unit,
}
\DeclareAcronym{mcts}{
  short=MCTS,
  long=Monte Carlo Tree Search,
}
\DeclareAcronym{scp}{
  short=SCP,
  long=Sequential Convex Programming,
}
\DeclareAcronym{dof}{
  short=DOF,
  long=Degrees of Freedom,
}
\DeclareAcronym{dnn}{
  short=DNN,
  long=Deep Neural Network,
}
\DeclareAcronym{cp}{
  short=CP,
  long=Capture Point,
}
\DeclareAcronym{lip}{
  short=LIP,
  long=Linear Inverted Pendulum,
}
\DeclareAcronym{cots}{
  short=COTS,
  long=commercial-off-the-shelf,
}
\DeclareAcronym{canfd}{
  short=CAN-FD,
  long=Controller Area Network Flexible Data-Rate,
}
\DeclareAcronym{mpc}{
  short=MPC,
  long=Model Predictive Control
}
\DeclareAcronym{rl}{
  short=RL,
  long=Reinforcement Learning
}
\DeclareAcronym{linc}{
  short=LINC,
  long=Learning Introspective Control
}
\DeclareAcronym{icr}{
  short=ICR,
  long=Instantaneous Center of Rotation
}
\DeclareAcronym{agv}{
  short=AGVs,
  long=Autonomous Ground Vehicles
}
\DeclareAcronym{cast}{
  short=CAST,
  long=Center for Autonomous Systems and Technologies
}
\DeclareAcronym{rmse}{
  short=RMSE,
  long=Root Mean Square Error
}
\newcommand{\N}{\mathbb{N}}
\newcommand{\R}{\mathbb{R}}
\DeclareMathOperator*{\E}{\mathbb{E}}
\renewcommand{\th}{^{\mathrm{th}}}
\newcommand{\simplex}[1]{\triangle #1}
\newcommand{\uniform}{\mathcal{U}}
\DeclareMathOperator*{\argmin}{argmin\ }
\newcommand{\cE}{\boldsymbol{\mathcal{E}}}
\newcommand{\mA}{\mathbf{A}_\mathrm{n}}
\newcommand{\mB}{\mathbf{B}}  
\newcommand{\mC}{\mathbf{C}}
\newcommand{\mK}{\mathbf{K}} 
\newcommand{\mM}{\mathbf{M}}
\newcommand{\mQ}{\mathbf{Q}} 
\newcommand{\mR}{\mathbf{R}} 
\newcommand{\mgamma}{\mathbf{\Gamma}} 
\newcommand{\vp}{\mathbf{p}}
\newcommand{\vq}{\mathbf{q}}
\newcommand{\vs}{\mathbf{s}}
\newcommand{\vtheta}{\boldsymbol{\theta}}
\newcommand{\vvu}{\mathbf{u}}
\newcommand{\vu}{\mathbf{u}}
\newcommand{\vv}{\mathbf{v}}
\newcommand{\vw}{\mathbf{w}}
\newcommand{\vx}{\mathbf{x}}
\newcommand{\vy}{\mathbf{y}}
\newcommand{\mH}{\mathbf{H}}
\newcommand{\vdelta}{\boldsymbol{\delta}}
\newcommand{\mphi}{\boldsymbol{\Phi}}
\newcommand{\mW}{\mathbf{W}}
\newcommand{\hai}{\hat{\theta}_i}
\newcommand{\dhai}{\dot{\hat{\theta}}_i}
\newcommand{\tai}{\tilde{\theta}_i}
\newcommand{\dtai}{\dot{\tilde{\theta}}_i}
\newcommand{\Lip}{{\operatorname{Lip}}}
\newcommand{\dist}{\mathbf{d}}
\newcommand{\window}{{\hat \ell}}
\newcommand{\namepaper}{MAGIC\textsuperscript{VFM}}
\newcommand{\revision}[1]{\textcolor{black}{#1}}
\DeclarePairedDelimiter\abs{\lVert}{\rVert}
\newcommand{\revisiontwo}[1]{\textcolor{black}{#1}}
\title{\namepaper~-Meta-learning Adaptation for Ground Interaction Control
with Visual Foundation Models}
\author{Elena Sorina Lupu$^{*}$, Fengze Xie$^*$, James A. Preiss, Jedidiah Alindogan, Matthew Anderson, Soon-Jo Chung\thanks{$^*$Co-first authors. This work was supported by DARPA \ac{linc} and Amazon AI for Science.}}
\begin{document}

\maketitle

\begin{abstract} 
\revision{
    Control of off-road vehicles is challenging due to the complex dynamic interactions with the terrain.
    Accurate modeling of these interactions is important to optimize driving performance, but the relevant physical phenomena, \revisiontwo{such as slip,} are too complex to model from first principles.
    Therefore, we present an offline meta-learning algorithm to construct a rapidly-tunable model of residual dynamics and disturbances.
    Our model processes terrain images into features using a visual foundation model (VFM), then maps these features and the vehicle state to an estimate of the current actuation matrix using a deep neural network (DNN).
    We then combine this model with composite adaptive control
    to modify the last layer of the DNN in real time,
    accounting for the remaining terrain interactions not captured during offline training.
    We provide mathematical guarantees of stability and robustness for our controller,
    and demonstrate the effectiveness of our method through simulations and hardware experiments with a tracked vehicle and a car-like robot.
    We evaluate our method outdoors on different slopes with varying slippage and actuator degradation disturbances, and
    compare against an adaptive controller that does not use the VFM terrain features.
    We show significant improvement over the baseline in both hardware experimentation and simulation.
}
\end{abstract}

\section{Introduction}
\ac{agv} are gaining popularity across numerous domains including agriculture \revisiontwo{applications}~\cite{agriculture_1, Gonzalez-De-Santos20, agriculture_2}, wilderness search and rescue missions~\cite{delmerico_active_2017, kashino_aerial_2020, qin_autonomous_2019, gadekar_rakshak_2023}, and planetary exploration~\cite{doi:10.1126/scirobotics.adi3099}. In many of these scenarios, the \ac{agv} operate on rugged surfaces where the ability to follow a desired trajectory \revision{degrades}.
To reliably operate in these environments with minimal human intervention, \ac{agv} must understand the environment and adapt to it in real time.
Slippage is one of the primary challenges \revision{encountered by} ground vehicles while operating on loose terrain.
For rovers exploring other planets, slippage can slow down their progress and even halt their scientific objectives.
For instance, the Opportunity rover recorded significant slippage and sinking of its wheels during the Mars \revisiontwo{D}ay 2220~\cite{opportunity_mer} while traversing sand ripples.
During its climb, the slip, calculated based on visual odometry~\cite{maimone_slip}, was high, and thus the drive was halted and \revision{rerouted} by the ground operators.

\begin{figure}[t]
    \centering\includegraphics[width=\columnwidth]{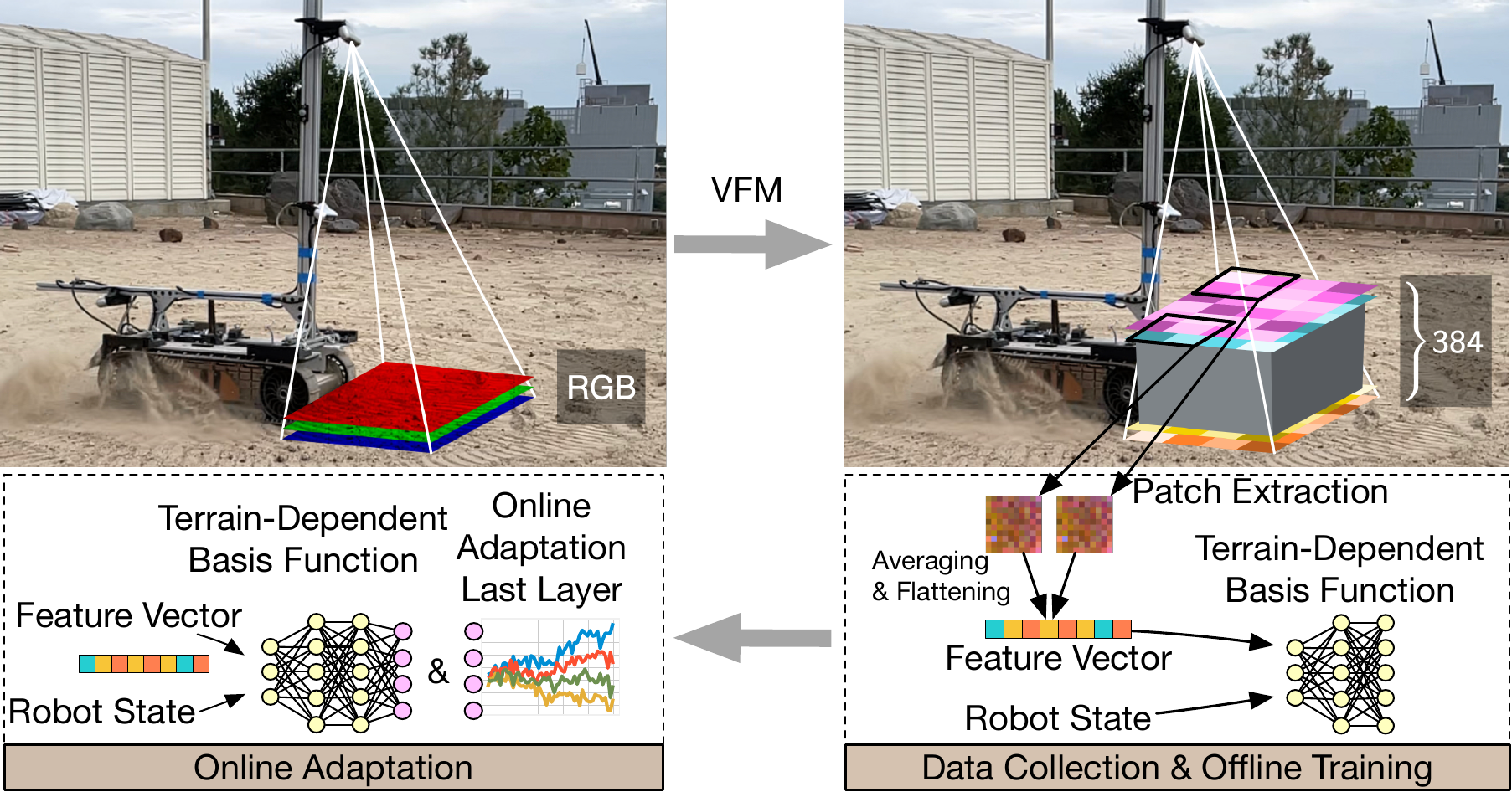}
\caption{MAGIC\textsuperscript{VFM}: 
\revision{An offline meta-learning algorithm to build a  residual dynamics and disturbance model using both Visual Foundation Models (VFM) and vehicle states. This model is integrated with composite adaptive control to adapt to changes in both the terrain and vehicle dynamics conditions in real time. See \url{https://youtu.be/sxM73ryweRA}}
\label{fig:concept_overview}
}

    \label{fig:gvrbot}
\end{figure}

To better understand the effects of terradynamics, researchers have designed sophisticated models~\cite{wong2001theory} that inform the design, simulation, and control of ground vehicles. 
However, these models have numerous assumptions and are often limited when the vehicles are operated at their performance boundaries (e.g., steering at high speeds and instances of non-uniform resistive forces like stumps and stones).
In addition, designing controllers that consider these complex models is challenging.
For control, kinematic models, such as Dubins, are often employed due to their simplicity and intuitive understanding.
However, these models are not able to capture the complicated dynamics between the vehicle and the ground, nor other disturbances such as internal motor dynamics or wheel or track degradation.
To increase the performance of ground vehicles, more comprehensive models are necessary. 

Controllers that stabilize a ground vehicle and track desired trajectories amidst a variety of disturbances are crucial for achieving optimal vehicle performance. Oftentimes, the bottleneck is not in the controller design per se, but rather in the choice and complexity of the model utilized \revision{by} the controller. 
\revision{Recently}, reinforcement learning (RL) has shown significant promise in facilitating the development of efficient controllers through experiential learning~\cite{chua_deep_2018,peng2021linear,tobin2017domain}.
The combination of
meta-learning~\cite{santoro_meta-learning_2016,meta-learning-survey,metalearning,nagabandi2019learning,clavera2018model,goodfellow2014generative,ganin2016domain,mckinnon2021meta} and adaptive control~\cite{slotine1991applied,ioannou1996robust,krstic1995nonlinear,narendra2012stable,O_Connell_2022,chen1995adaptive,farrell_adaptive_2006,nakanishi_locally_2002,pravitra_L1-adaptive_2020} demonstrates considerable potential in accurately estimating unmodeled dynamics, \revisiontwo{efficiently} addressing domain shift challenges and real-time adaptation to new environments~\cite{nagabandi2019learning, song2020rapidly, Belkhale_2021, 9369887}. 
Despite this progress, incorporating a suitable model into a controller/policy is still an active area of research,
especially when combined with theoretical and safety guarantees.

Learning sophisticated unmodeled dynamics based only on a limited set of vehicle states is ill-posed given that the operating environment is infinite dimensional.
To accurately represent a complete dynamics model, including learned residual terms for control, it is imperative to leverage as much information about the environment as possible.
For instance, visual information can inform the model about the type of terrain in which the vehicle is operating.
Previous work includes segmentation-based models that assign a discrete terrain type to each area in the image.
This information is further employed in planning and control~\cite{rothrock_spoc_2016}.
However, in off-road applications, categorizing terrains into a limited number of classes such as snow, mud, sand, etc. is not sufficient. There are infinite subcategories within each terrain type, each presenting distinct effects on the vehicle.
In addition, two terrains can appear similar \revisiontwo{yet} induce different dynamic behaviors on the robot (e.g., deep and shallow sand). Therefore, finding the \revisiontwo{most accurate and} robust representation of the environment is indispensable for vehicle control over complex terrain.

\subsection{Contributions}
To address these limitations, we present \namepaper{} (\textbf{M}eta-learning \textbf{A}daptation for \textbf{G}round \textbf{I}nteraction \textbf{C}ontrol with \textbf{V}isual \textbf{F}oundation \textbf{M}odels), an approach that integrates a \ac{vfm} with meta-learning and \revision{composite} adaptive control, thereby enabling ground vehicles to navigate and adapt to complex terrains in real time.
Our method is \revisiontwo{well-}suited for any ground vehicle equipped with the following: 1) sensors \revision{to measure the} internal \revision{robot} state, 2) exteroceptive sensors that can capture the terrain such as cameras, 3) the availability of a pre-trained \ac{vfm}, and 4) the necessary computation hardware to evaluate the \ac{vfm} in real-time.
Our contributions are:
\begin{itemize}[noitemsep,  leftmargin=*]
    \item the first stable \textit{learning-based adaptive controller} that incorporates \textit{visual foundation models} for terrain adaptation \revisiontwo{in real time};
    \item an offline meta-learning algorithm that uses continuous trajectory data to train and learn the terrain disturbance as a function of visual terrain information and vehicle states;
    \item mathematical guarantees of exponential stability \revision{and robustness against model errors} for adaptive control with visual and state input that works in conjunction with our  meta-learning offline training algorithm;
    \item the development of a position, attitude, and velocity tracking control formulation with the control influence matrix adaptation that can handle a variety of other perturbations in real-time such as unknown time-varying track or motor degradation and arbitrary time-varying disturbances.
\end{itemize}

We validate the effectiveness of our method both through simulation and in hardware \revision{on two heterogeneous robotic platforms}, demonstrating its performance outdoors, on slopes with different slippage, as well as under track degradation disturbances.

\subsection{Paper Organization}
\revision{The} paper is organized as follows:~\Cref{sec:related_work} provides a review of existing literature.
In~\Cref{sec:terrain_informed_adaptive}, our offline meta-learning algorithm and the online adaptation algorithm are presented.
~\Cref{sec:modeling} \revision{presents} the \revision{two different vehicle} models \revision{and} the adaptive controllers with stability and robustness guarantees.
In~\Cref{sec:interprebility}, we analyze the \ac{vfm} output for terrain, particularly in relation to our learning-based adaptive controller.
In~\Cref{sec:simulation}, we validate the algorithm using simulation and continue with experimental validation in~\Cref{sec:experimental} and concluding remarks in~\Cref{sec:conclusions}.

\subsection{Notation}\label{sec:notation}
Unless otherwise noted,
all vector norms are Euclidean
and all matrix norms are the Euclidean operator norm.
We denote the floor operator by $\floor{\cdot}$.
Given $\mathbf{A} \in \mathbb{R}^{n \times m \times p}$ and $\mathbf{b} \in \mathbb{R}^{p}$, the notation $\left(\mathbf{A}\mathbf{b}\right)$ is defined as 
$\sum_{i=1}^{p} \mathbf{A}_i b_i$.
The notation $\Vert \mathbf{x} \Vert_\mathbf{P}$ for positive semi-definite matrix $\mathbf{P}$ defines the weighted inner product $\sqrt{\mathbf{x}^\top \mathbf{P}\mathbf{x}}$.
For a function $f : X \mapsto Y$ where $X$ and $Y$ are metric spaces with metrics $d_X$ and $d_Y$, we define
$\| f \|_\Lip = \max_{x, x' \in X} d_Y(f(x), f(x')) / d_X(x, x')$.
For a measurable set $X$, we denote the set of probability measures on $X$ by $\simplex{X}$,
and if a uniform distribution on $X$ exists, we denote it by $\uniform X$.
When clear from context, we overload the notation $[i, j]$ to denote the integer sequence $i, \dots, j$. \revision{All matrices and vectors are written in bold.}

\section{Related Work}\label{sec:related_work}

The term \emph{meta-learning}, first coined in~\cite{schmidhuber:1987:srl}, most often refers to learning protocols in which there is an underlying set of related learning tasks/environments, and the learner leverages data from previously seen tasks to adapt rapidly to a new task~\cite {metalearning, shi2021metaadaptive, meta-learning-survey, xiao2023safe}.
The goal is to adapt more rapidly than would be possible for a standard learning algorithm presented with the new task in isolation.
In robotics, meta-learning has been used to accurately adapt to highly dynamic environments~\cite{nagabandi2019learning, song2020rapidly, Belkhale_2021, 9369887}.
Online meta-learning~\cite{online_meta_learning, O_Connell_2022, meta_learning_online_corr, pmlr-v97-finn19a} includes two phases: offline meta-training and online adaptation.
In the offline phase, the goal is to learn a model that performs well across all environments using a meta-objective.
Given limited real-world data, the online adaptation phase aims to use online learning, such as adaptive control~\cite{slotine1991applied}, to adapt the offline-learned model to a new environment in real time.

Some examples of meta-learning algorithms from \revisiontwo{the} literature are Model-Agnostic Meta-Learning (MAML)~\cite{metalearning} with its online extension~\cite{pmlr-v97-finn19a}, Meta-learning via Online Changepoint Analysis (MOCA)~\cite{DBLP:journals/corr/abs-1912-08866}, and Domain Adversarially Invariant Meta-Learning (DAIML) used in Neural-Fly~\cite{O_Connell_2022}.
For task-centered datasets, MAML~\cite{metalearning} trains the parameters of a model to achieve optimal performance on a new task with minimal data, by updating these parameters through one or more gradient steps based on that task's dataset.
In continuous problems, tasks often lack clear segmentation, resulting in the agent being unaware of task transitions. 
Therefore, MOCA~\cite{DBLP:journals/corr/abs-1912-08866} proposes task unsegmented meta-learning via online changepoint analysis.
DAIML~\cite{O_Connell_2022} proposes an online meta-learning-based approach 
\revision{where} a shared representation is learned offline \revision{(for example, using data from different wind conditions for a quadrotor)}, with a linear, low-dimensional part updated online through adaptive control.

Our method builds on the previous work~\cite{O_Connell_2022} on the integration of adaptive control and offline meta-learning to build a comprehensive model for ground vehicles. We develop a meta-learning algorithm that uses continuous trajectories from a robot driving on different terrains to learn a representation of the dynamics residual common across these terrains.
This representation is a \ac{dnn} that encodes the terrain information through vision, together with a set of linear parameters that adapt online at runtime.
These linear parameters can be interpreted as the last layer of the \ac{dnn}~\cite{O_Connell_2022} and \revisiontwo{are} terrain independent but encapsulate the remaining disturbances not captured during training.

\subsection{Embedding Visual Information in Classical Control and Reinforcement Learning}

One of the early works on including visual information for control is visual servoing~\cite{espiau1992new}, a technique mainly used for robot manipulation.
Recently, vision-based reinforcement learning (VRL) has demonstrated the capability to control agents in simulated environments\cite{world_models}, as well as robots in real environments, \revision{with applications to ground robots}~\cite{doi:10.1126/scirobotics.abk2822, pmlr-v205-wu23c, cheng2023extreme} \revision{and \revisiontwo{robot} manipulation} \cite{JMLR:v17:15-522, Florence2019SelfSupervisedCI, visualforesight}.
This capability is achieved by leveraging high-fidelity models in robotics simulators~\cite{todorov2012mujoco, 8255551}, \revision{imitation learning from human demonstrations or} techniques to bridge the sim-to-real gap of the learned policy~\cite{tobin2017domain}.
Nevertheless, a limitation of VRL is that the generated policy remains uninterpretable and does not have safety and robustness guarantees. 
To address the uninterpretability aspect, recent advancements in Inverse Reinforcement Learning (IRL) offer promising \revisiontwo{algorithms} for interpreting terrain traversability as a reward map, thus enhancing the understanding of the environments~\cite{Gan_2022, frey2023fast}.
Despite progress in combining vision with \ac{rl}, incorporating a suitable terrain model into a policy is still an open area of research, especially when combined with theoretical guarantees and safety properties.

To this end, we derive a nonlinear adaptive tracking controller for \ac{agv} that uses a learned ground model with vision information \revisiontwo{incorporated} in the control influence matrix.
Our method processes camera images that are then passed through a \ac{vfm} to synthesize the relevant features. These features, together with the robot's state, are incorporated into the ground\revisiontwo{-robot interaction} model learned offline using meta-learning. For this adaptive controller, we prove exponential convergence to a bounded error ball.

\subsection{Visual Foundation Models in Robotics}

A foundation model is a large-scale machine learning model trained on a broad dataset that can be adapted, fine-tuned, or built upon for a variety of applications.
Self-supervised learned \ac{vfm}s, such as Dino and DINOv2~\cite{oquab2023dinov2}, are foundation models that are based on visual transformers~\cite{NIPS2017_3f5ee243, DBLP:journals/corr/abs-2010-11929}.
These models are trained to perform well on several downstream tasks, including image classification, semantic segmentation, and depth estimation.
In robotics, these foundation models are starting to gain popularity in tasks such as image semantic segmentation~\cite{hamilton2022unsupervised, Erni2023-bs}, traversability estimation~\cite{frey2023fast}, \revision{and \revisiontwo{robot} manipulation~\cite{ze2023gnfactor, wang2023d3fields}}. One of their key advantages lies in the robustness against variations in lighting and occlusions \cite{naseer2021intriguing}, as well as their ability to generalize well across different images of the same context.
By consuming raw images as inputs, these self-supervised learning foundation models possess the potential to learn all-purpose visual features if pre-trained on a large quantity of data.

\subsection{Adaptation to Ground Disturbances}

Adaptive control~\cite{slotine1991applied,ioannou1996robust,krstic1995nonlinear,narendra2012stable,O_Connell_2022,chen1995adaptive,farrell_adaptive_2006,nakanishi_locally_2002,pravitra_L1-adaptive_2020,ANNASWAMY202118} is a control method with provable convergence guarantees in which a set of linear parameters \revisiontwo{is} adapted online to compensate for disturbances at runtime.
Typically, these \revision{linear} parameters are multiplied by a basis function, which can be constant (as in the case of integral control), derived from physics~\cite{shi_adaptive_2020}, or represented using Radial Basis Functions (RBFs)~\cite{KADIRKAMANATHAN1995245} or \ac{dnn}s~\cite{O_Connell_2022}.
\revision{First introduced in \cite{slotine1991applied,slotine1989composite}, composite adaptation combines online parameter estimation and tracking-error adaptive control. A rigorous robustness/stability analysis for composite adaptation with a connection to deep meta-learning was first derived in~\cite{O_Connell_2022} for flight control applications.}

Ground vehicles (including cars, tracked vehicles, and legged robots) should be adaptive to changes in the terrain conditions.
This adaptability is essential for optimal performance and safety in diverse environments~\cite{LUA202314175, doi:10.1177/0954407019901083, sombolestan2021adaptive, 976416, petrov2014modeling, khan_fast_2022}.
In~\cite{visca2022deep}, an adaptive energy-aware prediction and planning framework for vehicles navigating over terrains with varying and unknown properties \revisiontwo{was} proposed and demonstrated in simulation.
\cite{metaverse} proposes a deep meta-learning framework for learning a global terrain traversability prediction network that is integrated with a sampling-based model predictive controller, while \cite{probabilistic_traversability_model} develops a probabilistic traction model with uncertainty quantification using both semantic and geometric terrain features. \revision{In~\cite{banerjee2020adaptive}, a meta-learning-based approach to adapt probabilistic predictions of rover dynamics with Bayesian regression is used.
}
\revisiontwo{While these approaches succeeded in developing different models for control and planning, incorporating a suitable model that fits the adaptive control framework is an open area of research.}

\revision{In this paper, w}e establish an adaptive controller that can \revision{handle} a broad range of real-time perturbations, such as unknown time-varying track or motor degradation, under controllability assumptions, arbitrary time-varying disturbances, and model uncertainties.
\revision{This work can be viewed as a generalization and improvement of~\cite{O_Connell_2022} with a new control matrix adaptation method using visual information and improved stability results.}
\revision{Our} adaptive capability acts as an enhancement to the offline trained basis function\revision{, further improving tracking performance under challenging conditions}.

\begin{figure*}[t]\centering\includegraphics[width=0.9\textwidth]{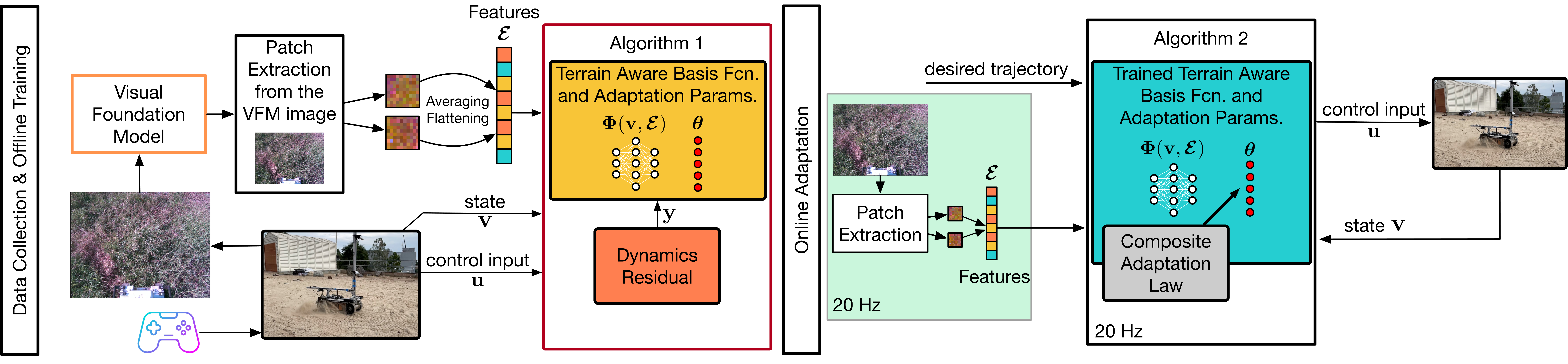}
    \caption{
        Terrain-aware Architecture: offline data collection and training (Algorithm~\ref{alg:training-v2}), followed by real-time adaptive control (Algorithm~\ref{alg:adaptation_at_execution_time}) running onboard the robot.}
    \label{fig:arch}
\end{figure*}

\section{Methods}\label{sec:terrain_informed_adaptive}
\subsection{Residual Dynamics Representation using \ac{vfm}}
\label{sec:model}
We consider an uncertain dynamical system model
\begin{equation}
\label{eq:general_dynamics}
    \dot{\mathbf{x}} = \mathbf{f}(\mathbf{x}, \mathbf{u}, t) + \dist,
\end{equation}
where $\mathbf{x} \in \mathbb{R}^n$ denotes the state,
$\mathbf{u} \in \mathbb{R}^m$ denotes the control input,
$\mathbf{f}\!:\!\mathbb{R}^{n + m + 1} \mapsto \mathbb{R}^n $ denotes the nominal dynamics model,
and $\dist$ is an unknown disturbance that is possibly time-varying and state- and environment-dependent.
Our algorithm approximates $\dist$ by
\begin{equation}
    \dist = \mphi(\mathbf{x},  \mathbf{u}, \cE) \boldsymbol{\theta}(t) + \vdelta, \label{Eq:Phidef_original}
\end{equation}
where
${\boldsymbol{\theta}\revision{(t)} = \left[\theta_1  \ldots \theta_{n_\theta} \right]^\top \in \mathbb{R}^{n_\theta}}$
denotes a \revision{time-varying} vector of linear parameters that are adapted online by our algorithm, $\vdelta $\revision{ ${} \in \mathbb{R}^n$} is a representation error,
and $\cE \in \mathbb{R}^{p}$ is a feature vector representation of the terrain surrounding the robot computed by a \ac{vfm}.
From the perspective of \revision{the adaptive control part of our method}, the precise \revision{form of $\cE$ is not required in our work}.
We can think of $\cE$ as computed by some arbitrary \revision{Lipschitz} function from the robot's sensors to $\R^p$.
We \revisiontwo{provide} details on the particular form of $\cE$ used in our empirical sections (\Cref{sec:interprebility}-\ref{sec:experimental}).
The feature mapping $\mphi(\mathbf{x},  \mathbf{u}, \cE):\mathbb{R}^{n + m + p} \mapsto \mathbb{R}^{n \times n_\theta} $ is learned in the offline training phase of our algorithm.
Our method supports arbitrary parameterized families of continuous functions, but in practice we focus on the case where $\mphi$ is a \ac{dnn}.
In this case, $\boldsymbol{\theta}$ can be regarded as the weights of the last layer of the \ac{dnn}~\eqref{Eq:Phidef_original}, which continuously adapt in real-time.
The online adaptation is necessary in real scenarios, because two \revision{environments (i.e.,} terrains\revision{)} might have the same representation~$\boldsymbol{\mathcal{E}}$, but induce different dynamic behaviors onto the robot, as well as for other types of disturbances not captured in \revision{the feature mapping} $\mphi$.

Further, we assume that the disturbance $\mathbf{d}$ is affine in the control input $\mathbf{u}$\revision{, taking the form}
\begin{equation}
\label{Eq:Phidef}
    \mathbf{d} \approx \mphi^\vw(\mathbf{x},  \cE) \boldsymbol{\theta} \mathbf{u} = \sum_{i=1}^{n_\theta} \theta_i \mphi^\vw_i(\mathbf{x},  \cE) \vu \revision{,}
\end{equation}
\revision{where} the dependence on a parameter vector $\vw$ (the \ac{dnn} weights) is made explicit, and
the basis function has the form ${\mphi^\vw(\mathbf{x}, \cE): \mathbb{R}^{n + p} \mapsto \mathbb{R}^{n \times m \times n_\theta}}$
with the individual matrix-valued components denoted \revisiontwo{by}
$\mphi^\vw_i(\mathbf{x}, \cE): \mathbb{R}^{n  + p} \mapsto \mathbb{R}^{n \times m}$.
The control affine assumption is motivated by two \revision{factors}.
First, in our main application of ground vehicles with desired-velocity inputs,
input-affine disturbances more accurately capture terrain interactions such as
slippage~(\Cref{sec:motivation_control_matrix}), internal dynamics, 
and wheel or track degradations.
Second, \revisiontwo{this assumption} simplifies the exponential convergence proof for our adaptive controller given in~\Cref{theorem1}.
\revision{However, we emphasize that~\Cref{theorem1} can be extended to more general forms of disturbances by input-output stability combined with contraction theory~\cite{chung2013phase, tsukamoto2021contraction}.}
Lastly, we define the dynamics residual
\revision{that is used in both the online and offline phase of our method as $\vy= \dot{\vx} - \mathbf{f}(\vx, \vu, t)$.}
\subsection{Offline Meta-Learning Phase}
In Fig.~\ref{fig:arch}, we \revisiontwo{illustrate the} structure of our proposed solution to learn offline the basis function \revision{$\mphi^\vw(\mathbf{x}, \cE)$} in~\eqref{Eq:Phidef} and to make real-time adjustments
using composite adaptive control (\Cref{sec:adapt_control_def}). 
Our method is divided into two steps. 
First, the robot captures relevant ground information during offline data collection, followed by training a \ac{dnn} with terrain and state information to approximate the residual dynamics $\vy$.
Second, the trained model is deployed onboard the robot and updated online to compensate for the residual dynamics not captured in offline training.

\subsubsection{Dataset}
To learn the basis function \revision{$\mphi^\vw$}  in~\eqref{Eq:Phidef}, we collect a dataset of \revisiontwo{the} robot \revision{operating} on a diverse set of terrains.
The dataset includes paired ground images from the onboard camera and state information measured using onboard sensors.
The images are processed through a VFM, resulting in the representation $\cE$\revision{,}
as discussed in \Cref{sec:model}.

This dataset contains $N \in \N$ trajectories.
Each trajectory is an uninterrupted driving session on the order of a few minutes.
Therefore, a single trajectory may contain significant dynamics-altering terrain transitions, such as between grass and concrete,
but it will not contain dramatic transitions such as from a desert to a forest, or from midday to night.
For notational simplicity only, we assume all trajectories have equal length $\ell \in \N$.
Let $\vx^n_t, \vu^n_t, \cE^n_t, \vy^n_t$ respectively denote the $t\th$ state, input, VFM \revision{representation}, and residual dynamics derivative of the $n\th$ trajectory.

\begin{figure}[t]
   \centering
\includegraphics[width=0.8\columnwidth]{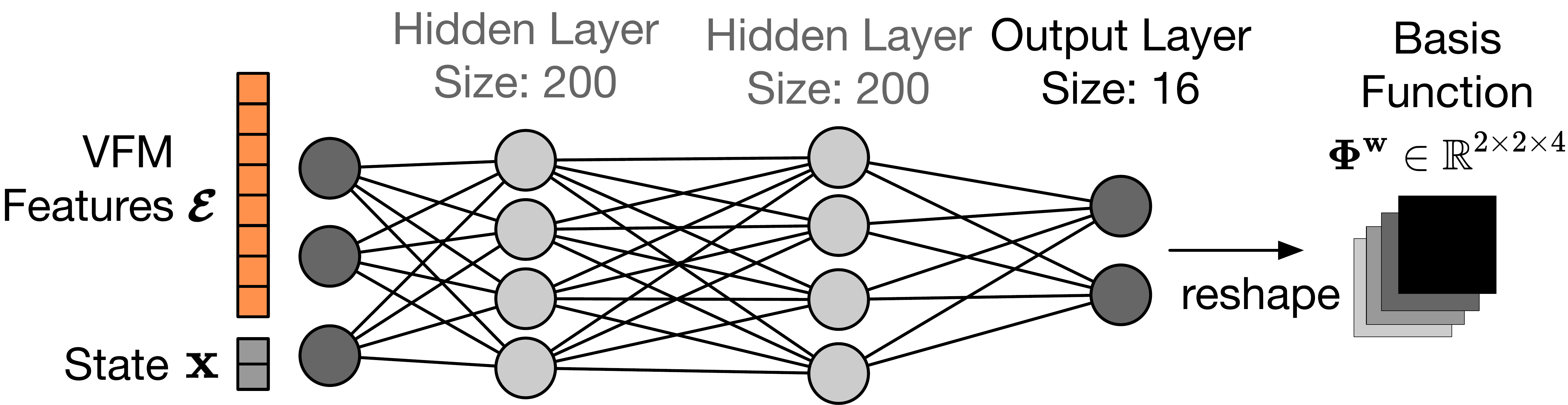}
    \caption{The structure of the DNN used for the basis function $\mphi^\vw$ in the controller synthesis from~\eqref{eq:main_controller},~\eqref{eq:adapt_law} applied to a tracked vehicle.}\label{fig:dnn_architecture}
\end{figure}

\subsubsection{Model Architecture}
In designing the parameterized function class for \revision{the basis function} $\mphi^\vw$, we prioritize simplicity and efficiency to enable \revision{fast} inference for real-time control.
Therefore, we select a fully connected \ac{dnn} with two hidden layers (Fig.~\ref{fig:dnn_architecture})\revisiontwo{, which takes as input }both the robot's state and visual features from the \ac{vfm}.
We employ layer-wise spectral normalization to constrain the Lipschitz constant of the \ac{dnn}.
Spectral normalization is crucial for ensuring smooth control outputs and \revisiontwo{limiting} pathological behavior outside the training domain~\cite{shi2019neural}.
Details of spectral normalization are given in \Cref{sec:training-algo}.

\algblock{Input}{EndInput}
\algnotext{EndInput}
\begin{algorithm}[t]
    \caption{Offline Meta-Learning with Continuous Trajectories}
    \label{alg:training-v2}
    \begin{algorithmic}[1]
        \Input
            \State Dataset of $N$ trajectories of length $\ell$,
            window length distribution $L$,
            regularization target $\vtheta_{\mathrm{r}}$ and weight $\lambda_\mathrm{r}$,
            minibatch size $K$,
            initial DNN weights $\vw$.
        \EndInput
        \State \textbf{Output:} Final optimized DNN weights $\vw $ \revision{ of $\mphi$.}
        \While{not converged}
            \State Sample (with replacement) size-$K$ minibatches of:
                \State \quad - trajectory indices $\{n_k\}_{k=1}^K$,
                \State \quad - window lengths  $\{\window_k\}_{k=1}^K$,
                \State \quad - start times $\{s_k\}_{k=1}^K$.
            \State Solve~\eqref{eq:theta-for-window} for each $\theta^\star_{n_k, \window_k, s_k}$ in the minibatch \label{algo:least_squares}
            \State \quad (in closed form, allowing $J_{n_k, \window_k, s_k}$ gradient flow\footnotemark). 
            \State \label{alg_step:optimization} Take optimizer step on $\vw$ w.r.t minibatch cost \eqref{eq:training-objective}
                \[
                    \textstyle
                    \sum_{k=1}^K 
                    J_{n_k, \hat \ell_k, s_k}(\vw).
                \]
            \State Spectral Normalizaton:
                $\mW_i \gets \frac{\mW_i}{\|\mW_i\|}$ for all $i \in [d]$.
                \label{line:specnorm}
        \EndWhile
    \end{algorithmic}
\end{algorithm}

\subsubsection{Optimization}
\label{sec:training-algo}
Our method is built around the assumption that two terrains with similar visual features $\cE$ will usually, but not always, induce similar dynamics.
We account for this \revisiontwo{observation} with a meta-learning method that allows the linear part $\vtheta$ to vary over the training data while the feature mapping weights $\vw$ remain fixed.
In particular, we assume that the linear part $\vtheta$ is slowly time-varying within a single trajectory in the training data but \revisiontwo{$\vtheta$} may change arbitrarily much between two trajectories.
The slowly time-varying assumption implies that within a sufficiently short window into a full trajectory, $\vtheta$ is approximately constant.
Therefore, we optimize \revision{the weights} $\vw$ \revision{of the basis function} for data-fitting accuracy when the best-fit constant $\vtheta$ is computed for random short windows into the trajectories.
Due to our linear adaptation model structure, 
we observe that 
for a particular trajectory index $n \in [1\dots N]$,
window length $\window \in [1\dots \ell]$, and starting timestep $s \in [1\dots\ell - \window + 1]$, the best-fit value
\revision{
\begin{equation}
\begin{aligned}\label{eq:theta-for-window}
    &
    \argmin_{\vtheta}
    \sum_{t = s}^{s + \window - 1}
    \big\|
    \vy^n_t - (\mphi^\vw\vtheta ) \vu^n_t
    \big\|_2^2 \notag 
     + \lambda_r \| \vtheta - \vtheta_r \|_2^2, \\
    & = \sum_{t = s}^{s + \window - 1} \left( (\mphi^\vw \vtheta)^\top (\mphi^\vw \vtheta) + \lambda_r \mathbf{I}_{n_\theta} \right)^{-1}  \left( (\mphi^\vw \vtheta)^\top \vy^n_t + \lambda_r \vtheta_r \right) \\
    & \eqqcolon \vtheta^\star_{n, \window, s}(\vw),
\end{aligned}
\end{equation}}
is the solution to an $L_2$-regularized linear least-squares problem and is a closed-form, continuous function of the feature mapping parameter $\vw$, \revision{ where $\mphi^\vw$ is shorthand for $\mphi^\vw(\vx^n_t, \cE^n_t)$, $\lambda_r \in \mathbb{R}$ is the regularization parameter, and $\vtheta_r \in \mathbb{R}^{n_\theta}$ is the regularization target, chosen arbitrary.} 
 \revision{The $\Vert \vtheta - \vtheta_r \Vert_2^2$ regularization term ensures that the closed-form solution is unique.}
We can now define our overall optimization objective.
Let $L$ denote a distribution over trajectory window lengths: $L \in \simplex{[1, \ell]}$.
We minimize the meta-objective
\begin{align}
\label{eq:training-objective}
    J(\vw) = &
    \E_{n, \window, s } \left[
        \sum_{t = s}^{s + \window - 1}
        \left\|
            \vy^n_t - \left(\mphi^\vw(\vx^n_t, \cE^n_t)
            \vtheta^\star_{n, \window, s}(\vw)\right)
            \vu^n_t
        \right\|_2^2
    \right] \notag \\
    \coloneqq &
    \E_{n, \window, s } \left[
        J_{n, \window, s}(\vw)
    \right],
\end{align}
where the expectation is shorthand for
$n \sim \uniform[1, N]$,
$\window \sim L$,
and
$s \sim \uniform[1, \ell - \window + 1]$.
\revisiontwo{We incorporate} the closed-form computation of \revision{the best-fit linear \revisiontwo{component}} $\vtheta^\star_{n, \window, s}$ in the computational graph of our optimization \revision{(meaning we take gradients through the least squares solution in Line~\ref{algo:least_squares})},
as opposed to treating the trajectories of $\vtheta$ as optimization variables. \revisiontwo{Given this,}
we obtain a simpler algorithm. 

\begin{algorithm}[t]
    \caption{Rapid \revision{Terrain-Informed} Online Adaptation for Model Mismatch and Tracking Error}
    \label{alg:adaptation_at_execution_time}
	\begin{algorithmic}[1]
    \Input
        \State Optimized feature mapping $\mphi$ from Algorithm \ref{alg:training-v2},
         importance weight for prediction $\mathbf{R}$,
         damping constant $\lambda$,
         initial adaptive gain $\mgamma_0$,
         reference trajectory $\vx_r$.
    \EndInput
    \State Initialize $\hat{\vtheta}$ \revision{with an user-defined regularization target} $ \vtheta_{\mathrm{r}}$.\label{alg:init_alg2}
    \While{running}
        \State Evaluate the \ac{vfm} \revision{on the input image} and get the features $\cE$. \label{alg:evaluate_Phi}
        \State Get state $\vx$ and evaluate the \revision{\ac{dnn}} $\mphi(\vx, \cE)$. \label{alg:evaluate_dnn} 
        \State Compute the \revision{tracking} error vector $\mathbf{s}$ \revision{(e.g., using~\eqref{eq:s_definition})}.
        \State Compute the control input $\mathbf{u}$ \label{alg:control_input}\revision{(e.g., using \eqref{eq:main_controller})}.
        \State \revision{Compute the dynamics residual $\vy$ (e.g., using~\eqref{eq:dyn_res_tracked_vehicle})}
        \State Compute \revision{the} adaptation parameter derivative $\dot{\hat{\vtheta}}$  \label{alg:adapt_composite} \quad \quad
        \revision{$\label{Eq:theta_hat_adapt_rule}
        \dot{\hat{\vtheta}} = -\lambda\hat{\vtheta} -  \mathrm{predict}(\mgamma, \vy, \mphi, \hat{\vtheta}, \vvu) +  \mathrm{track}( \mgamma, \vs, \mphi, \hat{\vtheta}, \vvu ).$}
        \State Compute \revision{the} adaptation gain derivative $\dot{\mgamma}$ \revision{(e.g., using~\eqref{eq:adapt_law} or~\eqref{eq:adapt_law_new})}.\label{alg:adaptation_gain_derivative}

        \State Integrate with system timestep $\Delta_t$ %
                \[
                \hat{\vtheta} \gets  \hat{\vtheta} + \Delta_t \dot{\hat{\vtheta}},
                \quad
                \mgamma \gets \mgamma + \Delta_t \dot \mgamma.
            \]
    \EndWhile
 \end{algorithmic}
\end{algorithm}

Our offline training procedure is given in \Cref{alg:training-v2}.
It consists of stochastic first-order optimization
on the objective $J(\vw)$
and spectral normalization to enforce the Lipschitz constraint on $\mphi$.
In particular, let $\vw = (\mW_1, \dots, \mW_d, \overline \vw)$, \revision{with $d$ being the number of layers},
where $\mW_1, \dots, \mW_d$ are the dimensionally compatible weight matrices of the \ac{dnn}, such that the product
$\mW_1 \cdots \mW_d$ exists,
and $\overline \vw$ are the remaining bias parameters.
It holds that $\| \mphi^\vw \|_\Lip \leq \| \mW_1 \cdots \mW_d \|$ for neural networks with $1$-Lipschitz nonlinearities.
Therefore, we can enforce that
$\| \mphi^\vw \|_\Lip \leq 1$,
by enforcing that $\| \mW_i \| \leq 1$ for all $i \in [1\dots \revision{d}]$.
This is implemented in \Cref{line:specnorm} of \Cref{alg:training-v2}.
Note that finding less conservative ways to enforce
$\| \mphi^\vw \|_\Lip \leq 1$
for \ac{dnn} is an active area of research.

For the remaining sections, the parameters $\vw$ of \revision{the basis function} $\mphi$ are fixed. Therefore, we drop the superscript and refer to $\mphi^\vw$ as $\mphi$, for simplicity of notation.

\footnotetext{We use the machine learning framework PyTorch that implements the linear least squares solution with gradient flow.}

\subsection{Online \revision{$\vtheta$} Adaptation and \revision{Tracking} Control}\label{sec:adapt_control_def}

This section introduces the online adaptation running onboard the robot
to adapt the linear part $\vtheta$ of the dynamics model.
The algorithm \revision{uses composite adaptation\cite{slotine_book} and} is given in Algorithm \ref{alg:adaptation_at_execution_time}. 
The parameter \revision{vector}~$\vtheta$ is initialized with a \revision{user-defined} regularization target $\vtheta_{\mathrm{r}}$ \revision{(Line~\ref{alg:init_alg2})}.
Then, in each cycle of the main loop, the robot processes the data from its visual sensor through the \ac{vfm} to generate the feature vector $\cE$ \revision{(Line~\ref{alg:evaluate_Phi})}.
The feature mapping $\mphi$ is then evaluated using the robot's current state \revision{$\vx$} and \revision{the feature vector} $\cE$ \revision{(Line~\ref{alg:evaluate_dnn})}. 
\revision{We compute the error vector} $\vs$ between the reference trajectory $\vx_r$ and the actual state $\vx$, \revision{for example, using~\eqref{eq:s_definition}}, \revision{ as well as the control input $
\vvu$ (Line \ref{alg:control_input})}.

 \revision{These previously computed variables are} passed to the \revision{composite} adaptive controller.
In this way, model mismatches and other disturbances not captured during training \revision{(\Cref{alg:training-v2})} can be adapted in real-time.
For each sampled measurement (interaction with the environment), the adaptation parameter vector \revision{$\hat{\vtheta}$} is updated using the composite adaptation rule in 
\revision{Line~\eqref{alg:adapt_composite}, which \revisiontwo{is designed} to decrease both the tracking error and the prediction error.}

\revision{Each term of Line~\eqref{Eq:theta_hat_adapt_rule} provides a specific functionality:} the first term in \revision{Line}~\eqref{Eq:theta_hat_adapt_rule} implements the so-called ``exponential forgetting'' to allow $\vtheta$ to change more rapidly when the best-fit parameters are time-varying.
The second term is gradient descent on the $\mR^{-1}$-weighted squared prediction error with respect to \revision{$\hat{\vtheta}$, where $\mR$ is a positive definite matrix}. \revision{The third term  minimize\revisiontwo{s} the trajectory tracking error}.
\revision{In Line \ref{alg:adaptation_gain_derivative}, we introduce $\mgamma$, which is our adaptation gain,} 
\revision{and its derivative $\dot{\mgamma}$} can be defined from least-squares with exponential forgetting~\cite{slotine_book, shi_adaptive_2020} or reminiscent of a Kalman Filter like in~\cite{O_Connell_2022}.
Thus, this composite adaptation is used to ensure both small tracking errors and low model mismatch.
\revision{The functions `$\mathrm{predict}$' and `$\mathrm{track}$' are defined in~\Cref{sec:control_synthesis}}.
The stability and robustness properties of Algorithm~\ref{alg:adaptation_at_execution_time} are presented in~\Cref{sec:adapt_control_tracked_vehicle}.

\section{System Modelling and Control Synthesis}\label{sec:modeling}

We apply the methods from Algorithm~\ref{alg:training-v2} and~\ref{alg:adaptation_at_execution_time} to a \revision{skid-steering} tracked vehicle (Fig.~\ref{fig:frames_of_reference}) \revision{ (\Cref{sec:tracked_vehicle_model} through \ref{sec:adapt_control_tracked_vehicle}) and to an Ackermann-steering vehicle (\Cref{sec:modelling_ackermann} through \ref{sec:ackermann_controller})}.
\revision{The} tracked vehicle uses skid-steering to maneuver over the ground, with its tracks moving at different speeds depending on the sprocket's angular velocity.
Due to the slip between the sprocket and the tracks and between the tracks and the ground, modeling the full dynamics becomes very complex.
We therefore derive its 3 \ac{dof} dynamics model~\eqref{eq:dynamics_with_xicr_eq} with its corresponding simplified model of the form~\eqref{eq:dynamicsAB}.
To this simplified model, we apply an adaptive controller with learned ground information of the form~\eqref{eq:main_controller} \revision{and}~\eqref{eq:adapt_law}.
\revision{
The car-like vehicle uses the Ackermann steering geometry, which ensures that all wheels turn around the same center, thus minimizing wheel wear.
For this vehicle type, we derive a 3-\ac{dof} dynamics model~\eqref{eq:dynamics_traxxas} using the bicycle model and an adaptive controller using Algorithm~\ref{alg:adaptation_at_execution_time}.
}

\subsection{Tracked Vehicle Dynamics Model}\label{sec:tracked_vehicle_model}

We \revision{define} a fixed reference frame $\mathcal{I}$ and a moving reference frame $\mathcal{B}$ attached to the body of the tracked vehicle, as seen in Fig.~\ref{fig:frames_of_reference}.
\begin{figure}[t]
    \centering
\includegraphics[width=\columnwidth]{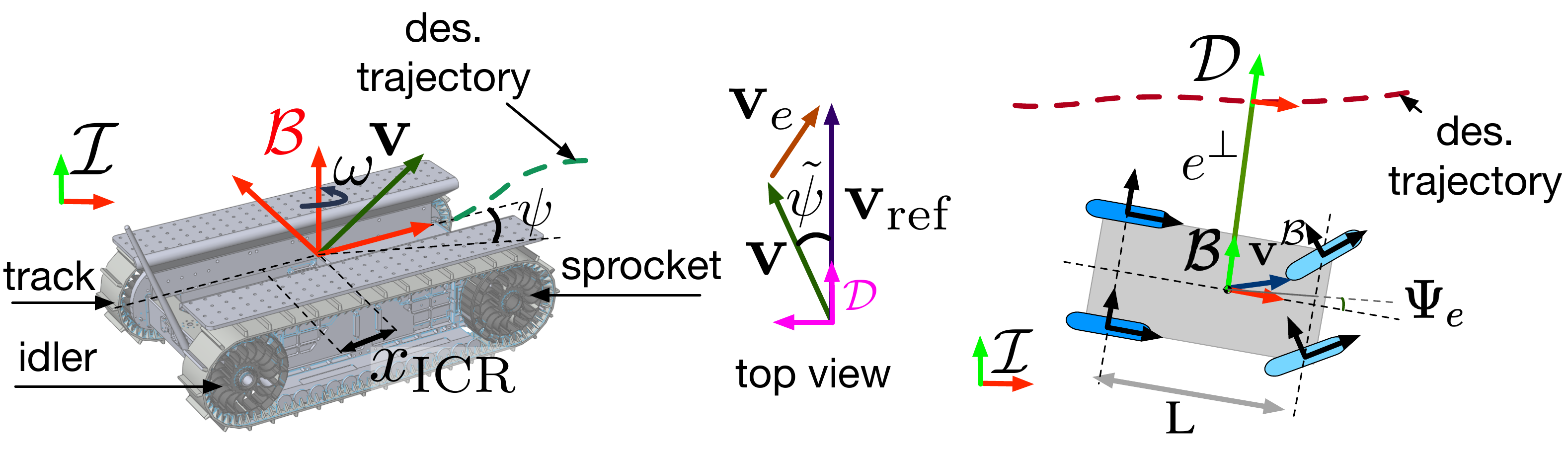}
    \caption{The frames of reference for the tracked vehicle, its corresponding velocities, and the main driving components \revision{(left)}, a velocity vector diagram used for the proof of Theorem~\ref{theorem2} \revision{(middle), and the} car model notations \revision{(right). For both vehicles, we assume the center of mass and the body frame are at the same location.} }
\label{fig:frames_of_reference}
\end{figure}
\revisiontwo{C}onsider the 3-\ac{dof} dynamics model with the generalized coordinates $\mathbf{q}:=[ p_x^\mathcal{I}, p_y^\mathcal{I}, \psi ]\in \mathbb{R}^3$, where $p_x^\mathcal{I}$ and $p_y^\mathcal{I}$ are the inertial positions and $\psi$ is the yaw angle from $\mathcal{I}$ to $\mathcal{B}$, as follows
\begin{equation}
\label{eq:EL}
\mM(\vq) \ddot{\vq} + \mC(\vq, \dot{\vq})\dot{\vq} = \mathbf{B}(\vq) \boldsymbol{\tau}_u + \mathbf{F}_r (\vq, \dot{\vq}),
\end{equation}
where $\mM \in \mathbb{R}^{3 \times 3}$ is the inertia matrix, $\mC(\vq, \dot{\vq}) \in \mathbb{R}^{3 \times 3}$ is the Coriolis and centripetal matrix, $\mathbf{B}(\mathbf{q})  \in \mathbb{R}^{3 \times 2}$ is the control actuation matrix, $\mathbf{F}_r (\vq, \dot{\vq}) \in \mathbb{R}^{3}$ are the dissipative track forces due to surface-to-soil interaction, and $\boldsymbol{\tau}_u  \in \mathbb{R}^{2}$ is the control torque.

Developing a tracking controller for the system modeled using \eqref{eq:EL} is difficult \revisiontwo{due to underactuation}. To address this complexity, previous work~\cite{electronics10020187, BYRNE1995343} introduced a nonholonomic constraint for~\eqref{eq:EL}, which reduces the number of state variables. The following  constrains the ratio of the lateral body velocity $\dot{p}_y^\mathcal{B}$ to the angular velocity $\omega$
\begin{equation}\label{eq:constraint}
\dot{p}_y^\mathcal{B} + x_{\mathrm{ICR}} \omega  = 0,
\end{equation}
where $x_{\mathrm{ICR}}$
is the \ac{icr} \revision{and $\omega = \dot{\psi}$.}
We  embed this constraint into~\eqref{eq:EL}, as follows
\begin{equation} \label{eq:EL_with_constraint}
    \mM(\vq) \ddot{\vq} + \mC(\vq, \dot{\vq})\dot{\vq}  = \mathbf{B}(\mathbf{q}) \boldsymbol{\tau}_u + \mathbf{F}_r (\mathbf{q}, \mathbf{\dot{q}}) + \mathbf{A}(\mathbf{q})^\top \lambda_c,
\end{equation}
with $\lambda_c$ \revision{being} the Lagrange multiplier corresponding to the equality constraint in \eqref{eq:constraint}. 
By assuming $x_\mathrm{ICR}$ \revisiontwo{as} constant, $\mathbf{A}(\mathbf{q})\in\mathbb{R}^{1 \times 3}$ is defined as follows, in which \revision{$\vp$ from \eqref{eq:constraint}} \revision{is} expressed in the $\mathcal{I}$ frame
\begin{equation}\label{eq:nonholonomic_tracked_vehicle}
    \begin{bmatrix}
        - \sin \psi & \cos \psi & x_\mathrm{ICR}
    \end{bmatrix} \cdot
    \begin{bmatrix}
        \dot{p}_x^\mathcal{I} &
        \dot{p}_y^\mathcal{I} &
        \omega
    \end{bmatrix} = \mathbf{A}(\mathbf{q}) \dot{\mathbf{q}} = 0 \revision{.}
\end{equation}
To \revision{remove} the constraint force from  \eqref{eq:EL_with_constraint}, an orthogonal projection operator $\mathbf{S}(\mathbf{q}) \in \mathbb{R}^{3 \times 2}$ is defined, whose columns are in the nullspace of $\mathbf{A}^\top(\mathbf{q})$, and thus $\mathbf{S}(\mathbf{q})^\top \mathbf{A}(\mathbf{q})^\top = \mathbf{0}$ \cite{mistry_operational_2011, aghili_unified_2005}. 
\begin{equation}\label{eq:S_eq}
    \mathbf{S}(\mathbf{q}) =\left[
    \begin{smallmatrix}
        \cos(\psi) & x_{\mathrm{ICR}} \sin(\psi) \\
    \sin(\psi) & -x_{\mathrm{ICR}} \cos(\psi) \\
    0 & 1
    \end{smallmatrix}\right] \revision{.}
\end{equation} %
We \revision{select} this projection operator conveniently to transform the velocities in the $\mathcal{I}$ frame to $\mathbf{v} = [v_x^\mathcal{B}, \omega]^\top$, with $v_x^\mathcal{B}$ being the projection of the inertial velocity onto the body x-forward axis.
The reduced form can be written as~\cite{trajectory_tracking_control}
\begin{equation}\label{eq:dynamics_with_xicr_eq}
\begin{aligned}
& \dot{\mathbf{q}}(t)=\mathbf{S}(\mathbf{q}) \mathbf{v}(t), \\
& \dot{\mathbf{v}}(t)=\widetilde{\mM}^{-1}(\widetilde{\mB}(\mathbf{q})\boldsymbol{\tau}_u  - \widetilde{\mathbf{C}}(\mathbf{q}, \dot{\mathbf{q}}) \vv(t) +\widetilde{\mathbf{F}}_r(\mathbf{q}, \dot{\mathbf{q}})),
\end{aligned}
\end{equation}
with the reduced matrices
$$
\begin{aligned}
\widetilde{\mathbf{M}} & =\mathbf{S}^\top(\mathbf{q}) \mathbf{M S(\mathbf{q})}=\left[\begin{smallmatrix}
m & 0 \\
0 & I_z  + m x_\mathrm{ICR}^2
\end{smallmatrix}\right],  \\  
\widetilde{\mathbf{F}}_r &=\mathbf{S}^\top(\mathbf{q}) \mathbf{F}_r, \quad \widetilde{\mathbf{B}}(\mathbf{q})=\mathbf{S}^\top(\mathbf{q})\mathbf{B}(\mathbf{q}),  \\
\widetilde{\mC}(\vq, \dot{\vq}) &=\mathbf{S}^\top \mathbf{M} \dot{\mathbf{S}}=\left[\begin{smallmatrix}
0 & m x_{\mathrm{ICR}} \omega \\
-m x_\mathrm{ICR} \omega & 0
\end{smallmatrix}\right],
\end{aligned} %
$$
where $m$ is the mass of the robot and $I_z$ is the inertia of the robot about the rotational degree of freedom. 

\subsection{Simplified Vehicle Dynamics Model with Velocity Input}
Due to the limited access to the robot's internal control software, specifically the absence of direct torque command capabilities, we are only able to utilize velocity inputs.
Consequently, we have chosen to simplify the system in~\eqref{eq:dynamics_with_xicr_eq} \revision{with} the velocity modeled as a first-order time delay
\begin{equation}\label{eq:dynamicsAB}
\dot{\mathbf{q}}(t)=\mathbf{S}(\mathbf{q}) \mathbf{v}(t), \quad 
\dot{\mathbf{v}}(t) = \mA \mathbf{v} (t) + \mathbf{B}_\mathrm{n} \mathbf{u}(t),
\end{equation}
where $\mathbf{u} = [u_v, u_\omega]$ are velocity inputs and 
\begin{eqnarray}
\mA = \left[\begin{smallmatrix}
    -\frac{1}{\tau_{v}} & 0 \\
    0 & -\frac{1}{\tau_{\omega}}
    \end{smallmatrix}\right], \quad 
    \mathbf{B}_\mathrm{n} = 
    \left[\begin{smallmatrix}
    \frac{k_1}{\tau_{v}} & 0 \\
    0 & \frac{k_2}{\tau_{\omega}}
    \end{smallmatrix}\right].
\end{eqnarray}
The simplified system is dynamically equivalent to~\eqref{eq:dynamics_with_xicr_eq}. We identify the process gains $k_1$, $k_2$ and the process time constants $\tau_v$, $\tau_\omega$ using system identification on \revision{hardware}.
The robot is symmetric and rotates around the origin, therefore $x_\mathrm{ICR}$ is assumed 0.

\subsection{Adaptive Tracking Controller for \revision{a Tracked Vehicle}}\label{sec:adapt_control_tracked_vehicle}
First, we explain why using a control matrix adaptation is suitable for adapting to longitudinal and rotational slips, as well as the internal dynamics of a tracked vehicle.
Next, we design a composite adaptive controller for the system in~\eqref{eq:dynamicsAB} and prove its exponential convergence to a bounded error ball. Note that our adaptive controller can be applied to any system of the form~\eqref{eq:EL}.

\subsubsection{Motivation for Control Matrix Adaptation}\label{sec:motivation_control_matrix}
The longitudinal slip $\kappa$ is defined~\cite{pacejka2005tire} as
\begin{equation}\label{eq:slip}
\kappa=-\frac{v_x^\mathcal{B}-\Omega_{\mathrm{tr}} r_{\mathrm{tr}}}{v_x^\mathcal{B}},
\end{equation}
where $\Omega_\mathrm{tr}$ is the angular velocity of the tracks, $r_\mathrm{tr}$ is the track wheel radius, and $v_x^\mathcal{B}$ is the projection of the inertial velocity onto the body x-forward
axis.
Let $\Omega_\mathrm{tr} r_\mathrm{tr}$ be our velocity control input $u_v$. Then~\eqref{eq:slip} can be written as $v_x^\mathcal{B} = \frac{1}{1 + \kappa} u_v$. Analyzing the extreme cases, we notice that if $\kappa = 0$ (no longitudinal slip), the velocity of the vehicle will match the velocity input into the tracks. In comparison, if $\kappa \rightarrow \infty$, the forward velocity of the vehicle will tend toward zero. Similar reasoning can be applied to the rotational slip.
Therefore, adapting for a coefficient that multiplies the control input (the track speeds) ensures tracking of the body's forward and angular velocity.

In addition, adapting the control matrix also contributes to compensating for the unknown internal dynamics of the robot, because the velocity control input is the setpoint to an internal proportional-derivative-integral controller, which outputs motor torques to the tracked vehicle. 
Lastly, adapting the control matrix effectively compensates for track degradation, manifested as a slowdown in the sprocket's angular velocity.

\subsubsection{Reference Trajectories}

We define a 2 dimensional \emph{feasible} trajectory characterized by the desired position and velocity $\mathbf{p}_d^\mathcal{I}(t)$, $ \mathbf{v}_d^\mathcal{I}(t)$ in the inertial frame $\mathcal{I}$. The position error is $\mathbf{\tilde{p}}^\mathcal{I} = \mathbf{p}^\mathcal{I} - \mathbf{p}_d^\mathcal{I}(t)$, where $\mathbf{p}^\mathcal{I} = [p_x^\mathcal{I}, p_y^\mathcal{I}]$, and $\psi_{\mathrm{d}}$ is the desired yaw angle.
Let the following reference velocities be
\begin{equation}\label{eq:v_I_error}
\mathbf{v}_{\mathrm{ref}}^\mathcal{I} = \mathbf{v}_d^\mathcal{I} - \mathbf{K}_p \tilde{\mathbf{p}}^\mathcal{I}, \quad v_{\mathrm{ref, x}} =  \begin{bmatrix}
    \cos (\psi)  \\
    \sin (\psi)
\end{bmatrix}  \cdot  \mathbf{v}_{\mathrm{ref}}^\mathcal{I},
\end{equation}
\begin{equation}\label{eq:kinematic_angle}
\begin{aligned}
\omega_{\mathrm{ref}} &= \dot{\psi}_\mathrm{ref} - k_{\mathrm{\psi}}(\psi - \psi_\mathrm{ref}),
\end{aligned}
\end{equation}
where \revision{the reference angle is given as}
\begin{equation}\label{eq:psi_ref}
\psi_\mathrm{ref}  = \left\{\begin{array}{ll}
\arctan\left(\frac{\mathbf{v}_{\mathrm{ ref,y}}^\mathcal{I}}{\mathbf{v}_{\mathrm{ref, x}}^\mathcal{I}} \right), & \text{if } \Vert \mathbf{v}_{\mathrm{ ref}}^\mathcal{I} \Vert_2^2 > v_\mathrm{\epsilon} \\
\psi_\mathrm{d},  &\text{otherwise.}
\end{array}\right.
\end{equation}
Note that the reference trajectory is not fully pre-planned; it includes feedback terms that are only defined during the execution of the trajectory.
Both $\mathbf{K}_p  \in \mathbb{R}^{2 \times 2}$ and $k_{\psi} \in \mathbb{R}$ are positive gains, with $\mathbf{K}_p=\mathrm{diag}(k_{px}, k_{py})$, and $v_\epsilon$ is a small velocity constant used to ensure the robot can track time-varying position trajectories, as well as turn in place.
We define $\mathbf{v}_{\mathrm{ref}} = [v_{\mathrm{ref, x}}, \omega_\mathrm{ref}]^\top$, which is our reference trajectory \revision{further used in the control synthesis}.

\subsubsection{Controller Synthesis}
\label{sec:control_synthesis}
We design a composite adaptive controller $\mathbf{u}(t)$ and show that this composite tracking and adaptation error exponentially converge\revisiontwo{s} to a bounded error ball. 
\revision{First, we start by defining the}  tracking error variable $\mathbf{s}$ as 
\begin{equation}\label{eq:s_definition}
\mathbf{s}=\vv - \vv_\text{ref}=[v_x^\mathcal{B}-v_{\mathrm{ref, x}}, \ \omega-\omega_\mathrm{ref}]^\top.
\end{equation}
\revision{We then derive the tracking controller for the system in~\eqref{eq:dynamicsAB}}
\begin{equation}\label{eq:main_controller}
    \vvu = - (\mathbf{B}_\mathrm{n} + \mphi \hat{\vtheta})^{-1} [\mathbf{K} \mathbf{s} + \mA \vv_\mathrm{ref} - \dot{\vv}_\mathrm{ref}],
\end{equation}
where $\mK \in \mathbb{R}^{2 \times 2}$ is a positive gain matrix, with $\mK = \mathrm{diag}(k_{dx}, k_{dw})$, \revision{
$\mphi \in \mathbb{R}^{2 \times 2 \times n_\theta}$ is the output of the \ac{dnn} basis function \revisiontwo{(\Cref{fig:dnn_architecture})} evaluated with the feature vector $\cE$ and state $\vv$, 
and $\hat{\vtheta} \in \mathbb{R}^{n_\theta}$ is the estimated parameter vector of the true parameter vector $\vtheta$.
Recall from Sec.~\ref{sec:model} that the learned basis function $\mphi$ and the true adaptation parameters $\vtheta$
were introduced to model the disturbance $\mathbf{d} \approx (\mphi \boldsymbol{\theta}) \mathbf{u} = \sum_{i=1}^{n_\theta} \theta_i \mphi \vu$. 
For our model of the skid-steer vehicle, we chose $n_\theta=4$, matching the number of terms in our control matrix $\mB_\mathrm{n}$. Choosing $n_\theta$ too large can introduce redundant parameters and 
choosing $n_\theta$ too small could make the function class insufficiently expressive. 
}

\begin{theorem}\label{theorem1}
By applying the controller in \eqref{eq:main_controller} to the dynamics that evolve according to~\eqref{eq:dynamicsAB}, with the composite adaptation law, for each $i \in [1, n_\theta]$
\begin{align}\label{eq:adapt_law}
    \dhai &= - \lambda \hai - \underbrace{\gamma_i \vvu^\top \mphi_i^\top \mR^{-1} \left(\sum_{i=1}^{n_\theta} \mphi_i \vvu \hai - \vy\right)}_{\revision{\mathrm{predict}}} + \underbrace{\gamma_i \vs^\top \mphi_i \vvu}_{\revision{{\mathrm{track}}}}, \notag \\
   \dot{\gamma}_i &= - 2 \lambda \gamma_i + q_i + \gamma_i \vvu^\top \mphi_i^\top \mR^{-1}\mphi_i \vvu \gamma_i,
\end{align}
where $\gamma_i > 0,\ q_i > 0$, for each  $i \in [1, n_\theta]$,
then the tracking errors $\mathbf{s}$ and the parameter error $\tilde{\vtheta}$ will exponentially converge to a bounded error ball.
\end{theorem}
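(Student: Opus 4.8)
The plan is to run a single Lyapunov argument on the stacked tracking/estimation error, using the specific algebraic form of the $\mathrm{track}$, $\mathrm{predict}$, and exponential-forgetting terms in \eqref{eq:adapt_law} to cancel the indefinite cross-couplings so that only a dissipation inequality $\dot V \le -2\alpha V + \beta$ survives. First I would compute the closed-loop error dynamics. Writing $\tilde{\vtheta} := \hat{\vtheta} - \vtheta$ and $\mathbf{g} := (\mphi\tilde{\vtheta})\vu$, substituting the controller \eqref{eq:main_controller} into \eqref{eq:dynamicsAB} and using $\vy = (\mphi\vtheta)\vu + \vdelta$ from \eqref{Eq:Phidef}, the reference terms telescope through the identity $(\mathbf{B}_\mathrm{n} + \mphi\hat{\vtheta})\vu = -(\mK\vs + \mA\vv_\mathrm{ref} - \dot{\vv}_\mathrm{ref})$, leaving $\dot{\vs} = (\mA - \mK)\vs - \mathbf{g} + \vdelta$; here $\mA - \mK = -\mathrm{diag}(\tfrac1{\tau_v}+k_{dx},\,\tfrac1{\tau_\omega}+k_{dw})$ is negative definite, which is the source of tracking dissipation. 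The same substitution shows the residual inside $\mathrm{predict}$ reduces to $\mathbf{g} - \vdelta$, so $\mathrm{predict}$ is exactly gradient descent on the $\mR^{-1}$-weighted prediction error in $\tilde{\vtheta}$.

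Next I would establish that the adaptation gains stay in a compact positive interval, $0 < \gamma_{\min} \le \gamma_i(t) \le \gamma_{\max}$. The lower bound is immediate: dropping the nonnegative quadratic term in $\dot\gamma_i$ gives $\dot\gamma_i \ge -2\lambda\gamma_i + q_i$, so $\gamma_i$ cannot fall below $\min\{\gamma_i(0),\,q_i/(2\lambda)\}$. For the upper bound, the spectral normalization in \Cref{alg:training-v2} keeps $\mphi$ bounded on the closed-loop state set and $\vu$ bounded, so $\vu^\top\mphi_i^\top\mR^{-1}\mphi_i\vu \le \bar a$; then $\dot\gamma_i \le \bar a\gamma_i^2 - 2\lambda\gamma_i + q_i$, and choosing $\lambda$ large enough that this upward parabola has real roots confines $\gamma_i$ below its larger root. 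These two bounds make the weighted norm $\sum_i\gamma_i^{-1}\tilde\theta_i^2$ equivalent to $\|\tilde{\vtheta}\|^2$, so that proving convergence of the former proves it for the latter.

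With this in hand I would take the Lyapunov candidate $V = \tfrac12\|\vs\|^2 + \tfrac12\sum_{i=1}^{n_\theta}\gamma_i^{-1}\tilde\theta_i^2$ and differentiate along trajectories, using $\tfrac{d}{dt}\gamma_i^{-1} = 2\lambda\gamma_i^{-1} - q_i\gamma_i^{-2} - \vu^\top\mphi_i^\top\mR^{-1}\mphi_i\vu$. Three cancellations occur: the $\mathrm{track}$ term $\gamma_i\vs^\top\mphi_i\vu$ summed against $\tilde\theta_i/\gamma_i$ reproduces $\vs^\top\mathbf{g}$ and cancels the $-\vs^\top\mathbf{g}$ coming from $\dot\vs$; the $+2\lambda\gamma_i^{-1}\tilde\theta_i^2$ coming from $\tfrac{d}{dt}\gamma_i^{-1}$ is cancelled by the $-\lambda\hai$ forgetting term (up to the cross term $-\lambda\gamma_i^{-1}\tilde\theta_i\theta_i$); and $\mathrm{predict}$ supplies $-\mathbf{g}^\top\mR^{-1}\mathbf{g} + \mathbf{g}^\top\mR^{-1}\vdelta$ while the gain dynamics add $-\tfrac12\sum_i\tilde\theta_i^2\,\vu^\top\mphi_i^\top\mR^{-1}\mphi_i\vu$, both sign-definite. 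What remains is
\begin{align*}
\dot V \le\; & -\vs^\top(\mK-\mA)\vs - \tfrac12\textstyle\sum_i q_i\gamma_i^{-2}\tilde\theta_i^2 - \mathbf{g}^\top\mR^{-1}\mathbf{g} \\
& + \vs^\top\vdelta + \mathbf{g}^\top\mR^{-1}\vdelta - \lambda\textstyle\sum_i\gamma_i^{-1}\tilde\theta_i\theta_i - \textstyle\sum_i\gamma_i^{-1}\tilde\theta_i\dot\theta_i .
\end{align*}
Young's inequality on the four indefinite terms — splitting $\mathbf{g}^\top\mR^{-1}\vdelta$ against $-\mathbf{g}^\top\mR^{-1}\mathbf{g}$, $\vs^\top\vdelta$ against $-\vs^\top(\mK-\mA)\vs$, and the $\theta_i,\dot\theta_i$ products against $-\tfrac12\sum_i q_i\gamma_i^{-2}\tilde\theta_i^2$ using $\gamma_i \le \gamma_{\max}$ — absorbs their quadratic parts and leaves a constant $\beta$ scaling with $\sup_t\|\vdelta\|^2$, $\sup_t\|\dot{\vtheta}\|^2$, and the forgetting bias $\|\vtheta\|^2$, so that $\dot V \le -2\alpha V + \beta$ with $\alpha$ set by $\lambda_{\min}(\mK-\mA)$ and $\min_i q_i/\gamma_{\max}^2$. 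The comparison lemma gives $V(t) \le e^{-2\alpha t}V(0) + \beta/(2\alpha)$, which together with the gain bounds yields $\|\vs(t)\|^2 + \gamma_{\max}^{-1}\|\tilde{\vtheta}(t)\|^2 \le 2e^{-2\alpha t}V(0) + \beta/\alpha$, i.e.\ exponential convergence of $(\vs,\tilde{\vtheta})$ to a ball whose radius scales with the representation error $\vdelta$ and the parameter drift $\dot{\vtheta}$.

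The main obstacle is the boundedness/well-posedness bootstrap implicit in the second step: the upper bound on $\gamma_i$ and the boundedness of $\vu$ both presuppose that $\mathbf{B}_\mathrm{n}+\mphi\hat{\vtheta}$ stays invertible and uniformly well-conditioned, which in turn needs $\hat{\vtheta}$ to remain in a bounded set — exactly what the Lyapunov step concludes. I would resolve this with a continuation argument: on the maximal interval on which $\mathbf{B}_\mathrm{n}+\mphi\hat{\vtheta}$ is well-conditioned, the dissipation inequality keeps $\vs$ and $\tilde{\vtheta}$ (hence $\vv=\vs+\vv_\mathrm{ref}$, hence the DNN input, hence $\mphi$ and $\vu$) uniformly bounded, so under the controllability assumption (or an explicit projection of $\hat{\vtheta}$ onto a compact set in \eqref{eq:adapt_law}) the conditioning cannot degenerate and the interval is all of $[0,\infty)$. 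A secondary subtlety is that the per-coordinate scalar gains $\gamma_i$ do not diagonalize the quadratic prediction error $\mathbf{g}^\top\mR^{-1}\mathbf{g}$, whose off-diagonal terms are not matched by the $\gamma_i^{-1}$ weighting in $V$; this is harmless only because $\mathbf{g}^\top\mR^{-1}\mathbf{g}$ enters $\dot V$ with a negative sign, so no exact matching is ever needed.
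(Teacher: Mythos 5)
Your proposal is correct and follows essentially the same route as the paper's proof: the identical composite Lyapunov function $\vs^\top\vs + \sum_i \gamma_i^{-1}\tilde\theta_i^2$, the same closed-loop error dynamics $\dot\vs = (\mA-\mK)\vs - (\mphi\tilde\vtheta)\vu + \vdelta$, the same cancellation of the tracking cross-term by the $\mathrm{track}$ component and of the $2\lambda\gamma_i^{-1}$ term by the forgetting term, and the same Comparison Lemma conclusion. Your treatment is in fact somewhat more careful than the paper's in two places it leaves implicit --- the explicit two-sided bound on $\gamma_i$ (which the paper needs but only asserts via the existence of $\alpha$ with $q_i\gamma_i^{-2} \geq 2\alpha\gamma_i^{-1}$) and the continuation argument for invertibility of $\mathbf{B}_\mathrm{n}+\mphi\hat{\vtheta}$ --- and you absorb the disturbance terms quadratically via Young's inequality where the paper bounds them linearly by a constant $\bar d$, a cosmetic difference in the final radius of the error ball.
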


\begin{proof}
Defining the true control matrix as $\mathbf{B} := \mathbf{B}_\mathrm{n} + \mphi \boldsymbol{\theta}$, we obtain the following closed loop system \revision{using} \eqref{eq:dynamicsAB}
$$
\dot\vv = \mA \vv - (\mathbf{B}_\mathrm{n} + \mphi \boldsymbol{\theta}) (\mathbf{B}_\mathrm{n} + \mphi \hat{\boldsymbol{\theta}})^{-1} [\mathbf{K} \mathbf{s}+ \mA \vv_\mathrm{ref} - \dot\vv_\mathrm{ref}] + \vdelta,
$$
\revision{where $\boldsymbol{\delta}$ is a representation error, previously introduced in \eqref{Eq:Phidef_original}.}
Let $\tilde{\boldsymbol{\theta}} = \hat{\boldsymbol{\theta}} - \boldsymbol{\theta}$ be the error adaptation vector. Further, using the composite variable $\vs$ in~\eqref{eq:s_definition}, the closed-loop system becomes
$$
\mathbf{\dot{s}}  = \mA \mathbf{s} - \mathbf{K}\mathbf{s} - (\mphi\tilde{\boldsymbol{\theta}})\mathbf{u} + \vdelta
              = \mA\mathbf{s} - \mathbf{K}\mathbf{s} -   \sum_{i=1}^{n_\theta} \mphi_i \mathbf{u} \tilde{\theta}_i + \vdelta.
$$
For the prediction term in \eqref{eq:adapt_law}, 
we \revision{compute} the dynamics residual derivative $\mathbf{y}$ determined for the bounded and adversarial noise $\bar{\boldsymbol{\epsilon}}$ as
\begin{equation}\label{eq:dyn_res_tracked_vehicle}
\vy=\mathrm{LPF(s)}\dot{\revision{\vv}} - \mathbf{f}(\revision{\vv}, \vu, t) =(\mphi  \boldsymbol{\theta})\vvu+\bar{\boldsymbol{\epsilon}},
\end{equation}
where premultiplying the noisy measurement $\dot{\revision{\vv}}$  by $\mathrm{LPF}(s)$ with the Laplace transform variable $s$ indicates low-pass filtering.
Using the Lyapunov function
$
   \mathcal{V} = \vs^\top \vs + \sum^{n_\theta}_{i = 1} \tilde{\theta}\revision{_i} \gamma_i^{-1} \tilde{\theta}\revision{_i},
$
we compute its derivative as follows
\begin{equation}
\begin{aligned}
       \dot{\mathcal{V}} &= 2 \vs^\top \dot{\vs} + 2 \sum_{i=1}^{n_\theta} \tai \gamma_i^{-1} \dtai + \sum_{i=1}^{n_\theta} \tai \frac{d}{dt}\left(\gamma_i^{-1}\right) \tai\\
    &= -2\vs^\top\Big[(\mK - \mA)\vs + \sum_{i=1}^{n_\theta}\mphi_i  \vvu \tai \Big] \\ &+2 \sum_{i=1}^{n_\theta}\gamma_i^{-1}\tai (\gamma_i \vs^\top\mphi_i \vvu - \gamma_i\vvu^\top \mphi_i^\top\mR^{-1} \sum_{j=1}^{n_\theta} \mphi_j \mathbf{u} \tilde{\theta}_j  - \lambda\tai)\\ 
        &+ \sum_{i=1}^{n_\theta} \tai(2\gamma_i^{-1}\lambda - \gamma^{-1}_i q_i \gamma^{-1}_i - \vvu^\top \mphi_i^\top \mR^{-1}\mphi_i \vvu)\tai\\
        &+ 2\underbrace{
        \left(\vs^\top\vdelta +
        \sum_{i=1}^{n_\theta} \tai \left(\vvu^\top \mphi_i^\top \mR^{-1}\bar{\epsilon} -  \gamma_i^{-1} \lambda\theta_i - \gamma_i^{-1} \dot{\theta}_i\right)\right)}_{\text{error terms}}\nonumber.\\
\end{aligned}
\end{equation}
After further manipulation, the time derivative of the Lyapunov function becomes
\begin{equation}\label{eq:derivative_Lyapunov}
\begin{aligned}
    \dot{\mathcal{V}} &= -2\vs^\top(\mK - \mA) \vs - \sum_{i=1}^{n_\theta} \tai (q_i \gamma^{-2}_i + \vvu^\top \mphi_i^\top \mR^{-1}\mphi_i \vvu)\tai  \\
  & -2 \left(\sum_{i=1}^{n_\theta} \tilde{\theta}_i \vvu^\top \mphi_i^\top\right)\mR^{-1}\left( \sum_{j=1}^{n_\theta} \mphi_j \vvu \tilde{\theta}_j\right) +\text{error terms}. \\
\end{aligned}
\end{equation}
\revisiontwo{Next, we will bound the terms in~\eqref{eq:derivative_Lyapunov} as follows.} There exists $\alpha \in \mathbb{R}_+$ such that 
\begin{equation}
    \begin{aligned}
        -2(\mK - \mA) &\preceq -2\alpha \mathbf{I}, \\
        \revisiontwo{-} \left( q_i \gamma^{-2}_i + \vvu^\top \mphi_i^\top \mR^{-1}\mphi_i \vvu  \right) & \le -2\alpha \gamma_i^{-1}, \quad \forall i \in [1, n_\theta].
    \end{aligned}
\end{equation}
We assume that $\Vert \vdelta \Vert $, $\Vert\bar{\boldsymbol{\epsilon}}\Vert$, and $\dot{\theta}_i$ are small and bounded, and that the true value $\theta_i$ is bounded.
Furthermore, the \ac{dnn} $\mphi_i$ is bounded since we use spectral normalization and the input domain is bounded. 
We then define an upper bound for the error terms as
\begin{equation}
    \begin{aligned}
         \bar{d}
         = \sup_t \left(\abs{\vdelta} + 
          \left|\sum_{i=1}^{n_\theta} \left(\vvu^\top \mphi_i^\top \mR^{-1}\bar{\boldsymbol{\epsilon}} -  \gamma_i^{-1} \lambda\theta_i - \gamma_i^{-1} \dot{\theta}_i \right)\right|\right),
    \end{aligned}
\end{equation}
\revision{Note that this is a conservative estimate (the worst-case disturbance of all future time $t$), and hence can be made smaller using a shorter time range. Furthermore, even for a relatively large value of $\mphi$, $\bar{d}$ can be made small using a larger value of $\mR$ and a smaller value of $\bar{\boldsymbol{\epsilon}}$.} We define the matrix $\mathcal{M}$, for $i \in [1, n_\theta]$ 
\begin{equation}
    \begin{aligned}
        \mathcal{M} = \begin{bmatrix}
            \mathbf{I} & \mathbf{0} \\
            \mathbf{0} & \mathrm{diag}(\gamma_i^{-1})
        \end{bmatrix}. %
    \end{aligned}
\end{equation}
By applying the Comparison Lemma~\cite{khalil_nonlinear_2002} \revision{and using a contraction theory like argument~\cite{lohmiller1998contraction,tsukamoto2021contraction}}, we can then prove the tracking error and adaptation parameters \revision{error} exponentially converge to the bounded error ball
\begin{equation}\label{eq:error_bound_s}
    \begin{aligned}
        \lim_{t \to \infty} \abs*{\begin{bmatrix}
            \vs \\
            \tai
        \end{bmatrix}} \le \frac{\bar{d}}{\alpha\lambda_{\min}(\mathcal{M})} :=\revision{\bar{b}},
    \end{aligned}
\end{equation}
where $\lambda_{\min}$ is the minimum eigenvalue of a square matrix.
It follows from \cite{khalil_nonlinear_2002,chung2013phase} that the input-to-state stability (ISS) and bounded input and bounded output (BIBO) stability in the sense of finite-gain $\mathcal{L}_p$~\cite{khalil_nonlinear_2002} is proven for $\bar{b}\in \mathcal{L}_{pe}$, resulting in its bounded output $\mathbf{s}, \boldsymbol{\tilde{\theta}}\in \mathcal{L}_{pe}$, where the $\mathcal{L}_p$ norm in the extended space $\mathcal{L}_{p e}, p \in[1, \infty]$ is 
$$
\begin{aligned}
& \left\|(\mathbf{u})_\tau\right\|_{\mathcal{L}_p}=\left(\int_0^\tau\|\mathbf{u}(t)\|^p d t\right)^{1 / p}<\infty, \quad p \in[1, \infty) \\
& \left\|(\mathbf{u})_\tau\right\|_{\mathcal{L}_{\infty}}=\sup _{t \geq 0}\left\|(\mathbf{u}(t))_\tau\right\|<\infty
\end{aligned}
$$
and $(\mathbf{u}(t))_\tau$ is a truncation of $\mathbf{u}(t)$, i.e., $(\mathbf{u}(t))_\tau=\mathbf{0}$ for $t \geq \tau$, $\tau \in[0, \infty)$ while $(\mathbf{u}(t))_\tau=\mathbf{u}(t)$ for $0 \leq t \leq \tau$. 
\end{proof}
\revision{
The exponential convergence proof in~\Cref{theorem1} shows that the online algorithm (\Cref{alg:adaptation_at_execution_time}) will drive $\boldsymbol{\hat{\theta}}$ to a value within a bounded error ball of the offline least-squares solution used in \revisiontwo{the} meta-learning algorithm (\Cref{alg:training-v2}) for a sufficiently long window of data.}
In contrast with \cite{slotine_book, O_Connell_2022}, \eqref{eq:main_controller} and~\eqref{eq:adapt_law} admits adaptation through the $\mathbf{B}$ control influence matrix. For stability purposes, under the assumption of a diagonal $\mgamma$, the adaptation law equation resembles the \revisiontwo{Riccati} equation of the $\mathcal{H}_\infty$ filtering \cite{doi:https://doi.org/10.1002/0470045345.ch11}. This tends to increase the adaptation gain, making it more responsive to measurements.

The parameters of the adaptation law~\eqref{eq:adapt_law} are $\boldsymbol{\Gamma}=\mathrm{diag}(\gamma_1,\dots, \gamma_{n_{\theta}})$, $\mathbf{R}$, $\lambda$, and $\mathbf{Q} = \mathrm{diag}(q_i)$.
$\boldsymbol{\Gamma}$ is a positive definite matrix that influences the convergence rate of the estimator, and a sufficiently large initial $\boldsymbol{\Gamma}_0$ should be chosen to obtain a suitable convergence rate. $\mathbf{Q}$ is a positive definite gain added to the gain update law,
$\lambda$ is a damping factor, \revision{and} $\mathbf{R}$ is a gain added to the prediction component of the adaptation law. Without this gain, the prediction term and the tracking error-based term could not be tuned separately.

Next, we assume the adaptation gain \revisiontwo{$\Gamma$} in~\eqref{eq:adapt_law} has cross terms. Under this more general setting, we prove the exponential convergence of both $\tilde{\vtheta}$ and $\vs$ to a bounded error ball. 
\begin{proposition} \label{col:new_adapt_law_controller}
By applying the controller in~\eqref{eq:main_controller} to the dynamics in~\eqref{eq:dynamicsAB}, with the composite adaptation law
\begin{subequations} 
\label{eq:adapt_law_new}
\begin{eqnarray}
\dot{\hat{\vtheta}} = -\lambda\hat{\vtheta} - \underbrace{\mgamma \mH^\top \mR^{-1} (\mH \hat{\vtheta} - \vy)}_{\revision{\mathrm{predict}}} + \underbrace{\mgamma\mH^\top \vs}_{\revision{\mathrm{track}}}, \label{eq:adapt_law_new_a}\\
\dot{\mgamma} = -2\lambda \mgamma + \mQ - \mgamma \mH^\top \mR^{-1} \mH \mgamma\label{eq:adapt_law_new_b}, 
\end{eqnarray}
\end{subequations}
where $\lambda > 0$, $\mgamma \in \mathbb{R}^{n_\theta \times n_\theta}$, $\mQ ^{n_\theta \times n_\theta}$ and $\mR\in\mathbb{R}^{2 \times 2}$ are positive definite matrices and \revision{$\mathbf{H}\in \mathbb{R}^{n \times n_\theta}$}, the tracking errors $\mathbf{s}$ and the parameter error $\tilde{\vtheta}$ exponentially converge to a bounded error ball defined in~\cite{O_Connell_2022}.
\end{proposition}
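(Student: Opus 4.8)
The plan is to replay the Lyapunov argument from the proof of \Cref{theorem1}, but carry the adaptation gain as a full matrix throughout rather than a diagonal one, matching the Kalman-filter-like update of \cite{O_Connell_2022}. First I would form the closed-loop error dynamics exactly as before: with the true control matrix written as $\mathbf{B} := \mathbf{B}_\mathrm{n} + \mphi\vtheta$, substituting \eqref{eq:main_controller} into \eqref{eq:dynamicsAB} gives $\dot{\vs} = (\mA - \mK)\vs - (\mphi\tilde{\vtheta})\vu + \vdelta$ with $\tilde{\vtheta} = \hat{\vtheta} - \vtheta$. The key bookkeeping identity is $(\mphi\tilde{\vtheta})\vu = \sum_{i=1}^{n_\theta}\mphi_i\vu\,\tilde\theta_i = \mH\tilde{\vtheta}$ once we set $\mH := [\,\mphi_1\vu \ \cdots \ \mphi_{n_\theta}\vu\,]$; with the same identification the dynamics residual \eqref{eq:dyn_res_tracked_vehicle} reads $\vy = \mH\vtheta + \bar{\boldsymbol{\epsilon}}$, so $\mH\hat{\vtheta} - \vy = \mH\tilde{\vtheta} - \bar{\boldsymbol{\epsilon}}$, which casts \eqref{eq:adapt_law_new} into the standard composite-adaptation form.

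Next I would take the Lyapunov function $\mathcal{V} = \vs^\top\vs + \tilde{\vtheta}^\top\mgamma^{-1}\tilde{\vtheta}$ and differentiate along the closed loop, using $\tfrac{d}{dt}(\mgamma^{-1}) = -\mgamma^{-1}\dot{\mgamma}\,\mgamma^{-1}$ together with \eqref{eq:adapt_law_new_b} to obtain $\tfrac{d}{dt}(\mgamma^{-1}) = 2\lambda\mgamma^{-1} - \mgamma^{-1}\mQ\mgamma^{-1} + \mH^\top\mR^{-1}\mH$. Substituting $\dot{\vs}$, $\dot{\hat{\vtheta}}$ from \eqref{eq:adapt_law_new_a}, and this expression, the tracking cross term $-2\vs^\top\mH\tilde{\vtheta}$ cancels against $2\tilde{\vtheta}^\top\mH^\top\vs$, the two opposite-signed $2\lambda\tilde{\vtheta}^\top\mgamma^{-1}\tilde{\vtheta}$ terms (one from $\tfrac{d}{dt}(\mgamma^{-1})$, one from the $-\lambda\hat{\vtheta}$ forgetting term) cancel, and the $\mR^{-1}$-weighted prediction terms combine, leaving
\[
\dot{\mathcal{V}} = -2\vs^\top(\mK - \mA)\vs - \tilde{\vtheta}^\top\!\left(\mgamma^{-1}\mQ\mgamma^{-1} + \mH^\top\mR^{-1}\mH\right)\!\tilde{\vtheta} + \text{error terms},
\]
where the error terms collect $2\vs^\top\vdelta$, $2\tilde{\vtheta}^\top\mH^\top\mR^{-1}\bar{\boldsymbol{\epsilon}}$, $-2\lambda\tilde{\vtheta}^\top\mgamma^{-1}\vtheta$, and $-2\tilde{\vtheta}^\top\mgamma^{-1}\dot{\vtheta}$ --- the matrix analogue of \eqref{eq:derivative_Lyapunov}. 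From here the argument is structurally identical to \Cref{theorem1}: under the boundedness assumptions on $\vdelta$, $\bar{\boldsymbol{\epsilon}}$, $\vtheta$, $\dot{\vtheta}$ and on $\mphi$ (hence on $\mH$, via spectral normalization and a bounded operating domain), the error terms are dominated by a constant $\bar{d}$ times $\|[\,\vs^\top\ \tilde{\vtheta}^\top\,]^\top\|$; choosing $\alpha > 0$ with $-2(\mK - \mA) \preceq -2\alpha\mathbf{I}$ and $\mgamma^{-1}\mQ\mgamma^{-1} + \mH^\top\mR^{-1}\mH \succeq 2\alpha\,\mgamma^{-1}$ yields $\dot{\mathcal{V}} \le -2\alpha\mathcal{V} + 2\bar{d}\sqrt{\mathcal{V}/\lambda_{\min}(\mathcal{M})}$ with $\mathcal{M} = \mathrm{diag}(\mathbf{I},\mgamma^{-1})$, and the Comparison Lemma \cite{khalil_nonlinear_2002} together with the contraction-theoretic argument of \cite{lohmiller1998contraction, tsukamoto2021contraction} then gives exponential convergence of $(\vs,\tilde{\vtheta})$ to the bounded ball reported in \cite{O_Connell_2022}.

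The main obstacle I anticipate is not this Lyapunov bookkeeping but two structural points that the diagonal case of \Cref{theorem1} sidesteps. First, $\mathcal{V}$ is a legitimate Lyapunov function only while $\mgamma \succ 0$, so I would need to show that the Riccati-type flow \eqref{eq:adapt_law_new_b} keeps $\mgamma$ uniformly positive definite and uniformly bounded above for all $t$ given $\mgamma(0) \succ 0$: the $-2\lambda\mgamma$ and $-\mgamma\mH^\top\mR^{-1}\mH\mgamma$ terms push the eigenvalues down while $\mQ \succ 0$ prevents them from collapsing, but this requires a precise comparison argument on $\lambda_{\min}(\mgamma)$ and $\lambda_{\max}(\mgamma)$ --- the same property established for the analogous law in \cite{O_Connell_2022}. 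Second, the contraction rate $\alpha$ must be chosen uniformly in time even though $\mH$ is time-varying; this is where the spectral-normalization bound on $\mphi$ and the uniform bounds on $\mgamma$ are indispensable, and it is the step requiring the most care.
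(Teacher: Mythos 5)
Your proposal is correct and rests on exactly the same key step as the paper's proof: defining $\mH = [\mphi_1\vu \ \cdots \ \mphi_{n_\theta}\vu]$ so that the multiplicative disturbance $(\mphi\tilde{\vtheta})\vu$ becomes the additive form $\mH\tilde{\vtheta}$, after which the composite-adaptation convergence argument of~\cite{O_Connell_2022} applies. The paper stops there and cites that reference, whereas you additionally write out the Lyapunov bookkeeping (and flag the positive-definiteness of $\mgamma$ along the Riccati flow), which is a faithful expansion of what the citation covers rather than a different route.
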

\begin{proof}
\revision{We define the matrix $\mathbf{H} = \left[\mathbf{h}_1 \ldots \mathbf{h}_{n_\theta}\right]\in \mathbb{R}^{n \times n_\theta}$, where the columns $\mathbf{h}_i=\boldsymbol{\Phi}_i \mathbf{u}$, for each $i \in\left[1, n_\theta\right]$.
}
By observing that \revision{the disturbance in~\eqref{Eq:Phidef} can be defined as $\mathbf{d} \approx $}$\sum_{i=1}^{n_\theta} \mphi_i\theta_i\mathbf{u} := \mH \vtheta$, the proof of exponential convergence \revision{for additive disturbance adaptation} from~\cite{O_Connell_2022} can be directly applicable \revision{for the multiplicative disturbance adaptation. }
\end{proof}
Note that the \revisiontwo{exponential} convergence proof for \revisiontwo{Proposition}~\ref{col:new_adapt_law_controller} using Lyapunov theory holds for both when the last term of the gain adaptation \revisiontwo{law}~\eqref{eq:adapt_law_new_b} is positive and when the last term is negative.
A negative sign makes the update law~\eqref{eq:adapt_law_new_b} resemble the covariance update law of the Kalman filter. \revision{However, using a positive sign will make the closed-loop system converge faster \revisiontwo{for the same constants}. Our controller in \Cref{theorem1} behaves similar to the second case with the assumption that $\mgamma$ is diagonal.}

Lastly, for completeness, we show exponential convergence to a bounded error ball for the position and the attitude error.

\begin{theorem}\label{theorem2}
By Theorem 1, $\vs$ converges to a bounded error ball~\eqref{eq:error_bound_s} defined as $\bar{b}$. Therefore, we hierarchically show that $\psi \rightarrow \psi_\mathrm{ref}$ and  $\vp \rightarrow \vp_d$ exponentially fast to a bounded error ball for bounded reference velocity.
\end{theorem}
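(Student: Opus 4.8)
The plan is to exploit the hierarchical (cascade) structure that is already built into the reference-trajectory construction of \Cref{sec:control_synthesis}: \Cref{theorem1} regulates the body-frame velocity error $\vs=\vv-\vv_\mathrm{ref}$ to the ball $\bar b$ of~\eqref{eq:error_bound_s}, and $\vv$ in turn drives the kinematics $\dot{\vq}=\mathbf{S}(\vq)\vv$ from~\eqref{eq:dynamicsAB}. With $x_\mathrm{ICR}=0$ for the symmetric tracked vehicle, this kinematic layer reads $\dot{\vp}^\mathcal{I}=v_x^\mathcal{B}\,\mathbf{r}(\psi)$ and $\dot{\psi}=\omega$, where $\mathbf{r}(\psi):=[\cos\psi,\sin\psi]^\top$. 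I would first close the yaw loop, then the position loop, and finally compose the two input-to-state-stable (ISS) estimates; the velocity vector diagram in \Cref{fig:frames_of_reference} is the geometric picture behind the position step.

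\textbf{Yaw subsystem.} Writing $s_\omega$ for the second component of $\vs$ and $\tilde\psi=\psi-\psi_\mathrm{ref}$, the definition of $\omega_\mathrm{ref}$ in~\eqref{eq:kinematic_angle} yields the error dynamics $\dot{\tilde\psi}=\omega-\dot\psi_\mathrm{ref}=-k_\psi\tilde\psi+s_\omega$. Since $k_\psi>0$ and $|s_\omega|\le\bar b$, the Comparison Lemma~\cite{khalil_nonlinear_2002} (equivalently, a one-dimensional contraction argument~\cite{lohmiller1998contraction,tsukamoto2021contraction}) gives exponential convergence of $\tilde\psi$ to the ball of radius $\bar b/k_\psi$. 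This step requires $\dot\psi_\mathrm{ref}$ to be well defined and bounded: differentiating the arctangent branch of~\eqref{eq:psi_ref} produces terms of the form $\dot{\vv}_\mathrm{ref}^\mathcal{I}/\|\vv_\mathrm{ref}^\mathcal{I}\|$, so I would invoke the velocity threshold $v_\epsilon$, which keeps $\|\vv_\mathrm{ref}^\mathcal{I}\|$ bounded away from zero on that branch, and handle the turn-in-place case $\psi_\mathrm{ref}=\psi_d$ separately (there $\dot\psi_\mathrm{ref}=\dot\psi_d$ is bounded by the bounded-reference assumption).

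\textbf{Position subsystem.} Let $s_v$ be the first component of $\vs$. Using $v_x^\mathcal{B}=v_{\mathrm{ref,x}}+s_v$ together with $v_{\mathrm{ref,x}}\,\mathbf{r}(\psi)=\mathbf{r}(\psi)\mathbf{r}(\psi)^\top\vv_\mathrm{ref}^\mathcal{I}$ from~\eqref{eq:v_I_error}, and substituting $\vv_\mathrm{ref}^\mathcal{I}=\vv_d^\mathcal{I}-\mathbf{K}_p\tilde{\vp}^\mathcal{I}$, a short computation gives
\[
\dot{\tilde{\vp}}^\mathcal{I} = -\mathbf{K}_p\tilde{\vp}^\mathcal{I} + s_v\,\mathbf{r}(\psi) + \big(\mathbf{r}(\psi)\mathbf{r}(\psi)^\top-\mathbf{I}\big)\vv_\mathrm{ref}^\mathcal{I}.
\]
The perturbation is bounded by $|s_v|+\|\vv_\mathrm{ref}^\mathcal{I}\|\,|\sin\tilde\psi|\le\bar b+\|\vv_\mathrm{ref}^\mathcal{I}\|\,|\sin\tilde\psi|$, and from the yaw step $|\sin\tilde\psi|$ is ultimately of order $\bar b/k_\psi$. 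Treating $-\mathbf{K}_p\tilde{\vp}^\mathcal{I}$ as the contracting nominal part and applying the Comparison Lemma once more shows $\tilde{\vp}^\mathcal{I}$ converges exponentially to a ball whose radius scales like $\big(\bar b+\|\vv_\mathrm{ref}^\mathcal{I}\|_\infty\,\bar b/k_\psi\big)/\lambda_{\min}(\mathbf{K}_p)$; chaining the velocity $\to$ yaw $\to$ position ISS estimates then establishes the claim.

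I expect the main obstacle to be the self-coupling of the position loop: $\vv_\mathrm{ref}^\mathcal{I}=\vv_d^\mathcal{I}-\mathbf{K}_p\tilde{\vp}^\mathcal{I}$ depends on the very state $\tilde{\vp}^\mathcal{I}$ being bounded, so the disturbance entering the position subsystem is not exogenous and the textbook cascade lemma does not apply verbatim. Making it rigorous requires either a small-gain condition — $\bar b/k_\psi$ small enough that the $|\sin\tilde\psi|\,\|\mathbf{K}_p\tilde{\vp}^\mathcal{I}\|$ feedback cannot overcome the rate $\lambda_{\min}(\mathbf{K}_p)$ — or an a priori invariant-set argument showing $\tilde{\vp}^\mathcal{I}$ (hence $\vv_\mathrm{ref}^\mathcal{I}$) stays bounded, which then closes the yaw argument's boundedness assumption a posteriori. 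The secondary technical nuisance is the non-smoothness of $\psi_\mathrm{ref}$ at the $v_\epsilon$ switching surface and the arctangent branch cut, which I would address by working on the branch $\|\vv_\mathrm{ref}^\mathcal{I}\|>v_\epsilon$ and arguing that the set-valued/switching behavior contributes only a bounded, intermittent perturbation absorbed into the error ball.
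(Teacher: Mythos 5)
Your proposal is correct and follows essentially the same route as the paper: close the yaw loop first via $\dot{\tilde\psi}+k_\psi\tilde\psi\le\bar b$ and the Comparison Lemma, then bound the position error through the transverse velocity component of order $\|\vv_{\mathrm{ref}}^\mathcal{I}\|\,|\sin\tilde\psi|$ (your inertial-frame projection $\mathbf{r}\mathbf{r}^\top-\mathbf{I}$ is the same geometric fact as the paper's frame-$\mathcal{D}$ velocity diagram), and apply the Comparison Lemma once more. The self-coupling through $\vv_{\mathrm{ref}}^\mathcal{I}=\vv_d^\mathcal{I}-\mathbf{K}_p\tilde{\vp}^\mathcal{I}$ that you correctly flag is dispatched in the paper simply by the explicit hypothesis of a bounded reference velocity ($\|\vv_{\mathrm{ref}}^\mathcal{I}\|\le v_{\max}$), so your small-gain/invariant-set discussion is a more careful treatment of an assumption the paper makes outright.
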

\begin{proof}
We define the error $\tilde{\psi} = \psi - \psi_{\mathrm{ref}}$. Using \eqref{eq:kinematic_angle}, we obtain
$
\dot{\tilde{\psi}} + k_{\psi} \tilde{\psi} \leq \bar{b},
$
and with the Comparison Lemma, we prove that the error $\tilde{\psi}$ converges to the bounded error ball $\frac{\bar{b}}{k_{\psi}}$.
To give intuition about the following position tracking error proof, we use Fig.~\ref{fig:frames_of_reference}. We define $\vv = \vv_{\mathrm{ref}} + \vv_e$ in vector form, where $\vv_e$ is the velocity error.
We further express these quantities in the reference frame $\mathcal{D}$ and note that, by Theorem~\ref{theorem1}, we have proved the convergence $v_x^\mathcal{B} = v_{\mathrm{ref,x}} + \bar{b}$ as $t\rightarrow \infty$. Therefore, we obtain
$$
\begin{aligned}
\vv_e^\mathcal{D} & = 
- \begin{bmatrix} 
v_{\mathrm{ref, x}}^\mathcal{D} \\
0
\end{bmatrix} 
+ \begin{bmatrix}
\cos(\tilde{\psi}) \\ 
\sin(\tilde{\psi}) 
\end{bmatrix} 
\left(\begin{bmatrix}
\cos(\tilde{\psi}) \\ \sin(\tilde{\psi}) 
\end{bmatrix} \cdot  
\begin{bmatrix}
v_{\mathrm{ref, x}}^\mathcal{D} \\
0 
\end{bmatrix} + \bar{b}\right)  \\
& =  v_{\mathrm{ref, x}}^\mathcal{D} \begin{bmatrix}
\cos^2(\tilde{\psi}) - 1 \\
\sin(\tilde{\psi}) \cos(\tilde{\psi})
\end{bmatrix} + \bar{b} \begin{bmatrix}
\cos(\tilde{\psi}) \\ \sin(\tilde{\psi}) 
\end{bmatrix}. 
\end{aligned}
$$
We compute and bound the norm, as follows
\begin{equation}\label{eq:bound_ve}
\begin{aligned}
\Vert \vv_e^\mathcal{D}  \Vert_2
& \leq  v_{\mathrm{max}}\left|\sin\left(\frac{\bar{b}}{k_\psi}\right)\right| \sqrt{2}  + \bar{b},
\end{aligned}
\end{equation}
where $v_{\mathrm{max}}$ is our assumption for the existence of an upper bound for the reference velocity. From~\eqref{eq:v_I_error} and~\eqref{eq:bound_ve}, it is straightforward to see that the position error is also bounded \revisiontwo{using the Comparison Lemma}.
\end{proof}

\revision{
\subsection{Ackermann Steering Vehicle Dynamics Model}\label{sec:modelling_ackermann}
We define a fixed reference frame $\mathcal{I}$, a moving reference frame $\mathcal{B}$ attached to the center of mass of the car, and a desired frame $\mathcal{D}$ attached to the desired trajectory as seen in Fig.~\ref{fig:frames_of_reference}.
Similar to \eqref{eq:nonholonomic_tracked_vehicle}, a non-holonomic constraint holds: 
$ 
    \begin{bmatrix}
        \sin \psi & -\cos \psi & 0
    \end{bmatrix} \cdot
    \begin{bmatrix}
        \dot{p}_x^\mathcal{I} &
        \dot{p}_y^\mathcal{I} &
       0
    \end{bmatrix},
$ 
where $\dot{p}_x^\mathcal{I}$ and $\dot{p}_y^\mathcal{I} $ are the velocities in the inertial frame $\mathcal{I}$ and $\psi$ is the yaw angle from $\mathcal{B}$ to $\mathcal{I}$.
For the tracked vehicle discussed in~\Cref{sec:tracked_vehicle_model}, the instantaneous center of rotation $x_{\mathrm{ICR}}$ is assumed to be 0 with $\dot{p}_y^\mathcal{B}=0$ in~\eqref{eq:constraint} because the tracked vehicle is not designed for highly aggressive maneuvers.}

\revision{
A car, on the other hand, can be drifting, and thus the side velocity plays a much more important role, which is considered in our control design.
Let
$v_x^\mathcal{B}$ and $v_y^\mathcal{B}$ be the linear velocities in the body frame and $\omega$ the angular velocity around the vertical z-axis of the $\mathcal{B}$ frame.
The dynamic model can be expressed as~\cite{car_darpa}
\begin{equation}\label{eq:dynamics_traxxas}
\begin{gathered}
m(\dot{v}_x^\mathcal{B} - \omega v_y^\mathcal{B}) =  F_{x r} +F_{x f} \cos (u_\delta) - F_{y f} \sin (u_\delta), \\
m(\dot{v}_y^\mathcal{B} + \omega v_x^\mathcal{B}) = F_{y r}+F_{x f} \sin (u_\delta)+F_{y f} \cos (u_\delta), \\
I_z \dot{\omega}=  \frac{L}{2} F_{x f} \sin (u_\delta) + \frac{L}{2} F_{y f} \cos (u_\delta) - \frac{L}{2} F_{yr},
\end{gathered}
\end{equation}
where $L$ is the wheelbase length, $F_{xf}$ and 
$F_{xr}$ are the front and rear tire forward forces, $m$ is the vehicle mass, $I_z$ is the vehicle inertia about the vertical axis intersecting the center of mass, and the lateral forces are
$F_{y f} \approx C_y \alpha_f$, 
$F_{y r}  \approx C_y \alpha_r$, where $C_y$ is the tire cornering stiffness
\cite{wong2001theory}, and $\alpha_r$ and $\alpha_f$ are two tire slip angles, defined as in~\cite{car_darpa}.
The tire cornering stiffness coefficient is terrain- and wheel-dependent. Either an accurate estimate or online adaptation is necessary \revisiontwo{when designing a tracking controller}.
Note that a more slippery ground has a lower $C_y$.
}

\revision{
We decouple the controller for the longitudinal velocity from the controller for the lateral and angular velocity and apply our \namepaper~algorithm to the lateral and angular motion.
Note that the forward velocity dynamics is nonlinear. \revisiontwo{T}herefore, for simplicity, similar to the tracked vehicle, we model the forward velocity as a first-order time delay system $\dot{v}_x^\mathcal{B} = - \tau_v^{-1} (v_x^\mathcal{B} - u_v)$, with the time constant $\tau_v$ identified through system identification. \revisiontwo{We then} design an exponentially stabilizing PD tracking controller for this linear system.
For the lateral and angular motion, we linearize~\eqref{eq:dynamics_traxxas} around a zero steering angle and assume a small tire slip angle.
The resulting linear time-varying system dynamics in the $\mathcal{B}$ frame is written by using
$\vx := [v_y^\mathcal{B}, \omega]$ and the disturbance model~\eqref{Eq:Phidef}
\begin{equation}\label{eq:simplified_dynamics_traxxas}
\dot{\vx} = \mA(v_x^\mathcal{B}(t))  \vx + \mB_\mathrm{n} u_\delta + \left(\mphi(\vx, \cE) \vtheta\right) u_\delta,
\end{equation}
where 
$$
\mA(v_x^\mathcal{B}(t)) = \left[\begin{smallmatrix}
-\frac{2C_y}{m v_x^\mathcal{B}} & -v_x^\mathcal{B}\\
0 & -\frac{L^2 C_{y}}{2 v_x^\mathcal{B} I_z} 
\end{smallmatrix}\right], 
\mB_\mathrm{n} = \left[\begin{smallmatrix}
\frac{C_y}{m}\\
\frac{LC_y}{2I_z}
\end{smallmatrix}\right], 
$$
and $
 \mphi(\vx, \cE) = \begin{bmatrix}
\boldsymbol{\phi}_1 &
\boldsymbol{\phi}_2
\end{bmatrix} ^ \top
$,
with the estimated \revisiontwo{component} $\hat{\vtheta}$ adapted online.
The definition of $\mphi(\vx, \cE)\in\mathbb{R}^{2 \times n_\theta}$ and $\vtheta \in \mathbb{R}^{n_\theta}$ \revisiontwo{are} the same as their definitions for the tracked vehicle. The adaptation component account\revisiontwo{s} for model mismatches as well as for the linearization errors in~\eqref{eq:dynamics_traxxas}.
\subsection{Adaptive Tracking Controller for Ackermann Steering}\label{sec:ackermann_controller}
We apply \namepaper~to compensate for the sideslip when the robot is performing fast turning maneuvers. Thus, our adaptive control algorithm is applied only to the lateral and angular controller, although it can be applied for the linear velocity, as well.
We define the path error $\mathbf{e}=[e^\parallel, e^\perp]$ with the longitudinal and lateral error components, as seen in Fig.~\ref{fig:frames_of_reference}
\begin{equation}\label{eq:e}
\mathbf{e}  := \vp^\mathcal{D} - \vp_d^\mathcal{D}  = \mathbf{R}_\mathcal{I}^\mathcal{D}(\vp^\mathcal{I} - \mathbf{O}_\mathcal{D}^\mathcal{I}) 
\end{equation}
where $\mathbf{O}_\mathcal{D}^\mathcal{I}$ is the origin of the desired frame $\mathcal{D}$ expressed in $\mathcal{I}$, $\vp=[p_x, p_y]$ is the position of the robot, $\vp_d=[p_{x,d}, p_{y, d}]$ is the desired position from the trajectory, and $\mR_\mathcal{I}^\mathcal{D}$ is the rotation from the inertial frame $\mathcal{I}$ to the desired frame $\mathcal{D}$.
Next, we compute the time derivative of the path error~\eqref{eq:e} as follows
\begin{equation}\label{eq:e_perp_e_parallel}
\begin{aligned}
\begin{bmatrix}
\dot{e}^\parallel \\
\dot{e}^\perp
\end{bmatrix} 
 =\mR_\mathcal{I}^\mathcal{D} \mR_\mathcal{B}^\mathcal{I}\vv^\mathcal{B} -\vv_d^\mathcal{D}  + \dot{\mR}_\mathcal{I}^\mathcal{D} (\mR_\mathcal{D}^\mathcal{I} \vp^\mathcal{D}), \\
\end{aligned}
\end{equation}
where $\mR_\mathcal{B}^\mathcal{I}$ is the rotation from the body frame $\mathcal{B}$ to the inertial frame $\mathcal{I}$, $\vv_d^\mathcal{D} = [v_{d,x}^\mathcal{D}, 0]$ is the derivative of the desired position taken in $\mathcal{I}$, and expressed in $\mathcal{D}$, and $\vv^\mathcal{B} =[v_x^\mathcal{B}, v_y^\mathcal{B}]$ is the velocity of the robot in the $\mathcal{B}$ frame.
From~\eqref{eq:e_perp_e_parallel}, the perpendicular error derivative becomes
\begin{equation}\label{eq:e_dot}
\begin{aligned}
\dot{e}^\perp
= \begin{bmatrix}
\sin(\psi_e) \\
\cos (\psi_e)
\end{bmatrix} \cdot \vv^\mathcal{B}
- \dot{\psi}_d e^\parallel,
\end{aligned}
\end{equation}
where $\psi_e = \psi - \psi_d$ is the angle error between the actual orientation and the desired orientation.
In addition, each component in~\eqref{eq:e} is
\begin{equation}\label{eq:e_dot2}
e^\perp
= \begin{bmatrix}
\sin(\psi_e) \\
\cos (\psi_e)
\end{bmatrix} \cdot \tilde{\vp}^\mathcal{B},
\quad  e^\parallel
= \begin{bmatrix}
\cos(\psi_e) \\
-\sin (\psi_e)
\end{bmatrix} \cdot \tilde{\vp}^\mathcal{B},
\end{equation}
where $\tilde{\vp}^\mathcal{B} = \vp^\mathcal{B} - \vp_d^\mathcal{B}$ is the position error expressed in $\mathcal{B}$. 
Because we model the dynamics decoupled and linearized, from~\eqref{eq:e_dot}, we obtain 
\begin{equation}
\label{eq:e_perp}
\dot{e}^\perp = v_{y}^\mathcal{B} + v_x^\mathcal{B}\psi_e. \\
\end{equation}
Further, we differentiate~\eqref{eq:e_perp} and substitute \eqref{eq:simplified_dynamics_traxxas}, as follows
\begin{equation}\label{eq:e_ddot}
\ddot{e}^\perp  = -\frac{2C_y}{m v_x^\mathcal{B}} v_y^\mathcal{B} - v_x^\mathcal{B}\omega \revisiontwo{+} \frac{C_y}{m} u_\delta  \revisiontwo{+} \left(\boldsymbol{\phi}_1 \vtheta\right) u_\delta + \dot{v}^\mathcal{B}_x \psi_e + v_x^\mathcal{B}\omega_e.
\end{equation}
Now we design a tracking controller for the lateral motion of the vehicle.
Let $s^\perp = \dot{e}^\perp + k_p e^\perp$,  with $k_p \in \mathbb{R}^+$ a positive constant.
Then, using~\eqref{eq:e_ddot}, $\dot{s}^\perp$ is
$$\label{eq:s_dot}
\dot{s}^\perp = -\frac{2C_y}{m v_x^\mathcal{B}} v_y^\mathcal{B} - v_x^\mathcal{B}\omega \revisiontwo{+} \frac{C_y}{m} u_\delta  \revisiontwo{+} \left(\boldsymbol{\phi}_1 \vtheta\right) u_\delta + \dot{v}^\mathcal{B}_x \psi_e + v_x^\mathcal{B}\omega_e + k_p \dot{e}^\perp.
$$
We then design the following adaptive controller
\begin{equation}\label{eq:controller_traxxas_adapt}
u_\delta = \revisiontwo{-}\hat{b}_n^{-1}\left( k_v s^\perp - \frac{2C_y}{m v_x^\mathcal{B}} v_y^\mathcal{B} +  \dot{v}^\mathcal{B}_x \psi_e - v_x^\mathcal{B}\omega_d +  k_p \dot{e}^\perp
\right),
\end{equation}
where $\hat{b}_n= \frac{C_y}{m} + \boldsymbol{\phi}_1 \hat{\vtheta}$.
Letting $\tilde{\vtheta} = \hat{\vtheta} - \vtheta$, the closed-loop system of ${s}^\perp$ becomes
\begin{equation}\label{eq:s_traxxas}
\dot{s}^\perp  + k_v s^\perp= \left(\boldsymbol{\phi}_1 \tilde{\vtheta}\right) u_\delta.
\end{equation}
Note that the controller in~\eqref{eq:controller_traxxas_adapt} resembles~\eqref{eq:main_controller}, which was derived for the tracked vehicle. 
Using the same proof as in~\Cref{sec:control_synthesis}, we show that the tracking error $s^\perp$ and $\tilde{\vtheta}$ exponentially converge to a bounded error ball.
Next, we analyze the stability of the internal states $v_y^\mathcal{B}(t)$ and $\psi_e(t)$ under the exact error definitions~\eqref{eq:e_dot}-\eqref{eq:e_dot2}.}
\begin{theorem}\label{collorary2}
\revision{
    If $|s^\perp(t)| \le e^{-\gamma t}|s_0^\perp| + \frac{\epsilon}{\gamma}$, for positive constants $\gamma$ and $\epsilon$, \revisiontwo{and $s_0$ being the initial value}, 
    under the local assumption of $\revisiontwo{-}\frac{\pi}{2} < \psi_e < \frac{\pi}{2}$ and a positive $v_x^\mathcal{B}$, then $\tilde{p}^\mathcal{B}_y$, $v^\mathcal{B}_y$, and $\psi_e$ exponentially tend to bounds.
    }
\end{theorem}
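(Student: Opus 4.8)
The argument is hierarchical, in the same spirit as the proof of \Cref{theorem2}: the hypothesized bound $|s^\perp(t)| \le e^{-\gamma t}|s_0^\perp| + \epsilon/\gamma$ is treated as an exponentially‑decaying‑plus‑bounded input and propagated through a sequence of subsystems, with the local assumptions $-\tfrac{\pi}{2} < \psi_e < \tfrac{\pi}{2}$ and $v_x^\mathcal{B} > 0$ ensuring the nonlinear kinematic terms in \eqref{eq:e_dot2} and \eqref{eq:e_perp_e_parallel} stay well‑conditioned. I would organize it into three propagation steps plus a final reconstruction.

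\emph{Step 1 (cross‑track error and its rate).} By definition $s^\perp = \dot e^\perp + k_p e^\perp$ with $k_p > 0$, i.e. $\dot e^\perp = -k_p e^\perp + s^\perp$. Applying the Comparison Lemma~\cite{khalil_nonlinear_2002} to this stable scalar filter with the hypothesis on $s^\perp$ yields $|e^\perp(t)| \le c_0\, e^{-\min(k_p,\gamma)t} + O(\epsilon/(k_p\gamma))$ for a constant $c_0$ depending on the initial data, so $e^\perp$ is globally bounded and converges exponentially to a bounded ball; consequently $\dot e^\perp = s^\perp - k_p e^\perp$ inherits the same property. Moreover the decoupled forward‑velocity loop is an exponentially stabilizing PD controller on the stable first‑order system $\dot v_x^\mathcal{B} = -\tau_v^{-1}(v_x^\mathcal{B}-u_v)$, so $v_x^\mathcal{B}$ and $\dot v_x^\mathcal{B}$ track bounded references and stay bounded and positive, and the composite‑adaptation analysis behind \eqref{eq:s_traxxas} (identical to \Cref{sec:control_synthesis}) gives a bounded $\tilde{\vtheta}$, hence a bounded $\hat b_n = \tfrac{C_y}{m} + \boldsymbol{\phi}_1\hat{\vtheta}$.

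\emph{Step 2 (internal dynamics).} Substituting the controller \eqref{eq:controller_traxxas_adapt} into the linearized lateral/yaw model \eqref{eq:simplified_dynamics_traxxas}, and adjoining $\dot\psi_e = \omega - \dot\psi_d$ together with the $e^\parallel$ kinematics from \eqref{eq:e_perp_e_parallel}, yields a closed linear time‑varying system in the internal state $(e^\parallel, v_y^\mathcal{B}, \omega, \psi_e)$ whose exogenous inputs — $s^\perp$, $e^\perp$, $\dot e^\perp$, the references $\dot\psi_d, v_x^\mathcal{B}, \dot v_x^\mathcal{B}$, and the adaptation error — are all bounded and (up to a constant ball) exponentially convergent by Step 1. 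Its drift is uniformly Hurwitz: the side‑slip and yaw self‑feedback coefficients $-2C_y/(m v_x^\mathcal{B})$ and $-L^2 C_y/(2 v_x^\mathcal{B} I_z)$ are strictly negative and bounded away from $0$ because $C_y > 0$ and $v_x^\mathcal{B}$ is bounded and positive, while the $k_v s^\perp$ and $k_p \dot e^\perp$ terms injected by \eqref{eq:controller_traxxas_adapt} (plus the longitudinal PD gain) supply the damping on the $\psi_e$ and $e^\parallel$ channels. A common quadratic Lyapunov function for this LTV system then gives input‑to‑state stability, so $e^\parallel$, $v_y^\mathcal{B}$, $\omega$ (hence $\omega_e$), and $\psi_e$ are bounded and exponentially tend to bounds; the local bound $|\psi_e| < \tfrac{\pi}{2}$ is used both to justify the small‑angle linearization and to verify forward invariance of the region in which the estimate is carried out.

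\emph{Step 3 (lateral position error) and the main obstacle.} By the exact identity \eqref{eq:e_dot2}, $[\,e^\parallel,\, e^\perp\,]^\top$ and $\tilde{\vp}^\mathcal{B}$ are related by the planar rotation $\mathbf{R}(\psi_e)$, so $\|\tilde{\vp}^\mathcal{B}\| = \sqrt{(e^\parallel)^2 + (e^\perp)^2}$; since $e^\parallel$ and $e^\perp$ are bounded and exponentially convergent (Steps 1--2), so is $|\tilde p_y^\mathcal{B}| \le \|\tilde{\vp}^\mathcal{B}\|$, completing the claim. The crux is Step 2: because $u_\delta$ in \eqref{eq:controller_traxxas_adapt} feeds $v_y^\mathcal{B}$ and $\psi_e$ back into the internal dynamics, this is a genuine feedback loop rather than an open‑loop cascade, so it is not enough that the frozen‑time eigenvalues are stable — one must exhibit a single time‑independent Lyapunov function certifying uniform exponential stability of the internal LTV system, which is in essence a zero‑dynamics computation (set $s^\perp \equiv 0$, so $v_y^\mathcal{B} \approx -v_x^\mathcal{B}\psi_e$, and check the reduced $(\psi_e,\omega)$ dynamics) and where the local assumptions and a small‑gain‑type consistency check between the Step 1 and Step 2 ISS gains are unavoidable.
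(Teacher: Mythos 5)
Your Step~1 matches the paper, but your Step~2 contains a genuine gap, and it is the step your whole argument hangs on. You propose to close the loop by assembling the internal state $(e^\parallel, v_y^\mathcal{B}, \omega, \psi_e)$ into an LTV system and invoking ``a common quadratic Lyapunov function'' to get ISS, but you never exhibit such a function or otherwise certify uniform exponential stability. The evidence you offer --- that the diagonal self-feedback coefficients $-2C_y/(m v_x^\mathcal{B})$ and $-L^2C_y/(2v_x^\mathcal{B} I_z)$ are negative, and that the $k_v s^\perp$ and $k_p\dot e^\perp$ terms ``supply the damping'' on the $\psi_e$ and $e^\parallel$ channels --- does not establish that the drift matrix is Hurwitz (negative diagonal entries do not imply this), let alone uniform LTV stability, a distinction you yourself flag. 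The controller \eqref{eq:controller_traxxas_adapt} only regulates the single output $s^\perp$; whether the residual zero dynamics of that output are stable is exactly the open question, and you acknowledge it as ``the main obstacle'' without resolving it. As written, the conclusion that $v_y^\mathcal{B}$, $\psi_e$, and $e^\parallel$ converge is therefore asserted rather than proven, and Step~3 (which needs boundedness of $e^\parallel$) inherits the same gap.

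The paper avoids this entire difficulty by never analyzing the closed-loop internal dynamics at all: it works purely with the kinematic identities. From the forward-velocity loop it takes $\tilde{p}^\mathcal{B}_x \approx 0$, reducing \eqref{eq:e_dot2} and \eqref{eq:e_dot} to $e^\perp \approx \cos\psi_e\,\tilde{p}^\mathcal{B}_y$ and $\dot e^\perp \approx \sin\psi_e\,(v_x^\mathcal{B} + \dot\psi_d\tilde{p}^\mathcal{B}_y) + \cos\psi_e\, v_y^\mathcal{B}$. It then reads off $\tilde{p}^\mathcal{B}_y$ algebraically from the first identity by dividing by $\inf_t\cos\psi_e = \bar\psi$, obtains $v_y^\mathcal{B}$ from an assumed exponential-plus-Lipschitz-disturbance form of the decay of $\tilde{p}^\mathcal{B}_y$, and finally isolates $|\sin\psi_e|$ from the second identity via the triangle inequality, dividing by a positive lower bound on $v_x^\mathcal{B}$; the local assumptions $|\psi_e|<\pi/2$ and $v_x^\mathcal{B}>0$ enter precisely to make those two divisions legitimate. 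If you want to pursue your ISS-of-the-internal-dynamics route, you would need to actually carry out the zero-dynamics computation you sketch (set $s^\perp\equiv 0$, substitute $v_y^\mathcal{B}\approx -(v_x^\mathcal{B}\psi_e + k_p e^\perp)$ from \eqref{eq:e_perp}, and verify stability of the reduced $(\psi_e,\omega)$ subsystem under \eqref{eq:controller_traxxas_adapt}) together with the small-gain consistency check; until that is done, the chain from $s^\perp$ to the three claimed bounds is broken.
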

\begin{proof} 
\revision{
    Our forward velocity controller ensures $v_x^\mathcal{B}$ converges to the desired forward velocity, as shown in~\Cref{theorem1} for the tracked vehicle.
    Thus, we can approximate $\tilde{p}^\mathcal{B}_x \approx 0$ in~\eqref{eq:e_dot2}. Hence,~\eqref{eq:e_dot} and~\eqref{eq:e_dot2} are simplified as
    \begin{equation} \label{eq:e_perp_simplified} 
        e^\perp  \approx \cos\psi_e \tilde{p}^\mathcal{B}_y,  \quad 
        \dot{e}^\perp \approx \sin\psi_e (v^\mathcal{B}_x + \dot{\psi}_d \tilde{p}^\mathcal{B}_y) + \cos\psi_e v^\mathcal{B}_y.
    \end{equation}
    Note that under no disturbance ($\epsilon \approx 0$) \revisiontwo{and} since $s^\perp = \dot{e}^\perp + k_pe^\perp$, $e^\perp$ and $\dot{e}^\perp$ exponentially converge to $0$. Assuming a feasible reference trajectory (nonzero desired side velocity, $v^\mathcal{B}_{y,d}$), since $\cos\psi_e$ and $v_x^\mathcal{B}$ are nonzero values, we have $\tilde{p}^\mathcal{B}_y$, $\tilde{v}^\mathcal{B}_y$, and $\psi_e$ converge to 0. 
    If $\epsilon$ is a small nonzero value, assuming $\inf_t(\cos\psi_e) = \bar{\psi}$, we can show that $|\tilde{p}^\mathcal{B}_y|$ exponentially converge\revisiontwo{s} to a small error bound, as follows
    \begin{align}
        \lim_{t \to \infty} |\tilde{p}^\mathcal{B}_y| &\le \frac{\epsilon}{\bar{\psi} k_p\gamma}.
    \end{align}
    We further assume that $|\tilde{p}^\mathcal{B}_y(t)| \approx e^{-\gamma_p t}|\tilde{p}^\mathcal{B}_y(0)| + \frac{|d(t)|}{k_p\gamma}$, where \revisiontwo{$\gamma_p$ is a positive constant and} $d(t)$ is a function with a small Lipschitz constant $\epsilon_y$. 
    With this assumption, we can show $v^\mathcal{B}_y$ exponentially converge\revisiontwo{s}  to the bound $\frac{\epsilon_y}{k_p\gamma}$.
    Then, assuming $\inf_tv^\mathcal{B}_x = \bar{v}$, with positive $\bar{v}$, we apply the triangle inequality for $\dot{e}^\perp$ as follows
    \begin{align}
        |\sin\psi_e (v^\mathcal{B}_x + \dot{\psi}_d \tilde{p}^\mathcal{B}_y)| &\le |\dot{e}^\perp| + |\cos\psi_e v^\mathcal{B}_y|.
    \end{align}
    Taking the limit and denoting $\bar{v} - \frac{\epsilon\sup_t|\dot{\psi}_d|}{\bar{\psi}k_p\gamma}$ as $\underbar{$v$}$, we show that $\psi_e$ exponentially converges to a bounded error as
    \begin{align}
        \lim_{t \to \infty} |\psi_e| \le \arcsin{\left(\frac{2\epsilon}{\underbar{$v$}\gamma} + \frac{\bar{\psi}\epsilon_y}{\underbar{$v$}k_p\gamma}\right)}. 
    \end{align}
    Note that $\gamma$ and $k_p$ can be chosen to make the error bounds sufficiently small. \revisiontwo{The proof above is based on~\eqref{eq:e_dot}. Using the simplified version of~\eqref{eq:e_perp}, the bound simplifies to $\lim_{t \to \infty} |\psi_e| \le \frac{2\epsilon}{\underbar{$v$}\gamma} + \frac{\epsilon_y}{\underbar{$v$}k_p\gamma}.$}
    }
\end{proof}

\section{Empirical Results: Analysis of VFM Suitability}
\label{sec:interprebility}

For our empirical work, we selected DINO V1~\cite{caron2021emerging} as the VFM.
DINO maps a high-resolution red-green-blue (RGB) image to a lower-resolution image where each pixel is a high-dimensional feature vector that depends on the \emph{entire} input image, not just the corresponding input patch.
More precisely,
let $\xi \in \N$ be the patch dimension and $\xi_f \in \N$ the feature vector dimension.
Given an RGB image $\mathcal{I}_{\mathrm{RGB}}$, the transformation is
\begin{equation}\label{eq:rgb_vfm}
\mathcal{I}_{\mathrm{RGB}}: h \times w \times 3 \rightarrow \mathcal{I}_{\mathrm{VFM}}: \floor*{\frac h \xi} \times \floor*{\frac w \xi} \times \xi_f, 
\end{equation}
where $h$ and $w$ are the image height and width.
We then extract prominent patches from $\mathcal{I}_{\mathrm{VFM}}$ to form the terrain representation $\cE$, which will be further used in the $\mphi$ from~\eqref{Eq:Phidef}.
\revision{For our experiments, we select a set of patches on right and left of the tracks/wheels of the vehicle, as emphasized in Fig.~\ref{fig:concept_overview}.}

DINO is optimized for a self-supervised learning objective and was shown to yield feature mappings useful for a variety of downstream tasks.
This VFM is trained on the \revision{ImageNet} dataset, which \revision{also} includes diverse ground terrains but mainly in the context of buildings, plants \revisiontwo{or} landscapes, instead of terrain-only images.
Therefore, in this section, we verify that DINO is able to clearly discriminate between different terrain types in terrain-only images before deploying it in our control setting.
\begin{figure}[t!]
    \centering
\includegraphics[width=\columnwidth]{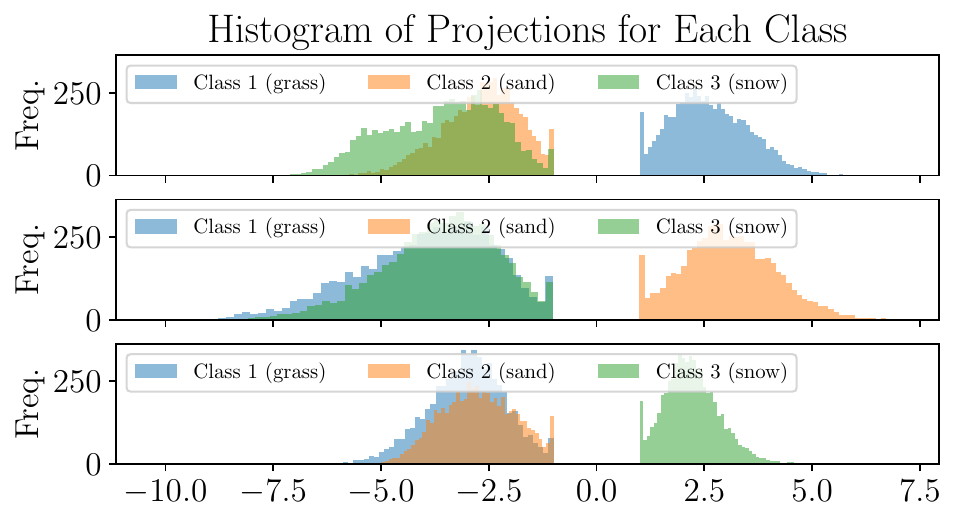}
    \caption{DINO VFM discriminative ability for different terrains. We show the histograms of the projection values onto the separating hyperplane normal computed using Support Vector Classifier for 3 sets of classes with 5 images each (each row presents the separation margin between one class type and the other 2 classes). \revisiontwo{Note that the spikes at -1 and 1 are an artifact of the high dimensionality and the small dataset we used.}
    }
    \label{fig:projections_svc}
\end{figure}

We first measure DINO's discriminative ability by examining the margins of linear classifiers between terrain classes in the high-dimensional feature space $\mathbb{R}^{\xi_f}$.
We consider three terrain types: grass, sand, and snow.
We collect five example images for each class and convert each image to a set of feature vectors using DINO.
Then, using the known class labels,
we fit a multi-class linear classifier for the feature vectors using the One-vs-Rest Support Vector Classifier (OVR-SVC) method~\cite{708428}.
We then project the feature vectors onto the one-vs-rest separating hyperplane normals.
Let the separating hyperplane have the equation $\vw_h\cdot \vx + b_h = 0$, where $\vw_h\in \mathbb{R}^{\xi_f}$ is the vector normal to hyperplane and $\mathbf{b}_h \in \mathbb{R}$ is the bias term.
Let $\cE$ be represented by just one patch, and thus have size $\xi_f$.
The projected patch $\cE$ onto the separating hyperplane normal is defined as $p_h = \vw_h \cdot \cE$. 
The histogram of these projected values for each patch in the image is shown in Fig.~\ref{fig:projections_svc}.
By comparing the SVC margin (the separation between -1 and 1) to the width of the histograms, we confirm that the classes are highly separable.

We next examine the distribution of the features across a sequence of images, taken while navigating from flagstone\revisiontwo{s} (irregular-shaped flat rocks) to gravel in the Mars Yard~\cite{mars_yard} at NASA Jet Propulsion Laboratory (JPL).
The top row of Fig.~\ref{fig:sequencial_images_dino} displays 8 out of a total of 45 images extracted from a video.
Each image is processed through the DINO \ac{vfm}, yielding $1200$ patches of dimension $\xi_f=384$ per image (computed using~\eqref{eq:rgb_vfm}).
We apply OVR-SVC on the patches from one flagstone and one gravel image and project all patches from our chosen 8 images onto the SVC separating hyperplane normal.
This projection reveals a bimodal distribution in the $3^{\mathrm{rd}}$ and $4^{\mathrm{th}}$ images due to the presence of both flagstones and gravel.
In the bottom subplot of Fig.~\ref{fig:sequencial_images_dino}, we simulate a scenario where the robot traverses the area covered in all 45 images sequentially.
For each image, we focus on a central patch of size $\xi_f$ and project these features onto the separating hyperplane normal.
This projection shows a consistent and continuous trend as the robot transitions from flagstone to gravel surfaces.
This observation ensures the continuity of the VFM with respect to the camera motion.

Overall, these results provide positive empirical evidence that the DINO VFM is suitable for fine-grained discrimination of terrain types in images containing only terrain, and thus suitable for use in our setting. 

\begin{figure}[t]
    \centering\includegraphics[width=0.8\columnwidth]{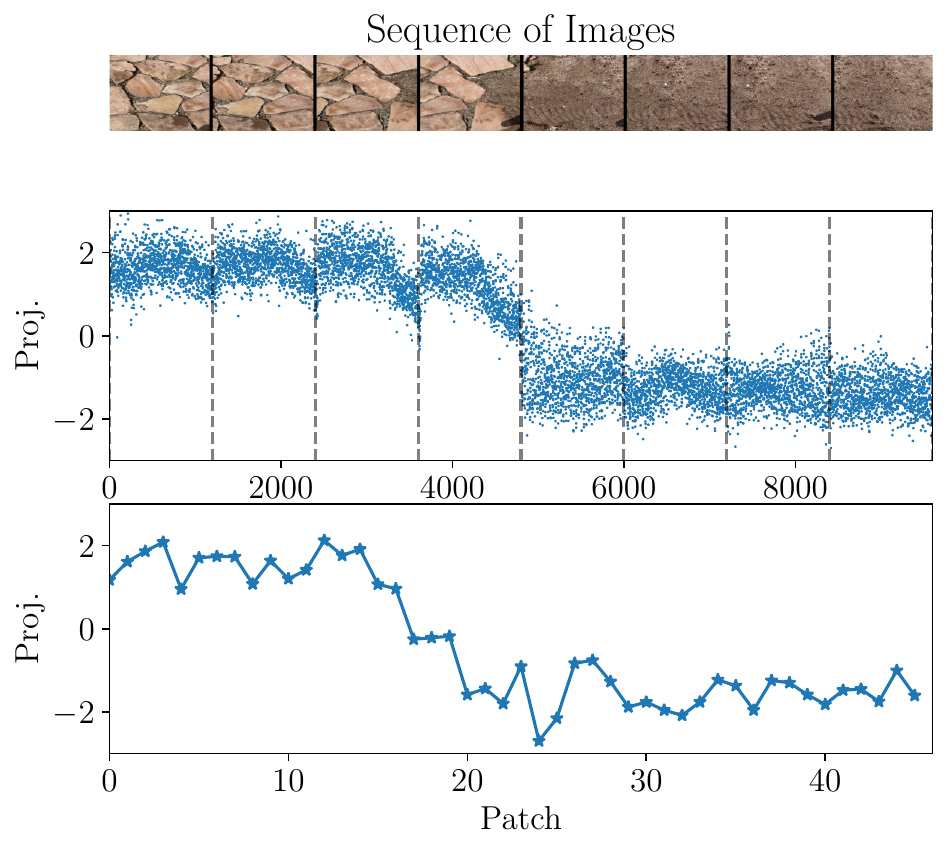}
    \caption{
        Projection of sequential flagstone\revisiontwo{s} and gravel features onto an OVR-SVC separating hyperplane normal. The middle plot shows the projection of all the patch features from the top 8 figures, while the bottom plot shows the projection of a central patch taken from 45 sequential images of flagstone\revisiontwo{s} and gravel.}
    \label{fig:sequencial_images_dino}
\end{figure}

\section{Empirical Results: Simulation Studies}\label{sec:simulation}

\subsection{Simulation Study Settings}

To validate our learning and control strategy, we developed a simulation environment (Fig.~\ref{fig:toy_environment})
that enables detailed visualizations of the algorithm behavior.
The dynamics for the simulator were modeled via \eqref{eq:dynamicsAB}, and the controller of~\eqref{eq:main_controller} \revisiontwo{and}~\eqref{eq:adapt_law} with the coefficients in Table~\ref{tab:control_coeffs_sim} was used to track user-defined velocity trajectories $\mathbf{v}_\mathrm{ref}$ generated at random.
The environment contains three distinct terrain types (Fig.~\ref{fig:toy_environment}a).
Each terrain type induces a different level of slip, modeled as a scaling of the \revision{nominal control} matrix $\mathbf{B}_\mathrm{n}\revision{\in \mathbb{R}^{2\times2}}$ in \eqref{eq:dynamicsAB} such that $\mB_\mathrm{n}$ is replaced by $\eta \mB_\mathrm{n}$ \revisiontwo{and the dynamics matrix} \revision{$\mA\in\mathbb{R}^{2 \times 2}$ is kept the same as in \eqref{eq:dynamicsAB}.}

\begin{figure}[t]
    \centering
    \begin{minipage}[b]{0.49\linewidth}
        \includegraphics[width=0.9\linewidth]{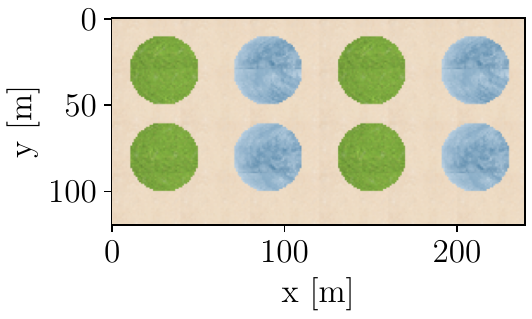}
        \caption*{\quad \quad (a)}
    \end{minipage}
    \hfill %
    \begin{minipage}[b]{0.49\linewidth}
        \includegraphics[width=0.9\linewidth]{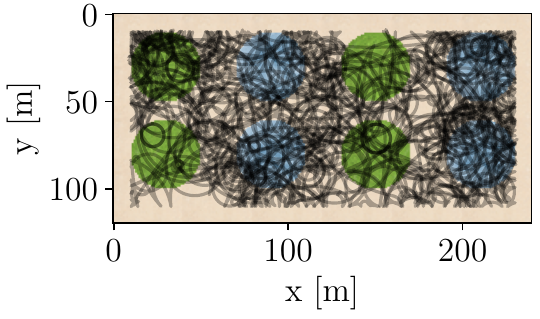}
        \caption*{\quad \quad  (b)}
    \end{minipage}
    \caption{(a) Environment with 3 different types of terrain (sand, grass, and ice), which represent areas of differing slip \revisiontwo{coefficients} (b) Generated trajectories for training.}
    \label{fig:toy_environment}
\end{figure}

\begin{table}[t]
\caption{Control and adaptation coefficients for the $\mphi$ constant and $\mphi$ \ac{dnn} controllers for the simulation environment\revisiontwo{.}}
    \label{tab:control_coeffs_sim}
    \centering
    \begin{tabular}{c|c|c|c|c|c|c}
        \hline
          Ctrl. Type & $k_{dx}$ & $k_{d \omega}$ &  $\mgamma_0$ diag   & $\mathbf{R}$ diag  & \revision{$\mathbf{Q}$ diag} & \revision{$\lambda$} \\ \hline \hline
          $\mphi$ = ct. & $0.05$ & $0.1$  & $0.01$ & $0.1$ & \revision{$1.0$}   & \revision{$0.01$} \\   \hline
          $\mphi$ = NN & $0.05$ & $0.1$  & $0.01$ & $0.1$ & \revision{$1.0$} & \revision{$0.01$} \\ \hline
    \end{tabular}
\end{table}
To construct a dataset, we simulate $N = 1$ long trajectory of 150\,000 discrete time steps, with randomized piecewise-constant velocity inputs.
For acquiring these features, we utilize the DINO \ac{vfm} on images of the terrain\revisiontwo{s from~\Cref{fig:toy_environment}a}.
As explained in~\Cref{sec:interprebility}, this model processes high-resolution terrain images into a more compact, lower resolution embedding.
This reduced resolution representation is \revision{overlaid} across the entire map.
Specifically, let $m_s \times n_s$ be the size of the simulated map, which is $120 \times 240$ in our case. Let each DINO feature image have the size computed as in~\eqref{eq:rgb_vfm}, \revision{for a background image of size $480 \times 640$, a patch size of $\xi = 16$, and \revisiontwo{the feature size} $\xi_f = 384$.
\begin{equation}
\mathcal{I}_{\mathrm{VFM}}: \floor*{\frac {480}{16}} \times \floor*{\frac{640}{16}} \times 384 = 30 \times 40 \times 384.
\end{equation}
}
\revision{Then}, we tile each of the DINO feature images across the entire map \revision{vertically 4 times $\left(\floor*{\frac {120}{30}}\right)$ and horizontally 6 times $\left(\floor*{\frac {240}{40}}\right)$}
and extract and record the relevant terrain features underneath the robot.
For training, we collect random trajectories, generated by sampling control inputs $\mathbf{u}$ from a uniform distribution, and integrate forward the dynamics in~\eqref{eq:dynamicsAB} in order to cover a large portion of the simulated map, as seen in Fig.~\ref{fig:toy_environment}b.
The dataset contains the DINO features extracted from underneath the robot and the robot's velocities, and as labels the residual dynamics derivative $\vy$\revision{, computed as in~\eqref{eq:dyn_res_tracked_vehicle}}.
Using this dataset, we then train the basis function $\mphi$, whose architecture can be seen in Fig.~\ref{fig:dnn_architecture}, using Algorithm~\ref{alg:training-v2}.
This compact representation of the terrain is then integrated with online adaptive control (Algorithm~\ref{alg:adaptation_at_execution_time}).

\subsection{Simulation Study Objectives}

In exploring the capabilities of our model, we investigate how prior knowledge of the terrain contribute\revision{s} to improved tracking accuracy for an adaptive controller.
We thus test our algorithm across 3 scenarios:
a) We assess the model performance in an environment identical to the one used during training to understand its effectiveness with in-distribution data (Fig.~\ref{fig:toy_model_1}).
b) We test the algorithm under simulated nighttime conditions to gauge performance when the ground is identical, but the lighting conditions are different (Fig.~\ref{fig:toy_model_night}).
c) We challenge the model by presenting it with two environments that have similar visual features to those in the training data set, but exhibit different dynamic behaviors. Furthermore, we adopt an adversarial approach by exposing the robot to completely novel environments that are not encountered during training (Fig.~\ref{fig:toy_model_adversarial}).

\begin{figure}[t]
    \centering
    \includegraphics[width=0.9\columnwidth]{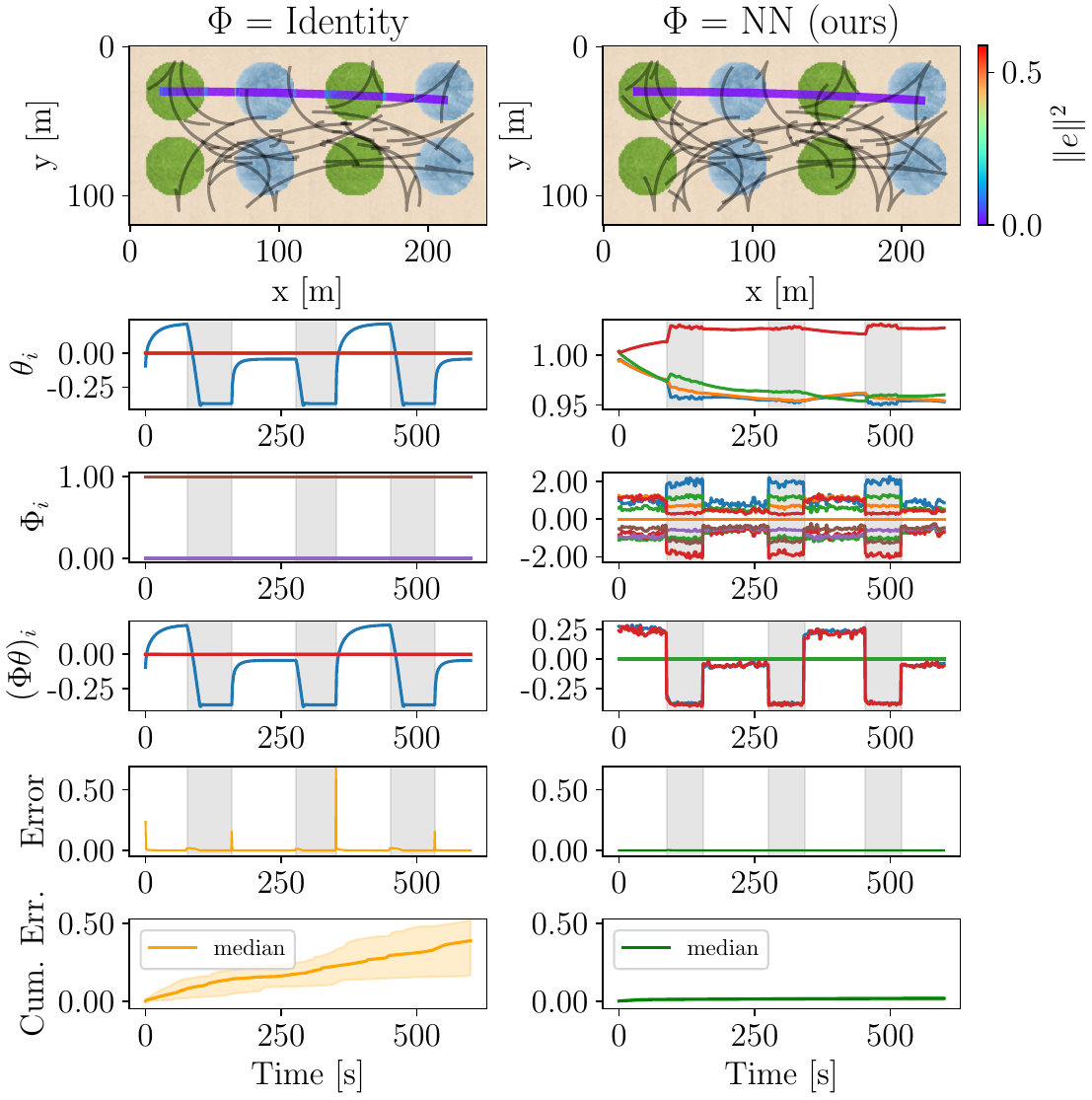}
    \caption{Results for in-distribution data \revisiontwo{from} the simulation model. Each experiment was run 40 times \revisiontwo{on the terrains from row 1}. The left column contains the performance for the baseline controller, while the right column contains the performance for our method. 
    \revision{The second row contains the adaptation coefficients $\hat{\theta}_i$, while the third row emphasizes the basis function $\mphi_i$. For the baseline, $\mphi_i$ is constant, while in our method, $\mphi_i$ varies as a function of the terrain and state.}
    The last row presents the cumulative error, where the thick colored line represents the median, and the shaded region encompasses the range from the 25\textsuperscript{th} to the 75\textsuperscript{th} percentile.}
    \label{fig:toy_model_1}
\end{figure}
\subsection{In-distribution Performance}
To quantify if prior knowledge of the terrain improves tracking accuracy, the robot is tested in-distribution using the same environment as in the training dataset.
The first row of Fig.~\ref{fig:toy_model_1} shows 39 random trials (black) and the single exemplar path (\revisiontwo{shades of} purple, colored by the $\mathcal{L}_2$ error between the actual and desired states). These random trials are used to compute error statistics in row 6.
The second row displays the robot's adaptation coefficients for the purple trajectory as it navigates through this environment.
When the basis function lacks terrain awareness and is \revisiontwo{set as constant matrices}~\eqref{eq:constant_phi}, 
\begin{equation}\label{eq:constant_phi}
    \mphi = \left\{
        \scriptsize
        \begin{bmatrix}
            1 & 0 \\
            0 & 0
        \end{bmatrix},
        \begin{bmatrix}
            0 & 1 \\
            0 & 0
        \end{bmatrix},
        \begin{bmatrix}
            0 & 0 \\
            1 & 0
        \end{bmatrix},
        \begin{bmatrix}
            0 & 0 \\
            0 & 1
        \end{bmatrix}
    \right\}.
\end{equation}
there is significant fluctuation in the adaptation coefficients during the transition between different terrains. Conversely, when the \ac{dnn} basis function is used, the adaptation coefficients remain relatively stable, while the \ac{dnn} output itself varies with each terrain type, as \revision{shown in the third row of Fig.~\ref{fig:toy_model_1}}. 
The fourth row showcases the \revision{components of} the product between the \ac{dnn} basis function \revision{$\mphi$} and the adaptation vector \revision{$\hat{\vtheta}$}.
Though the output of our controller is slightly noisier, the adaptation of the \revision{product $\mphi \hat{\vtheta}$} is significantly faster.
The fifth row shows the normalized error between the robot's actual and desired states. For the constant basis function, most of the tracking error occurs at terrain transitions. When the basis function is terrain-informed, the error is \revisiontwo{negligible}, even at terrain transitions.
Finally, the last row shows the spread of the cumulative error across 40 distinct experimental runs, each initiated at a random starting point and orientation, but of \revisiontwo{the} same duration (the black and the purple trajectories in Row 1). The results of the simulation show that our terrain-informed \ac{dnn}-based tracking controller reduces the cumulative error by approximately 90.1\% when compared to the constant $\mphi$, define\revisiontwo{d} as in~\eqref{eq:constant_phi}.

\subsection{Nighttime Out-of-distribution Performance}

To test the robustness of our framework to varying lighting conditions, we extend our simulated experiments with a nighttime environment by uniformly darkening (changing the brightness) of each image representing the environment (Fig.~\ref{fig:toy_model_night}).
While the adaptation coefficients exhibit more variation compared to those in the standard, in-distribution scenario, the \ac{dnn} still demonstrates good accuracy in predicting the environment from the darkened images. This outcome emphasizes the robustness of \ac{vfm}, underscoring \revisiontwo{its} ability to adapt effectively to varying lighting conditions. Importantly, even in these altered night conditions, the cumulative error remains low. 

\begin{figure}[t]
    \centering\includegraphics[width=0.9\columnwidth]{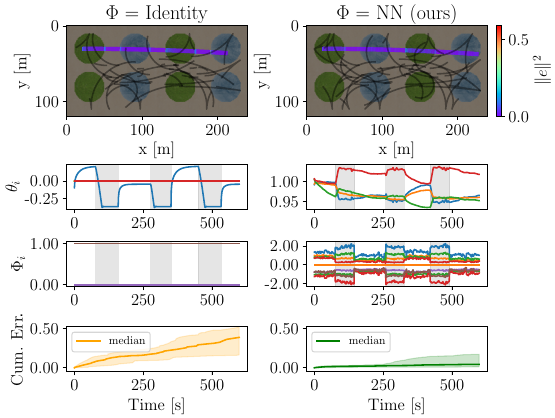}
    \caption{Simulation results in the simulated nighttime environment. Despite different lighting conditions, the cumulative error is kept small by the terrain-informed \ac{dnn}.
    }
    \label{fig:toy_model_night}
\end{figure}

We highlight the importance of adaptation by comparing the tracking error under two scenarios: with adaptation and without adaptation, \revisiontwo{as shown in} Fig.~\ref{fig:toy_model_sim_adapt_coeffs}. In the ``no adaptation case,'' we maintain $\boldsymbol{\theta}$ as a constant, initialized to $\vtheta_r$.
This comparison effectively demonstrates the benefits of adaptation, emphasizing its value even in situations where the basis function accurately predicts the terrain.

\begin{figure}[t]
    \centering\includegraphics[width=0.8\columnwidth]{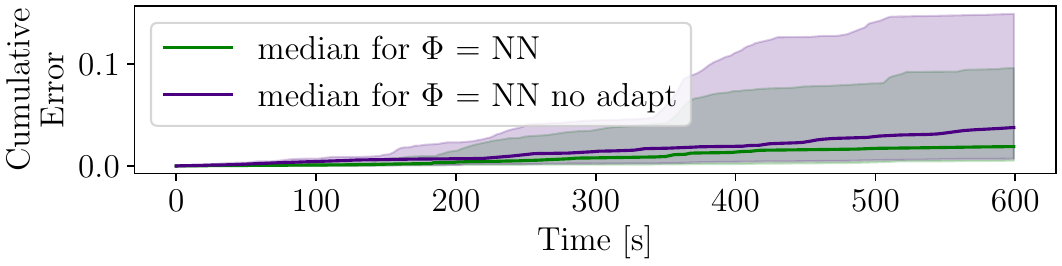}
    \caption{Comparison of the cumulative error between adaptation and ``no adaptation'' for the simulated nighttime experiment. In both cases, the \ac{dnn} version of $\mphi$ is used. \revisiontwo{This figure emphasizes the benefit of doing online adaptation.} }\label{fig:toy_model_sim_adapt_coeffs}
\end{figure}

\subsection{Adversarial Environment Performance}

In our final test (Fig.~\ref{fig:toy_model_adversarial}), we introduced two adversarial environments for the robot, manipulating two visually similar environments by altering their respective $\eta$ coefficient of the $\mathbf{B}$ matrix. 
This emulates the real world where pits of deep sand appear very similar to shallow sand, but have a significantly different effect on the dynamics of the robot.
Additionally, we modified the appearance of the simulated ice environment to create a distinct visual difference, while also slightly changing the effect of ice on the dynamics.

In the adversarial environment, the adaptation coefficients exhibit greater changes than for the in-distribution and night-time simulations.
In addition, we observe that the \ac{dnn} basis function demonstrates good performance, validating its effectiveness in handling out-of-distribution data. This effectiveness is likely attributed to the zero-shot capability inherent in the \ac{vfm}.
Lastly, it is important to note that the overall cumulative error remained lower compared to scenarios where the basis function lacked terrain information, further demonstrating the benefit and robustness of our approach in varied and challenging conditions, even for out-of-distribution data.

\section{Empirical Results: Hardware Experiments}\label{sec:experimental}
We focus on the hardware implementation and experimental validation of our~\namepaper~adaptive controller discussed in~\Cref{sec:terrain_informed_adaptive}-\ref{sec:interprebility} on a tracked vehicle whose dynamics are modeled in~\eqref{eq:dynamicsAB} \revision{and a car with Ackermann steering with the dynamics modeled as in~\eqref{eq:simplified_dynamics_traxxas}}.
We \revision{present} how our adaptive controller effectively addresses various perturbations such as terrain changes, severe track degradation, and unknown internal robot dynamics.
\begin{figure}[t]
    \centering\includegraphics[width=0.9\columnwidth]{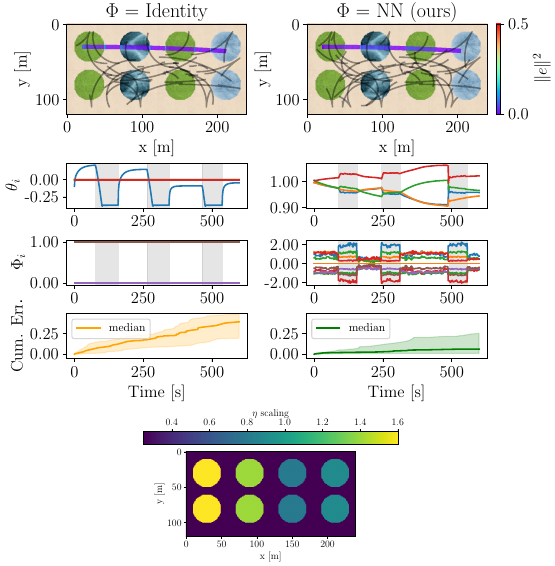}
    \caption{Simulation results with adversarial \revisiontwo{environment} (different \revisiontwo{structures that look like ice}) and different $\eta$ \revision{scaling} coefficient for the \revision{control matrix} for \revisiontwo{identical} terrain.}
    \label{fig:toy_model_adversarial}
\end{figure}

\subsection{Robot Hardware and Software Stack}
Experiments were carried out using a {GVR-Bot}~\cite{us2019gvr} \revision{ and a modified Traxxas X-Maxx, both shown in~\Cref{fig:mars_yard}.  Both vehicles are} equipped with an NVIDIA Jetson Orin, RealSense D457 cameras (GVR-Bot: two forward facing and one rear facing, Traxxas: single forward facing) and a VectorNav VN100 \revisiontwo{\ac{imu}}.

State estimation is provided onboard using OpenVINS~\cite{Geneva2020ICRA}, which fuses the camera data with an \revisiontwo{\ac{imu}} to estimate the platform's position, attitude, and velocity.
Our~\namepaper~controller, as presented in~\eqref{eq:main_controller},~\eqref{eq:adapt_law} \revisiontwo{for the tracked vehicle and in~\eqref{eq:controller_traxxas_adapt} for the car-like vehicle}, and Theorem~\ref{theorem1}, \revisiontwo{runs} at 20~Hz. It is implemented in Python using the Robot Operating System (ROS) as the middleware to communicate with the \revision{robot's} internal computer.

\subsection{Experiments on Slopes in JPL's Mars Yard}\label{sec:experiments_mars_slopes_JPL}
\revision{The {GVR-Bot} only accepts velocity commands as the track velocities are regulated using an internal PID controller, which is inaccessible to the user. While this justifies our first-order modelling \eqref{eq:dynamicsAB} using velocities, these experimental results validate that \namepaper~successfully learns the unknown internal dynamics.}
To verify the performance of our~\namepaper~controller (\Cref{sec:terrain_informed_adaptive} and~\ref{sec:modeling}) \revision{on different terrains}, the {GVR-Bot} was driven on the slopes of the Mars Yard~\cite{mars_yard} at the Jet Propulsion Laboratory (JPL). Fig.~\ref{fig:mars_yard} shows the two selected slopes, both chosen for their appropriate angle and visually different terrain type that induce different dynamic terrain-based behaviors.

\begin{figure*}[t]
    \centering
    \includegraphics[width=0.9\textwidth]{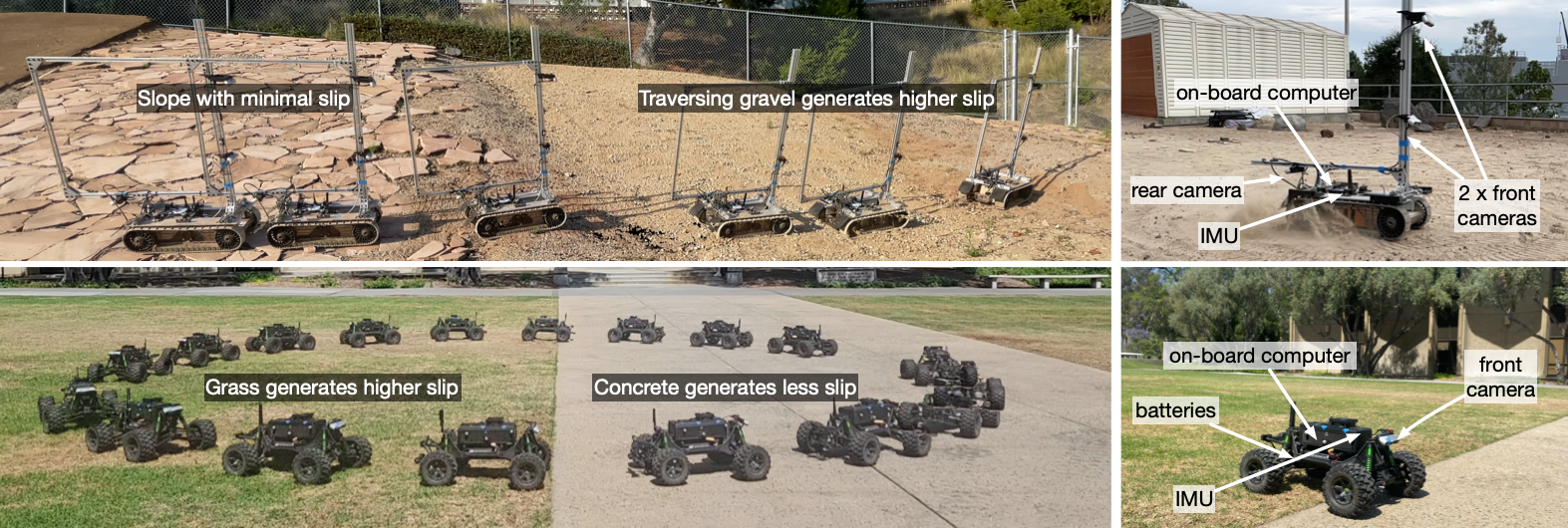}
    \caption{a) The {GVR-Bot} traversing two slopes with different textures and terrain-induced dynamic behaviors at the JPL Mars Yard. \revision{b) The {GVR-Bot} with the sensing and compute units highlighted. Note that the forward facing camera is used for state estimation, while the top camera is used for taking terrain images for~\namepaper. The rear camera is not used in this work.
    c) The Traxxas robot traversing two different terrains that induce different dynamic behaviors. d) The Traxxas robot with its main sensing and compute units highlighted.
    }
    }
\label{fig:mars_yard}
\end{figure*}
 
\subsubsection{Offline Training} \label{sec:offline_training_hardware}
Training data was collected by driving the {GVR-Bot} via direct tele-operation for a total of 20 minutes on the slopes.
This trajectory was designed to include segments of transition between different slopes as well as periods of single slope operation.
We utilize this dataset for training our terrain-dependent basis function as outlined in Algorithm~\ref{alg:training-v2}.
By leveraging the strengths of a pre-trained \ac{vfm}, we develop the lightweight \ac{dnn} basis function head used in the adaptive controller of \eqref{eq:main_controller},~\eqref{eq:adapt_law}.
This function processes inputs comprising of the mean of two visual feature patches from the {GVR-Bot}'s right and left tracks \revisiontwo{and}  the robot's velocity taken from the onboard state estimator.
The \ac{vfm}-based \ac{dnn} ($\boldsymbol{\Phi}$) structure incorporates two hidden layers, each consisting of 200 neurons, as seen in Fig.~\ref{fig:dnn_architecture}.
The output has size 16, which is then reconfigured into dimensions $n \times m \times  n_\theta$, where $n=2$ is the state size, $m=2$ is the control input \revision{size}, \revisiontwo{and} $n_\theta = 4$ is the size of the adaptation vector \revision{\revisiontwo{that matches} the number of terms in the control matrix}.
The hyperparameters for the training algorithm are shown in Table~\ref{tab:hyperparameters}.

\begin{table}[t]
    \caption{ Training hyperparameters for Algorithm~\ref{alg:training-v2} \revisiontwo{for the tracked vehicle}. $\beta$ is the learning rate for \revisiontwo{the optimization in} Line~\ref{alg_step:optimization} of Algorithm~\ref{alg:training-v2}, $\boldsymbol{\theta}_r$ is the regularization target,  $\ell_{\mathrm{min}}$ and $\ell_{\mathrm{max}}$ are the bounds for the distribution over trajectory window lengths, $\lambda_\mathrm{r}$ is the regularization term for~\eqref{eq:theta-for-window}, $K$ is the minibatch size, and $n_\theta$ is the size of the adaptation vector.}
        \centering
        \begin{tabular}{c|c|c|c|c|c|c}
        \hline
        $\beta$ & $\boldsymbol{\theta}_r$ & $\ell_{\mathrm{min}}$ & $\ell_{\mathrm{max}}$ & $\lambda_\mathrm{r}$ & $K$  & $n_\theta$ \\ \hline \hline
        0.001                  & $\mathbf{1}_4$                                                & 1.2 [s]         & 30 [s]          & 0.1                & 70    & 4     \\ \hline
        \end{tabular} \label{tab:hyperparameters}
\end{table}

\subsubsection{Online Adaptation} 

At runtime, the downward-facing camera\footnote{To mitigate the purple tint in the RGB images (a common issue for Intel Realsense cameras), the RGB cameras were outfitted with neutral density filters to maintain the integrity of the VFM features.} is used to capture images of the terrain at 20~Hz. These images are then processed by the \ac{vfm} explained in~\Cref{sec:interprebility} to extract the features. 
The extracted features are then concatenated with the robot's velocity and are then fed into the \ac{dnn} basis function $\mphi$. This function, together with an online-adapting vector, is then employed to dynamically adjust the residual $\mathbf{B}$ matrix~\eqref{eq:main_controller},~\eqref{eq:adapt_law} in real time to account for the different terrains.

The benefits of the terrain-informed basis function  can been seen by comparing the performance of a constant and non-constant basis function controller as the robot traverses slopes. Both controllers are based on \eqref{eq:main_controller} and the adaptation law in \eqref{eq:adapt_law}. The first controller uses a constant basis function, defined in~\eqref{eq:constant_phi}. \revision{We choose this structure for the constant $\mphi$ to capture both the direct and cross-term effects on the robot's velocity.} \revision{T}he second controller uses a terrain-dependent \ac{dnn} basis function trained as explained in~\Cref{sec:offline_training_hardware}.
The control coefficients for both controllers are presented in Table~\ref{tab:control_coeffs_slopes}.
The initial adaptation vector $\boldsymbol{\theta}_0= \boldsymbol{\theta}(0)$ for the constant basis function is $\mathbf{0}_{n_\theta}$, while $\boldsymbol{\theta}(0)$ for the terrain-dependent basis function is the converged value from Algorithm~\ref{alg:training-v2}.
\begin{table}[t]
    \caption{Control coefficients for both controllers ($\mphi$ is constant and $\mphi$ is a \ac{dnn}) \revision{for the tracked vehicle}.}
        \label{tab:control_coeffs_slopes}
        \centering
        \begin{tabular}{p{0.31cm}|p{0.31cm}|p{0.31cm}|p{0.31cm}|p{0.31cm}|p{0.95cm}|p{0.95cm}|p{0.95cm}|p{0.4cm}}
\hline 
$k_{px}$ &  $k_{py}$ & $k_{dx}$ & $k_{d \omega}$ &  $k_{\psi}$  &  $\mgamma_0$ diag & $\mathbf{Q}$ diag & $\mathbf{R}$ diag & $\lambda$ \\ \hline \hline
$0.8$ & $0.8$ & $0.5$ & $1.6$ &$2.3$ & $0.2$ & $0.1$ & $5.0$ & 0.01\\ \hline
\end{tabular}
\end{table}

Each experiment was carried out five times, with the results detailed in Fig.~\ref{fig:tracking_error_plot}. For repeatability, we used a rake to re-distribute the gravel on the slopes between runs and alternate\revisiontwo{d} back and forth between \revisiontwo{running} the two controllers.
For this experiment, the desired trajectory is a straight line that spans the entire length of the two slopes (see Fig.~\ref{fig:mars_yard}).

Fig.~\ref{fig:tracking_error_plot} shows that when the robot traverses the first slope (flagstone resulting in minimal slippage), both controllers have comparable tracking errors.
However, a notable change in performance appears when the robot transitions to the second slope, which has an increased tendency for the soil to slump down the hill, causing slippage.
In Table~\ref{tab:tracking_error_mars_yard}, we present the \ac{rmse} between the actual position and the desired position computed as $\sqrt{\frac{1}{L} \sum^{L}_{i = 1}||\mathbf{p}^{\mathcal{I}}_{i} - \mathbf{p}^{\mathcal{I}}_{di}||^2_2}$, where $L$ is the length of the trajectory.
The results demonstrate that the integration of a \ac{vfm} in an adaptive control framework enhances tracking performance, yielding an average improvement of 53\%.
\begin{figure}[t]
    \centering
    \includegraphics[width=0.9\columnwidth]{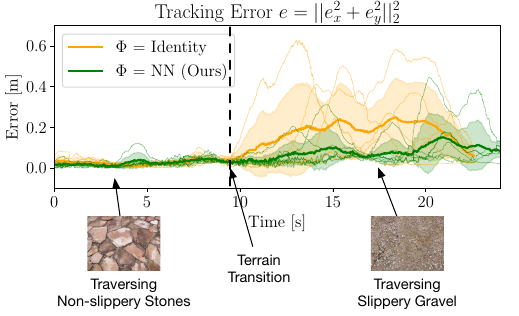}
    \caption{Tracking error for the two controllers (constant basis function and terrain-dependent basis function) on the slopes \revision{for a tracked vehicle}. The error is computed as the 
    Euclidean distance between actual and desired positions in the $\mathcal{I}$ frame. For both colors, the thick line outlines the mean of the 5 experiments, the shaded area \revisiontwo{represents} 1 standard deviation, and the thin and transparent lines denote the 5 experiments.}
    \label{fig:tracking_error_plot}
\end{figure}

\begin{table}[t]
    \caption{Statistics \revision{for the tracked vehicle} on Mars Yard slopes.}
        \centering
        \begin{tabular}{c|c}
        \hline
        Controller & Tracking error \revision{(}RMS \revision{[m]}\revision{)} \\ \hline \hline
        $\mphi $ constant & $0.130 \pm 0.038$ \\ \hline
         $\mphi $ \ac{dnn} (ours) & $0.061 \pm 0.022$ \\
        \hline
        \end{tabular} \label{tab:tracking_error_mars_yard}
\end{table}

\subsubsection{Computational Load}

A significant bottleneck in deploying \ac{vfm}s onboard robots is the computational requirements of the inference stage of the models, especially as typical controllers need to run at 10s-100s Hz.
To minimize the inference time and allow high controller rates, we employ the smallest visual transformer architecture of the DINO V1, consisting of 21 \revision{million network} parameters. This architecture allows us to run the controller at 20~Hz on the Graphics Processing Unit (GPU) \revision{on-board an NVIDIA Jetson Orin}.

\subsection{\revision{Experiments On-board an Ackermann Steering Vehicle}}\label{sec:experiments_traxxas}

\revision{We performed similar experiments to those described in~\Cref{sec:experiments_mars_slopes_JPL} using an Ackermann steering vehicle.
Here, the robot traverses two different terrains, as seen in~\Cref{fig:mars_yard}, which induce different dynamic behaviors onto the robot (grass is more slippery than concrete).
\revisiontwo{Our experiments on both vehicles showed that, on flat ground,} the car experiences more significant slippage and terrain disturbances compared to the tracked vehicle. \revisiontwo{Therefore, for this experiment, we validated~\namepaper~on flat ground.}
}

\revision{
In~\Cref{fig:plot_1_traxxas_baseline}, we show the product $\mphi \hat{\vtheta}$ for the constant basis function of the nonlinear tracking controller in~\eqref{eq:controller_traxxas_adapt}. 
As the robot transitions between the two terrains, we see that the robot effectively adapts to each terrain during this transition. 
This behavior mirrors that observed in the simulation plots (\Cref{fig:toy_model_1}).
Note that we maintained $n_\theta = 4$, to be consistent with the \ac{dnn} model of the basis function, even though all four parameters are identical in this instance.
}
\revision{In Fig.~\ref{fig:adaptation_traxxas_NN}, we emphasize the adaptation coefficients (left) and the \ac{dnn} basis function output (right) for the nonlinear tracking controller in~\eqref{eq:controller_traxxas_adapt} as the robot transitions between the two terrains (grass and concrete) several times.
The \ac{dnn} basis function switches depending on the type of environment it operates in, while the corresponding adaptation coefficients $\hat{\vtheta}$ \revision{remain} mostly constant.
This behavior also mirrors that observed in the simulation plots (\Cref{fig:toy_model_1}) when a \ac{dnn} with \ac{vfm} is employed.
Lastly, in~\Cref{fig:trajectories_traxxas}, we present the lateral position error ($e^\perp$) and lateral velocity ($v_y^\mathcal{B}$) for the 3 controllers (a) nonlinear PD (\eqref{eq:controller_traxxas_adapt} without the adaptation), (b) MAGIC with constant $\boldsymbol{\Phi}$ in \eqref{eq:controller_traxxas_adapt} \revisiontwo{and}\eqref{eq:adapt_law}, (c)~\namepaper~with \ac{dnn} $\boldsymbol{\Phi}$ in \eqref{eq:controller_traxxas_adapt} \revisiontwo{and}~\eqref{eq:adapt_law}. 
Our method shows superior performance compared to the baseline nonlinear PD controller. 
The control coefficients for the three controllers are outlined in~\Cref{tab:control_coeffs_ackermann}.
}

\begin{figure}[t]
    \centering
\includegraphics[width=0.95\columnwidth]{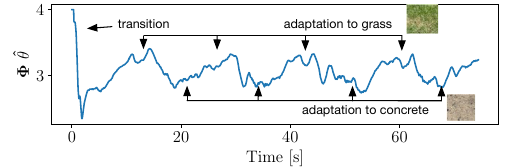}
    \caption{\revision{ $\mphi\hat{\vtheta}$ of~\eqref{eq:controller_traxxas_adapt} for the constant basis function baseline for the car-like robot. Different terrains induce convergence to different adaptation coefficients.}}
\label{fig:plot_1_traxxas_baseline}
\end{figure}

\begin{figure}[t]
    \centering
\includegraphics[width=0.95\columnwidth]{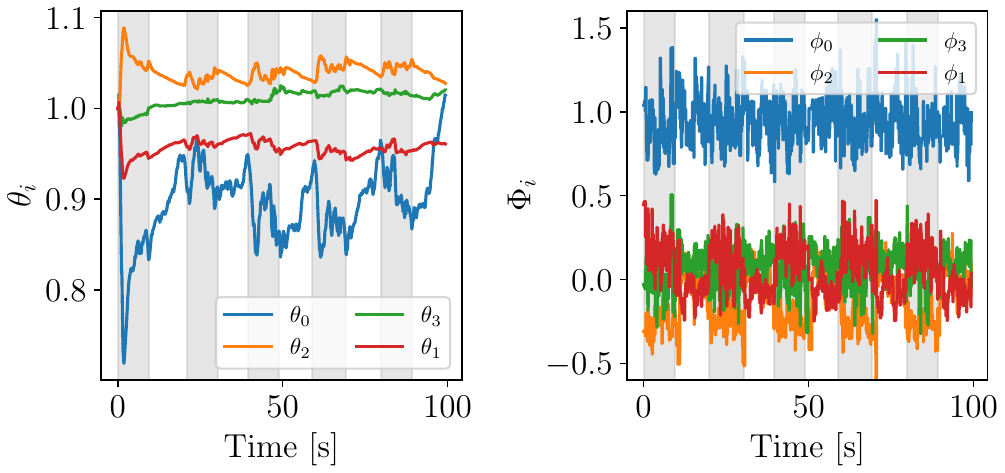}
    \caption{
        \revision{Adaptation coefficients $\hat{\vtheta}$ and terrain-dependent \ac{dnn} basis function for a car-like robot. Approximate under-vehicle terrain is denoted with the white (concrete) and gray (grass) bars.}
    }
\label{fig:adaptation_traxxas_NN}
\end{figure}

\begin{figure}[t]
    \centering
\includegraphics[width=0.95\columnwidth]{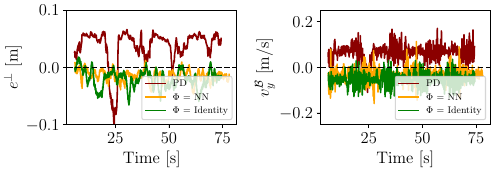}
    \caption{\revision{Convergence of the lateral error $e^\perp$ and lateral velocity $v_y^\mathcal{B}$ for a circular trajectory traversing two terrains like the one seen in~\Cref{fig:mars_yard} with $v_x^\mathcal{B} = 1.5~m/s$. The \revisiontwo{ velocity of the desired trajectory is} limit\revisiontwo{ed}~\revisiontwo{by} the performance of the \ac{vio} at higher speeds.}}
\label{fig:trajectories_traxxas}
\end{figure}

\begin{table}[t]
    \caption{\revision{Control coefficients for the lateral control \revisiontwo{of} the Ackermann steering vehicle}.}
        \label{tab:control_coeffs_ackermann}
        \centering
        \begin{tabular}{p{2.0cm}|p{0.3cm}|p{0.3cm}|p{0.9cm}|p{0.9cm}|p{0.9cm}|p{0.3cm}}
        \hline
            \revision{Controller} & \revision{$k_p$} &  \revision{$k_d$} &  \revision{$\boldsymbol{\Gamma}$ diag.}  & \revision{$\mathbf{Q}$ diag.} & \revision{$\mathbf{R}$ diag.} & \revision{$\lambda$}  \\ \hline \hline
            \revision{(a) nonlinear PD} & \revision{$1.0$} & \revision{$1.0$} & \revision{-} & \revision{-} & \revision{-} & \revision{-}  \\ \hline
            \revision{(b) $\mphi$ constant} & \revision{$1.0$} & \revision{$1.0$} & \revision{$1.5$} & \revision{$1.0$} & \revision{$0.01$} & \revision{$0.05$} \\ \hline
            \revision{(c) $\mphi$ \ac{dnn}} & \revision{$1.0$} & \revision{$1.0$} & \revision{$1.5$} & \revision{$1.0$} & \revision{$0.01$} & \revision{$0.05$} \\ \hline
        \end{tabular}
\end{table}

\subsection{Indoor Track Degradation Experiments}

Indoor experiments were conducted at Caltech's Center for Autonomous Systems and Technologies (CAST) (Fig.~\ref{fig:cast_experiment}).
The primary objective of these experiments was to evaluate the robustness and performance of our proposed controller under artificially-induced track degradation.
Specifically, the experiments quantify the extent of degradation that our controller can effectively manage and \revisiontwo{demonstrate} its advantage over baseline controllers in similar scenarios.
We compare three controllers: (a) nonlinear PD (\eqref{eq:main_controller} without the adaptation), (b) MAGIC with constant $\boldsymbol{\Phi}$ in \eqref{eq:main_controller} \revisiontwo{and}~\eqref{eq:adapt_law}, \revisiontwo{and} (c)~\namepaper~with \ac{dnn} $\boldsymbol{\Phi}$ in \eqref{eq:main_controller}~\revisiontwo{and}~\eqref{eq:adapt_law}. 
The \ac{dnn} is not retrained on the new ground \revisiontwo{type}, but the previously trained \ac{dnn} from~\Cref{sec:experiments_mars_slopes_JPL} is employed.

\begin{figure}[t]
    \centering
\includegraphics[width=\columnwidth]{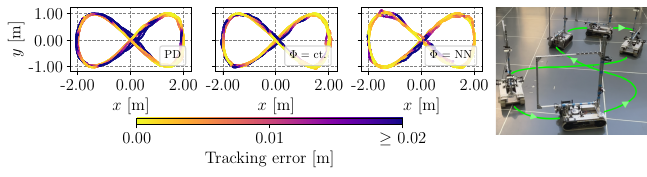}
    \caption{a) Tracking error for the three controllers during track degradation. \revision{(left)} the performance \revision{for the} nonlinear PD (the baseline). \revision{(middle and right) performance for the constant basis function and the terrain-informed basis function.}  b) \revisiontwo{The} figure 8s \revisiontwo{trajectories} for evaluating track degradation performance conducted indoors at CAST.  The consistent floor of CAST ensures any slippage is consistent both within the figure 8 and between tests.}
    \label{fig:cast_experiment}
\end{figure}

To simulate track degradation, a scalar factor is applied to one track that reduces its commanded rotation speed downstream of (and opaquely to) the controller. In this case, we apply a 70\% reduction in speed to the right track using a step function with a period of 3 seconds, while keeping the left track operating `nominally.'
The {GVR-Bot} is commanded to follow a figure 8 trajectory, and the results are shown in Fig.~\ref{fig:cast_experiment}\revisiontwo{,} with the \revisiontwo{ \ac{rmse}} in position tabulated in Table~\ref{tab:tracking_error_fault}.
The results show that both constant $\boldsymbol{\Phi}$ and \ac{dnn} $\boldsymbol{\Phi}$ controllers outperform the tracking of the baseline PD controller by 23\% and 31\%, proving the robustness to model mismatch of both DNN and non-\ac{dnn} controllers.

\begin{table}[t]
\caption{Position tracking error statistics \revision{for the tracked vehicle experiencing} track degradation.}
    \centering
    \begin{tabular}{c|c|c}
    \hline
    Controller &  \revision{Tracking RMSE} \revision{[m]} & \revision{Improvement} \\ \hline \hline
    \revision{(a) nonlinear} PD & $0.102$ & \revision {-} \\ \hline
    \revision{(b)} $\boldsymbol{\Phi}$ = constant & $0.079$  & \revision{23\%} \\
    \hline
    \revision{(c)}  $\boldsymbol{\Phi}$ = \ac{dnn} & $0.070$  & \revision{31\%}\\
    \hline
    \end{tabular} \label{tab:tracking_error_fault}
\end{table}

\subsection{Performance at DARPA's Learning Introspection Control}

\begin{figure}[t]
    \centering
    \includegraphics[width=0.9\columnwidth]{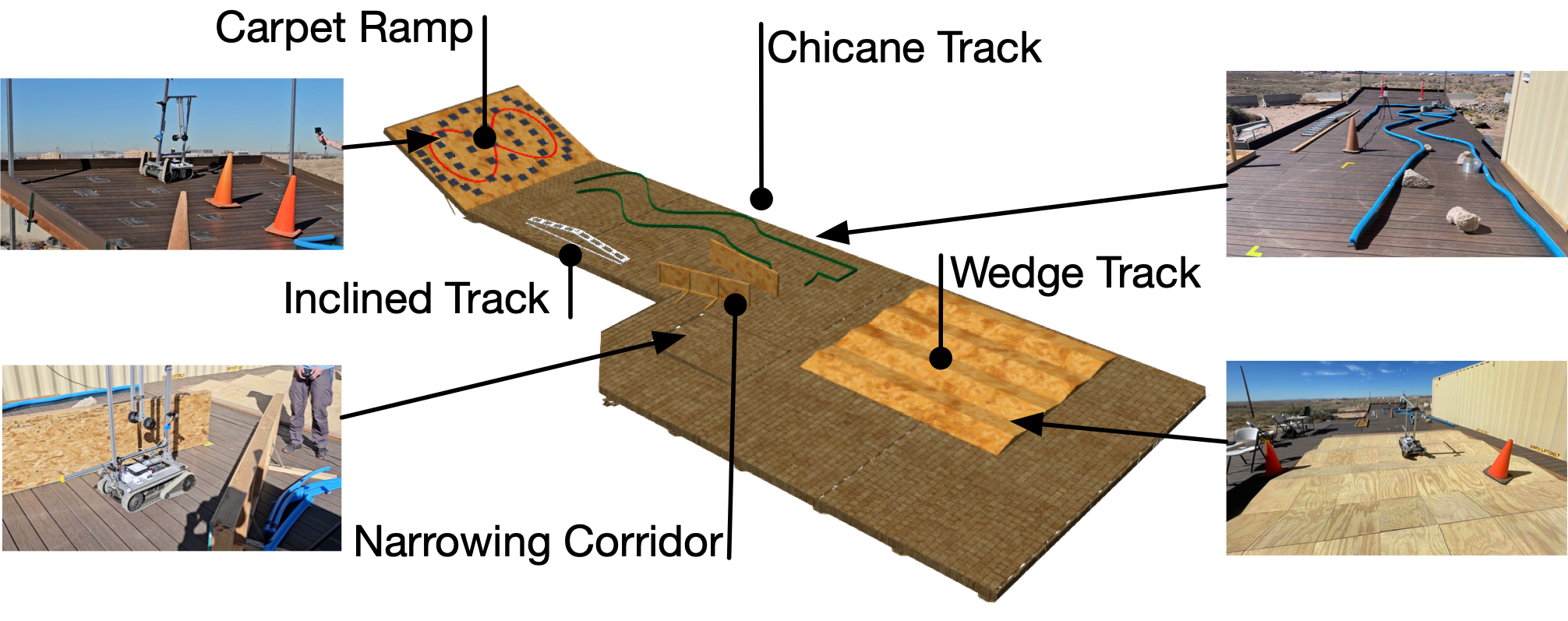}
    \caption{Combined Circuit for the DARPA \ac{linc} runs \revision{showing} the full course with break outs of each of the elements. Track credit: Sandia National Laboratories team.}
    \label{fig:combined_track}
\end{figure}

The DARPA \ac{linc} program~\cite{linc-website} develops machine learning-based introspection technologies that enable systems to respond to changes not predicted at design time.
\ac{linc} took place throughout 2023 at Sandia National Laboratories.

The main exercise, Combined Circuit (Fig. \ref{fig:combined_track}), \revisiontwo{evaluated conditions such as track degradation, collisions, tip-over}, and reduced cognitive load on the driver across a variety of test elements. 
Importantly, these exercises were completed with a human driver as the global planner \revisiontwo{in order to introduce} additional challenges such as adversarial driving and driver intent \revision{inference}.

For this exercise, we implemented the MAGIC controller from~\eqref{eq:main_controller},~\eqref{eq:adapt_law} in which the basis function~\eqref{eq:constant_phi} \revision{was} constant.
Trajectories (both position and velocity) were generated using a sampling-based motion planner based on \ac{mcts}, with the desired goal locations generated using a `driver intent' module that \revisiontwo{generated} a desired path based on operator joystick inputs.
The main modules of the software stack and their interfaces are shown in Fig.~\ref{fig:arch_darpa}, with our MAGIC controller highlighted in blue.

\begin{figure}[t]
    \centering
    \includegraphics[width=0.9\columnwidth]{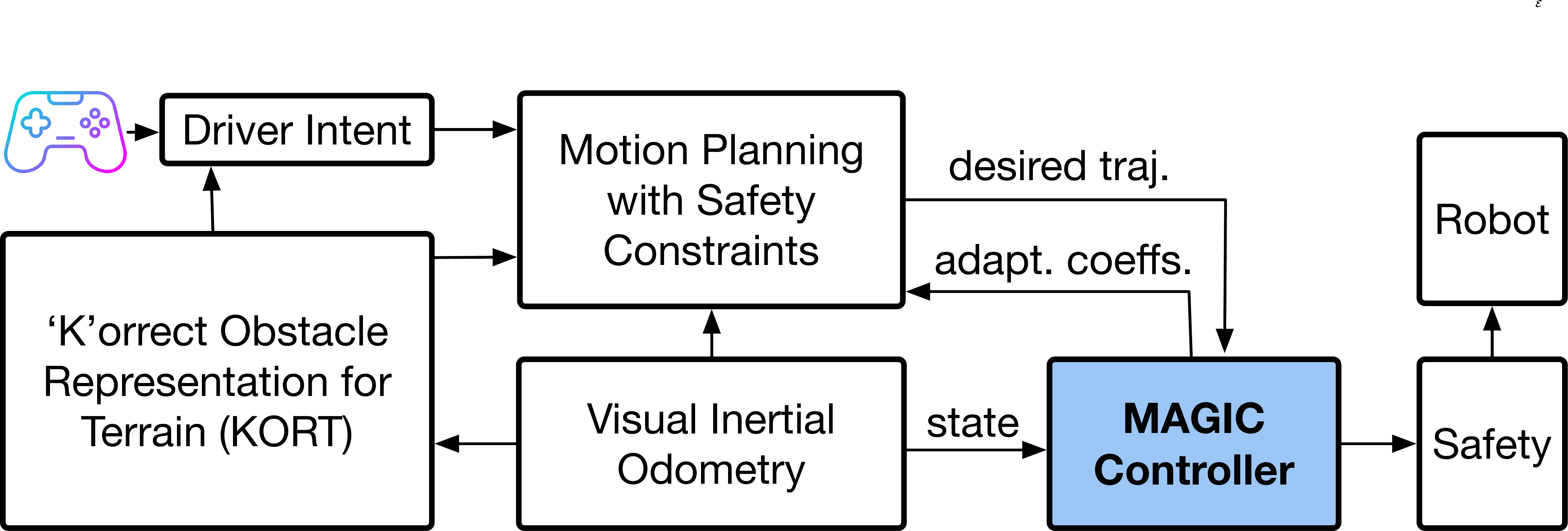}
    \caption{Architecture for our DARPA \ac{linc} software stack, \revision{with the MAGIC controller showcased in blue.}}
    \label{fig:arch_darpa}
\end{figure}

To evaluate the performance of our MAGIC controller, we compare the estimated state (linear and angular velocit\revisiontwo{ies}) from the \ac{vio} with the reference trajectory $\mathbf{v}_{\mathrm{ref}}$ computed from the desired trajectories generated by the \ac{mcts} planner, as explained in~\Cref{sec:adapt_control_tracked_vehicle}.
For the baseline, we compare the desired command from the joystick with the actual state from the \ac{vio}.

The following subsections discuss each of the components of the Combined Circuit and \revision{the performance of our controller}.
In Table~\ref{tab:performance_metrics_linc}, we present the performance metrics for the four exercises of the \ac{linc} project.
\revision{Each} exercise was traversed 4 times and the \revisiontwo{\ac{rmse}} of the linear and angular velocity was computed.

\begin{table*}[t]
    \caption{Performance metrics for the four exercises of the DARPA \ac{linc} project.}
    \label{tab:performance_metrics_linc}
\begin{tabular}{p{1.32cm}|p{1cm}p{1cm}p{1cm}|p{1cm}p{1cm}p{1cm}|p{1cm}p{1cm}p{1cm}|p{1cm}p{1cm}p{1cm}}

                       & \multicolumn{3}{c}{Chicane}                                                                                          & \multicolumn{3}{c}{Carpet Ramp}                                                                                      & \multicolumn{3}{c}{Wedges}                                                                                         & \multicolumn{3}{c}{Narrow Corridor}                                                                                \\ \hline
                       & \multicolumn{1}{p{0.85cm}|}{$v$ error {[}m/s{]}}       & \multicolumn{1}{p{0.85cm}|}{$\omega$ error {[}rad/s{]}}       & Time {[}s{]} & \multicolumn{1}{p{0.85cm}|}{$v$ error {[}m/s{]}}       & \multicolumn{1}{p{0.85cm}|}{$\omega$ error {[}rad/s{]}} & Time {[}s{]} & \multicolumn{1}{p{0.85cm}|}{$v$ error {[}m/s{]}} & \multicolumn{1}{p{0.85cm}|}{$\omega$ error {[}rad/s{]}} & Time {[}s{]} & \multicolumn{1}{p{0.85cm}|}{$v$ error {[}m/s{]}} & \multicolumn{1}{p{0.85cm}|}{$\omega$ error {[}rad/s{]}} & Time {[}s{]} \\ \hline \hline
    \ac{linc} off               & \multicolumn{1}{p{0.85cm}|}{0.28}                      & \multicolumn{1}{p{0.85cm}|}{\revision{0.45}}                             & 16.36        & \multicolumn{1}{p{0.85cm}|}{\revision{0.96}}                      & \multicolumn{1}{p{0.85cm}|}{\revision{0.59}}                             & 19.96        & \multicolumn{1}{p{0.85cm}|}{\revision{0.37}}                      & \multicolumn{1}{p{0.85cm}|}{\revision{0.58}}                           & 34.95        & \multicolumn{1}{p{0.85cm}|}{\revision{0.52}}                      & \multicolumn{1}{p{0.85cm}|}{\revision{0.72}}                           & 17.95        \\ 
\ac{linc} on                & \multicolumn{1}{p{0.85cm}|}{0.16}                      & \multicolumn{1}{p{0.85cm}|}{\revision{0.36}}                             & 44.36        & \multicolumn{1}{p{0.85cm}|}{0.36}                      & \multicolumn{1}{p{0.85cm}|}{\revision{0.32}}                             & 39.22        & \multicolumn{1}{p{0.85cm}|}{\revision{0.25}}                      & \multicolumn{1}{p{0.85cm}|}{\revision{0.34}}                           & 39.72        & \multicolumn{1}{p{0.85cm}|}{\revision{0.19}}                      & \multicolumn{1}{p{0.85cm}|}{\revision{0.44}}                           & 23.73        \\ \hline 
    Improvement            & \multicolumn{1}{p{0.85cm}|}{\textbf{\revision{42}\%}}          & \multicolumn{1}{p{0.85cm}|}{\textbf{\revision{19}\%}}                 &              & \multicolumn{1}{p{0.85cm}|}{\textbf{\revision{62}\%}}          & \multicolumn{1}{p{0.85cm}|}{\textbf{\revision{46}\%}}                 &              & \multicolumn{1}{p{0.85cm}|}{\textbf{\revision{33}\%}}          & \multicolumn{1}{p{0.85cm}|}{\textbf{\revision{41}\%}}               &              & \multicolumn{1}{p{0.85cm}|}{\textbf{\revision{64}\%}}                   & \multicolumn{1}{p{0.85cm}|}{\textbf{\revision{39}\%}}      &              \\  \cline{1-3} \cline{5-6} \cline{8-9} \cline{11-12}
\end{tabular}
\end{table*}

\subsubsection{Chicane Track}

The Chicane Track highlighted the rejection of artificially induced track degradation, which was applied dynamically and opaquely as the {GVR-Bot} traversed the course.
Due to the narrow track (the width is 0.9~m on average, 0.25~m wider than the {GVR-Bot} on both sides), track degradation leads to an increase \revision{in collisions} with the chicane walls if not quickly adapted to.
Our MAGIC controller was able to successfully adapt to these challenges, thus making this artificially induced track degradation almost imperceptible to the driver after a very short initial adaptation transient. 

\begin{figure}[t]
    \centering
    \includegraphics[width=0.9\columnwidth]{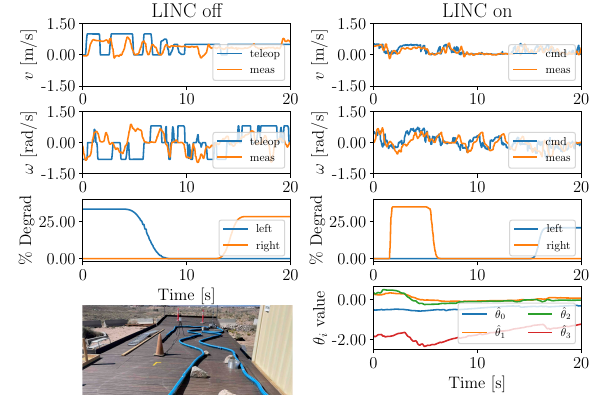}
    \caption{(Chicane Track) The \revisiontwo{left} and \revisiontwo{right} columns display tracking performance with\revisiontwo{out} and with our MAGIC controller, respectively.}
    \label{fig:linc-comparison-chicane}
\end{figure}

The effectiveness of the trajectory tracking on the Chicane Track is shown in Fig.~\ref{fig:linc-comparison-chicane} for both the baseline and the MAGIC controller. In the first two rows, the tracking of the velocities is emphasized. The third row shows the amount of degradation applied to the system. %
The bottom plot shows the estimated adaptation parameters $\hat{\vtheta}$ changing in real time to compensate for the track degradation. ~\Cref{tab:performance_metrics_linc} \revision{shows} the improved performance of the MAGIC controller on this exercise.
Our controller improved \revision{linear velocity} tracking by 42\%, \revision{and} angular velocity \revision{tracking} by 19\%.
Because the track degradation information, $\mathbf{B}_n+\mphi\hat{\boldsymbol{\theta}}$ of \eqref{eq:main_controller}, is estimated by the MAGIC controller in real-time, the \ac{mcts} can successfully generate trajectories that use this corrected control matrix, thereby successfully avoiding collisions with the chicane walls. 
When MAGIC was activated, the robot navigated the chicane track more cautiously, moving approximately 2.5 times slower than with the baseline controller. This reduction in pace was a result of the software stack prioritizing safety.

\subsubsection{Carpet Ramp} \label{sec:carpet_ramp}
The goal of the Carpet Ramp exercise is to restore and maintain control under \emph{track degradation} and \emph{variable slippage}, all whilst mitigating the risk of tipping over.
The ramp \revision{had a} slippery wooden surface with several patches of carpet to alter the ground friction coefficient, causing the tracks to slip asymmetrically. 
Additionally, as the roll angle of the robot increases over the incline, the traction of one of its tracks is reduced as more of the weight falls over one of the tracks due to the high vertical center of gravity.
This imbalance in traction causes the dynamics of the {GVR-Bot} to change significantly, especially affecting the ability to turn. 
This restricted turning behavior is shown in~\Cref{fig:linc-comparison-carpet-ramp}.
The plot in the first column, second row shows that although the operator attempts to turn the {GVR-Bot}, very little control authority in angular velocity is achieved.
By comparison, when operated with our MAGIC controller, the robot adapts to the terrain, tracking safer turn commands that reduce the risk of tipping (second column, second row).
As seen in the bottom row of Fig.~\ref{fig:linc-comparison-carpet-ramp}, the adaptation coefficients, especially the one for the angular velocity, greatly increase to compensate for slip.
This particular exercise demonstrates the greatest improvement in performance relative to the baseline,
as seen in Table~\ref{tab:performance_metrics_linc}.

\begin{figure}[t]
    \centering
\includegraphics[width=0.9\columnwidth]{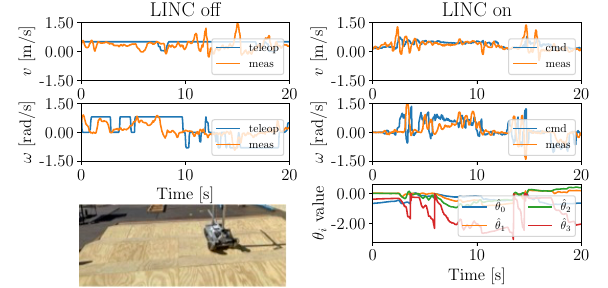}
    \caption{(Carpet Ramp) The \revisiontwo{left} and \revisiontwo{right} columns display tracking performance with\revisiontwo{out} and with our MAGIC controller. }
    \label{fig:linc-comparison-carpet-ramp}
\end{figure}

\subsubsection{Narrowing Corridor}

The aim of the Narrowing Corridor mirrored that of the Chicane Track, \revisiontwo{assesing} the robot's \revisiontwo{ability to consistently navigate} through a tight corridor despite track degradation.
\revisiontwo{For the robot's} performance, see Table~\ref{tab:performance_metrics_linc}.

\subsubsection{Wedge Track}
\revision{Similar to the Carpet Ramp exercise in~\Cref{sec:carpet_ramp}}, the Wedge Track tests the ability of the algorithms to maintain control and slow down under slippage while minimizing \revisiontwo{the risk of} tipping.
When traversing discrete wooden wedges, the robot often loses traction.
Moreover, the robot experiences sudden positive and negative accelerations \revision{due to} the downhill and uphill traversal of a wedge pair.
\begin{figure}[t]
    \centering
    \includegraphics[width=0.9\columnwidth]{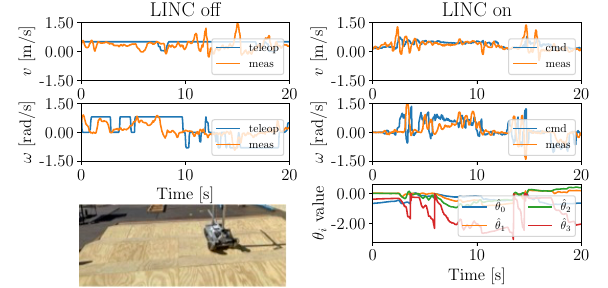}
    \caption{Trajectory tracking during the Wedge Track run. The \revisiontwo{right} and \revisiontwo{left} columns display tracking performance with\revisiontwo{out} and with our controller. No artificial degradation was introduced by the organizers.}
    \label{fig:linc-comparison-wedges}
\end{figure}
As shown in~\Cref{fig:linc-comparison-wedges}, with our controller's assistance and safe slowdowns from the planning, the robot can track velocities more accurately than without our MAGIC controller.
By comparison, without assistance, the robot experiences large velocity spikes as it traverses the wedges. These rapid changes are caused by the \ac{vio}'s Kalman filter \revisiontwo{that integrates} spikes measured by the accelerometer when the robot bounces off the wedges.
The bottom row of~\Cref{fig:linc-comparison-wedges} \revision{shows} the adaptation coefficients quickly adapting for the loss of traction.

\subsubsection{MAGIC and Human-in-the-Loop}
The \ac{linc} program was different from many robotics projects in that the global planner was human-driven rather than autonomous.
This presents a challenge as MAGIC must not degrade the user driving experience~\revisiontwo{but} instead must augment it without the forward-planning and control input smoothness assumptions of typical robotic projects.
The success of MAGIC in augmenting a human driver was twofold\revisiontwo{:} firstly, MAGIC consistently ran fast enough such that there was no perceptible increase between joystick input and robot, and secondly, much of the adaptation to the changing terrain and \revisiontwo{vehicle} were significantly reduced by MAGIC.

\section{Conclusion}\label{sec:conclusions}

We introduced a novel learning-based composite adaptive controller that incorporates visual foundation models for terrain understanding and adaptation.
The basis function of this adaptive controller, which is both state and terrain dependent, is learned offline using our proposed meta-learning algorithm. 
We prove the exponential convergence to a bounded tracking error ball of our adaptive controller and demonstrate that incorporating a pre-trained \ac{vfm} into our learned representation enhances our controller's tracking performance compared to an equivalent controller without the learned representation. 
Our method showed a 53\% \revisiontwo{decrease} in position tracking error when deployed on a tracked vehicle traversing two different sloped terrains.
\revision{We further demonstrated our algorithm on-board a car-like vehicle and showed \revisiontwo{that} the learnt DNN basis function \revisiontwo{captures} the residual dynamics generated by the two different terrains.}

To gain insight into the inner workings of our full method,
we empirically analyzed the features of the pre-trained \ac{vfm} in terms of separability and continuity using support vector classifiers.
This analysis showed positive empirical evidence
that the DINO \ac{vfm} is suitable for fine-grained discrimination
of terrain types in images containing only terrain, and thus suitable for our control method.

We further tested our method under other perturbations, such as artificially induced track degradation.
We demonstrated the effectiveness of our algorithm without terrain-aware basis function in human-in-the-loop driving scenarios. 
Our controller improved tracking of real-time human generated trajectories both in nominal and degraded vehicle states without introducing noticeable system delay as part of the DARPA's \ac{linc} project.

\section*{Acknowledgement}
We thank T. Touma for developing the hardware and the testing environments replicas for the tracked vehicle;
L. Gan and P. Proença for the state estimation; E. Sj\"ogren for her early work on DINO features;
Sandia National Laboratories team (T. Blada, D. Wood, E. Lu) for organizing the \ac{linc} tracks;
J. Burdick for the \ac{linc} project management; A. Rahmani for the JPL Mars Yard support, \ac{linc} project management, and technical advice, and
Y. Yue, B. Riviere, and P. Spieler for stimulating discussions. Some hardware experiments were conducted at Caltech's CAST.

\bibliographystyle{ieeetr}
\bibliography{reference}

\begin{IEEEbiography}[{\includegraphics[width=1in,height=1.25in,clip,keepaspectratio]{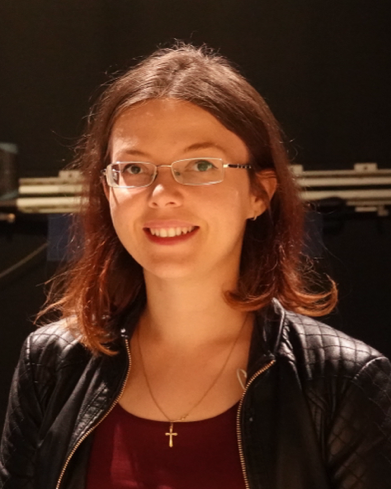}}]{Elena Sorina Lupu} is a PhD candidate in the Aerospace Department at California Institute of Technology and an affiliate of the Keck Institute of Space Sciences. She has master degrees from Caltech in Space Engineering and from the Swiss Federal Institute of Technology, Lausanne (EPFL) in Robotics and Autonomous Systems. Her current research focuses on autonomy, control, and machine learning. 
She has won awards and fellowships including the AI4Science Fellowship and the Anita Borg Women Techmakers.
\end{IEEEbiography}
\vspace{-1.5cm}
\begin{IEEEbiography}[{\includegraphics[width=1in,height=1.25in,clip,keepaspectratio]{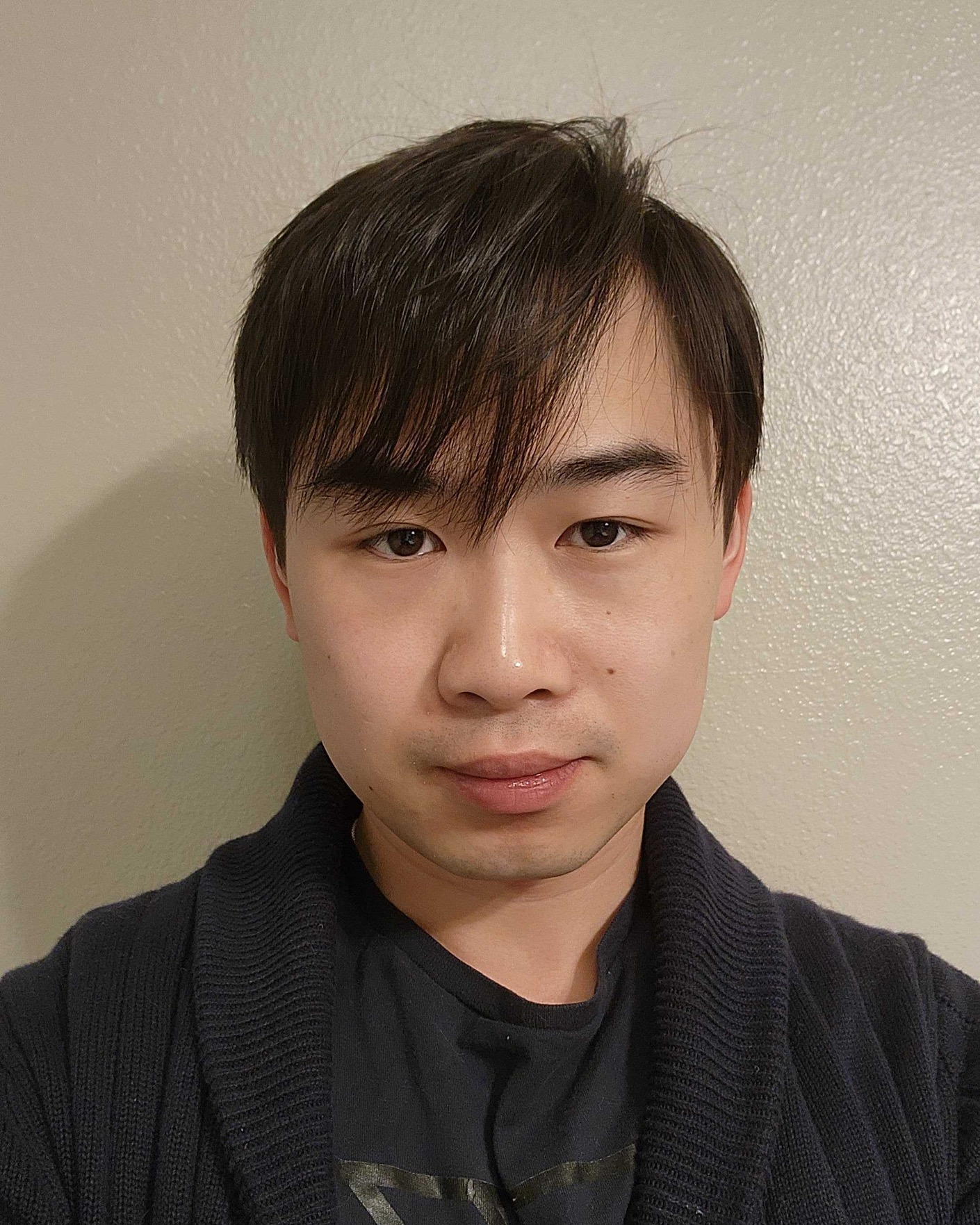}}]{Fengze Xie} is a PhD student in the Department of Computing and Mathematical Sciences at the Caltech. He received a M.S. in Electrical Engineering from the California Institute of Technology in 2021, following the completion of a B.S. in Computer Engineering and a B.S. in Engineering Physics from the University of Illinois Urbana-Champaign in 2020. His research interests mainly focus on the intersection between machine learning and robotics, including control and planning. He was the recipient of 
the Simoudis Discovery Prize. 
\end{IEEEbiography}
\vspace{-1.5cm}
\begin{IEEEbiography}[{\includegraphics[width=1in,height=1.25in,clip,keepaspectratio]{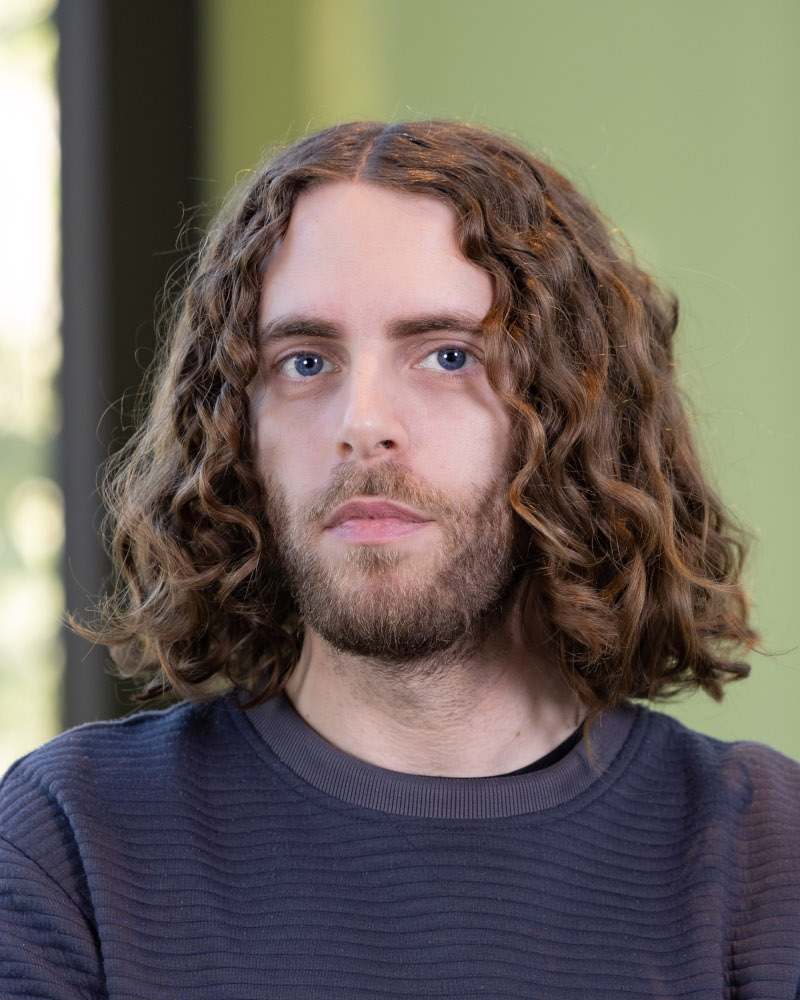}}]{James Alan Preiss}
received the Ph.D.\ degree in Computer Science from the University of Southern California in 2022
and the B.A./B.S.\ degree with concentrations in mathematics and photography from The Evergreen State College in 2010.
He was a Postdoctoral Scholar with the Department of Computing + Mathematical Sciences at Caltech, Pasadena, CA, USA. He is currently an Assistant Professor with the Department of Computer Science at the University of California, Santa Barbara, CA, USA.
His research interests focus on developing rigorous foundations for the application of machine learning to planning and control in robotics.
\end{IEEEbiography}

\vspace{-1.5cm}

\begin{IEEEbiography}[{\includegraphics[width=1in,height=1.25in,clip,keepaspectratio]{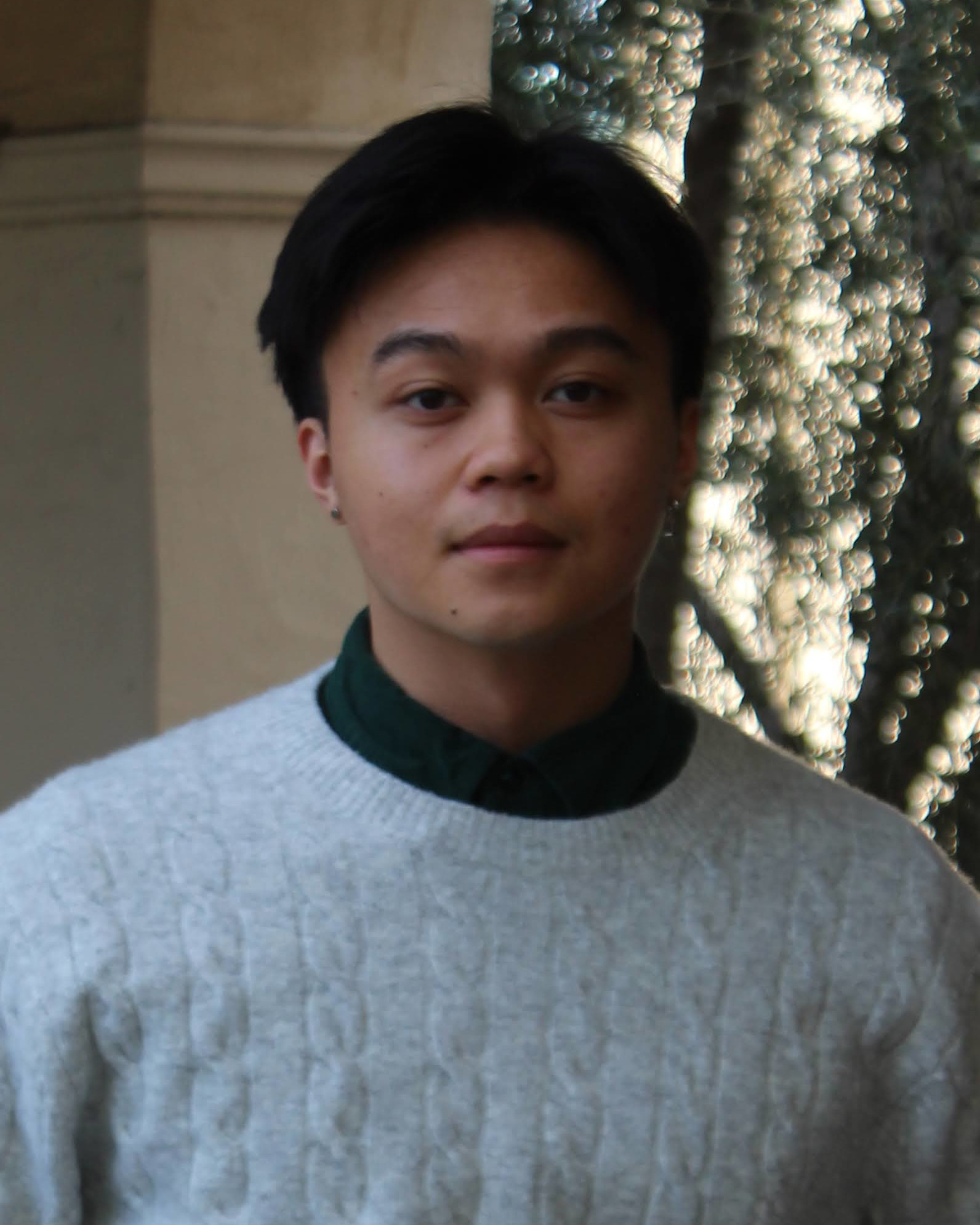}}]{Jedidiah Alindogan} is a Research Engineer at Caltech. He received his B.S. in mechanical engineering from Caltech in 2023. At Caltech, he along with a team of undergraduates submitted to the NASA 2022 BIG Idea challenge and was awarded the Best Visionary Concept Award. Finally, he was granted membership to the Tau Beta Phi engineering honor society for academic achievement as well as a commitment to personal and professional integrity.
\end{IEEEbiography}

\vspace{-1.5cm}

\begin{IEEEbiography}[{\includegraphics[width=1in,height=1.25in,clip,keepaspectratio]{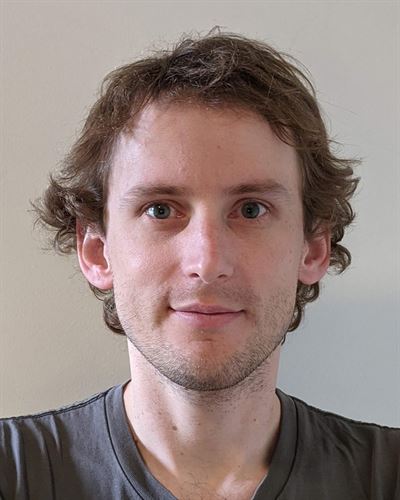}}]{Matthew Anderson}
received a B.Eng (Aeronautical, Hons I) from The University of Sydney in 2010 and a jointly awarded Ph.D from The University of Sydney and Universit\'e libre de Bruxelles in 2018.  
He is currently a Staff Scientist in the Computing + Mathematical Sciences (CMS) department at the California Institute of Technology and supports a wide variety of robotics research with a fleet of UAVs and UGVs. 
His primary areas of research focus on UAVs and extreme environment robotics. 
\end{IEEEbiography}

\vspace{-1.5cm}

\begin{IEEEbiography}[{\includegraphics[width=1in,height=1.25in,clip,keepaspectratio]{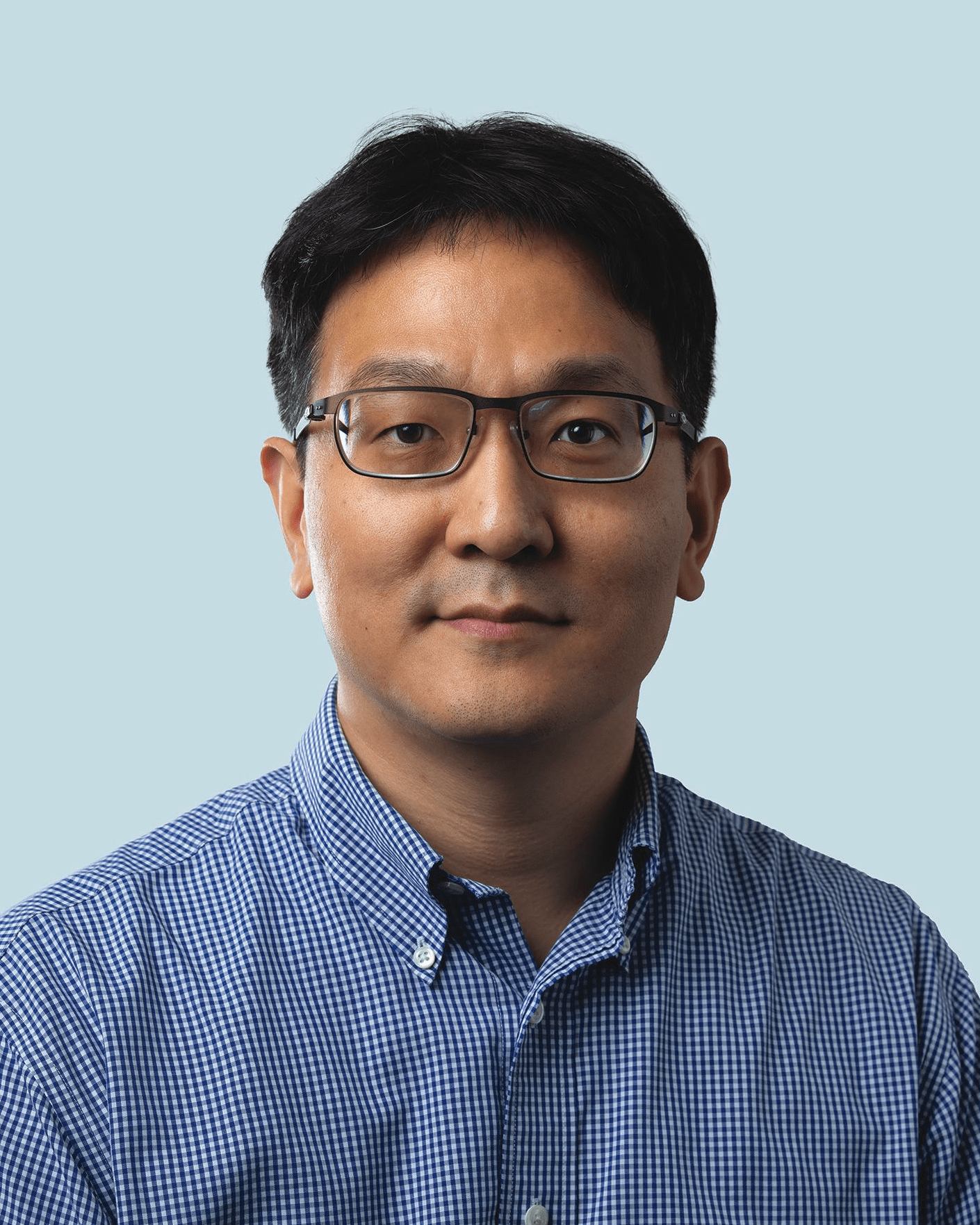}}]{Soon-Jo Chung} (M’06–SM’12) received the S.M. degree in aeronautics and astronautics and the Sc.D. degree in estimation and control from Massachusetts Institute of Technology, Cambridge, MA, USA, in 2002 and 2007, respectively. He is currently Bren Professor of Control and Dynamical Systems, and a Jet Propulsion Laboratory Senior Research Scientist at the California Institute of Technology, Pasadena, CA, USA. %
He was an Associate Editor of  IEEE Transactions on Automatic Control and IEEE Transactions on Robotics, and the Guest Editor of a Special Section on Aerial Swarm Robotics published in IEEE Transactions on Robotics.
\end{IEEEbiography}

\end{document}